
\pdfoutput=1
\newif\ifdraft \draftfalse 
\newif\ifjasa \jasafalse

\jasatrue

\documentclass[11pt]{article}
\usepackage{jcd}
\usepackage[numbers]{natbib}
\usepackage[section]{placeins}
\usebiggeometry
\usepackage{xspace}
\usepackage{xcolor}
\definecolor{darkblue}{rgb}{0,0,.5}
\usepackage[colorlinks=true,allcolors=darkblue]{hyperref}
\usepackage{algorithm}
\usepackage[noend]{algpseudocode}
\usepackage{bbm}
\usepackage{macros}
\usepackage{tikz}
\usetikzlibrary{matrix}
\usepackage{pgfplots}
\usepackage{overpic}
\usepackage{makecell}
\usepgfplotslibrary{fillbetween}
\usetikzlibrary{patterns}
\usepackage{authblk}

\def\centerarc(#1)(#2:#3:#4){%
  ($(#1)+({#4*cos(#2)},{#4*sin(#2)})$) arc (#2:#3:#4); }
\usetikzlibrary{calc}

%
%
%
%
\makeatletter
\long\def\@makecaption#1#2{
  \vskip 0.8ex
  \setbox\@tempboxa\hbox{\small {\bf #1:} #2}
  \parindent 1.5em  
  \dimen0=\hsize
  \advance\dimen0 by -3em
  \ifdim \wd\@tempboxa >\dimen0
  \hbox to \hsize{
    \parindent 0em
    \hfil 
    \parbox{\dimen0}{\def\baselinestretch{0.96}\small
      {\bf #1.} #2
    } 
    \hfil}
  \else \hbox to \hsize{\hfil \box\@tempboxa \hfil}
  \fi
}
\makeatother

\usepackage{soul}

\usepackage{color-edits}
\addauthor{Ryan}{red}
\newcommand{\rynote}[1]{\ifdraft \Ryancomment{#1}\else\ignorespaces\fi}
\addauthor{John}{green}

\addauthor{Abhishek}{blue}

\addauthor{Aaron}{purple}

\newcommand{\decimal}{\mathsf{Dec}}
\newcommand{\loss}{\ell}

\begin{document}

\title{Protection Against Reconstruction and Its Applications in Private
  Federated Learning}

\author[1]{Abhishek Bhowmick}
\author[1,2]{John Duchi}
\author[1]{Julien Freudiger}
\author[1]{Gaurav Kapoor}
\author[1]{Ryan Rogers}
\affil[1]{ML Privacy Team, Apple, Inc.}
\affil[2]{Stanford University}
\date{}

\maketitle 

\begin{abstract}
  In large-scale statistical learning, data collection and model fitting are
  moving increasingly toward peripheral devices---phones, watches, fitness
  trackers---away from centralized data collection. Concomitant with this
  rise in decentralized data are increasing challenges of maintaining
  privacy while allowing enough information to fit accurate, useful
  statistical models. This motivates local notions of privacy---most
  significantly, local differential privacy, which provides strong
  protections against sensitive data disclosures---where data is obfuscated
  before a statistician or learner can even observe it, providing strong
  protections to individuals' data. Yet local privacy as traditionally
  employed may prove too stringent for practical use, especially in modern
  high-dimensional statistical and machine learning problems.  Consequently,
  we revisit the types of disclosures and adversaries against which we
  provide protections, considering adversaries with limited prior
  information and ensuring that with high probability, ensuring they cannot
  reconstruct an individual's data within useful tolerances. By
  reconceptualizing these protections, we allow more useful data
  release---large privacy parameters in local differential privacy---and we
  design new (minimax) optimal locally differentially private mechanisms for
  statistical learning problems for \emph{all} privacy levels.  We thus
  present practicable approaches to large-scale locally private model
  training that were previously impossible,
  showing theoretically and empirically that we can fit large-scale image
  classification and language models with little degradation in utility.
\end{abstract}

\section{Introduction}

New, more powerful computational hardware and access to substantial amounts
of data has made fitting accurate models for image classification, text
translation, physical particle prediction, astronomical observation, and
other predictive tasks possible with previously infeasible
accuracy~\cite{BaldiSaWh14,Adelman-McCarthyEtAl08, LeCunBeHi15}.  In many
modern applications, data comes from measurements on small-scale devices
with limited computation and communication ability---remote sensors,
phones, fitness monitors---making fitting large scale predictive models
both computationally and statistically challenging.  Moreover, as more modes
of data collection and computing move to peripherals---watches,
power-metering, internet-enabled home devices, even lightbulbs---issues of
privacy become ever more salient.

Such large-scale data collection motivates substantial work.  Stochastic
gradient methods are now the \emph{de facto} approach to large-scale
model-fitting~\cite{Zhang04, BottouBo07, NemirovskiJuLaSh09, DuchiHaSi11},
and recent work of \citet{McMahanMoRaHaAr17} describes systems (which they
term \emph{federated learning}) for aggregating multiple stochastic
model-updates from distributed mobile devices. Yet even if only updates to a
model are transmitted, leaving all user or participant data on user-owned
devices, it is easy to compromise the privacy of users~\citep{FJR15,MSCS18}.
To see why this
issue arises, consider any generalized linear model based on a data vector
$x$, target $y$, and with loss of the form $\loss(\theta; x, y) =
\phi(\<\theta, x\>, y)$. Then $\nabla_\theta \loss(\theta;
x, y) = c\cdot  x$ for some $c\in \R$,
a scalar multiple of the user's clear data $x$---a clear
compromise of privacy.  In this paper, we describe an approach to fitting
such large-scale models both privately and practically.

A natural approach to addressing the risk of information disclosure is to
use differential privacy~\cite{DworkMcNiSm06}, in which one defines a mechanism
$M$, a randomized mapping from a sample $\bbx$ of data points to some space
$\mc{Z}$, which is \emph{$\diffp$-differentially private} if
\begin{equation}
  \label{eqn:central-diffp}
  \frac{\P(M(\bbx) \in S)}{\P(M(\bbx') \in S)}
  \le e^\diffp
\end{equation}
for all samples $\bbx$ and $\bbx'$ differing in at most one entry. Because
of its strength and protection properties, differential privacy and its
variants are the standard privacy definition in data
analysis and machine learning~\cite{ChaudhuriMoSa11,DworkRo14,
  DajaniLaSiKiReMaGaDaGrKaKiLeScSeViAb17}. Nonetheless, implementing such an
algorithm presumes a level of trust between users and a centralized data
analyst, which may be undesirable or even untenable, as the data analyst has
essentially unfettered access to a user's data.  Another approach to
protecting individual updates is to use secure multiparty computation (SMC),
sometimes in conjunction with differential privacy
protections~\cite{SMCFedLearn17}.  Traditional approaches to SMC require
substantial communication and computation, making them untenable for
large-scale data collection schemes, and \citet{SMCFedLearn17} address a
number of these, though individual user communication and computation still
increases with the number of users submitting updates and requires multiple
rounds of communication, which may be unrealistic when estimating models
from peripheral devices.

An alternative to these approaches is to use \emph{locally private}
algorithms~\cite{Warner65, EvfimievskiGeSr03, DuchiJoWa18}, in which an
individual keeps his or her data private even from the data collector. Such
scenarios are natural in distributed (or federated) learning scenarios,
where individuals provide data from their devices~\cite{FedLearn17,
  ApplePrivacy17} but wish to maintain privacy. In our learning context,
where a user has data $x \in \mc{X}$ that he or she wishes to remain
private, a randomized mechanism $M : \mc{X} \to \mc{Z}$ is
\emph{$\diffp$-local differentially private} if for all $x, x' \in \mc{X}$
and sets $S \subset \mc{Z}$,
\begin{equation}
  \label{eqn:diffp}
  \frac{\P(M(x) \in S)}{\P(M(x') \in S)}
  \le e^\diffp.
\end{equation}
Roughly, a mechanism satisfying inequality~\eqref{eqn:diffp} guarantees that
even if an adversary knows that the initial data is one of $x$ or $x'$, the
adversary cannot distinguish them given an outcome $Z$ (the probability of
error must be at least $1 / (1 + e^\diffp)$)~\cite{WassermanZh10}.  Taking
as motivation this testing definition, the ``typical'' recommendation for
the parameter $\diffp$ is to take $\diffp$ as a small
constant~\cite{WassermanZh10, DworkMcNiSm06, DworkRo14}.

Local privacy protections provide numerous benefits: they allow easier
compliance with regulatory strictures, reduce risks (such as hacking)
associated with maintaining private data, and allow more transparent
protection of user privacy, because unprotected private data never leaves an
individual's device. Yet substantial work in the statistics, machine
learning, and computer science communities has shown that local differential
privacy and its relaxations cause nontrivial challenges for learning
systems. Indeed, \citet*{DuchiJoWa13_focs,DuchiJoWa18} and \citet{DuchiRo19}
show that in a minimax (worst case population distribution) sense, learning
with local differential privacy \emph{must} suffer a degradation in sample
complexity that scales as $d / \min\{\diffp, \diffp^2\}$, where $d$ is the
dimension of the problem; taking $\diffp$ as a small constant here suggests
that estimation in problems of even moderate dimension will be
challenging. \citet{DuchiRu18b} develop a complementary approach, arguing
that a worst-case analysis is too conservative and may not accurately
reflect the difficulty of problem instances one actually encounters, so that
an instance-specific theory of optimality is necessary. In spite of this
instance-specific optimality theory for locally private procedures---that
is, fundamental limits on learning that apply to the \emph{particular
  problem at hand}---\citeauthor{DuchiRu18b}'s results suggest that local
notions of privacy as currently conceptualized restrict some of the
deployments of learning systems.

We consider an alternative
conceptualization of privacy protections and the concomitant guarantees from
differential privacy and the likelihood ratio bound~\eqref{eqn:diffp}. The
testing interpretation of differential privacy suggests that when $\diffp
\gg 1$, the definition~\eqref{eqn:diffp} is almost vacuous.  We argue that, at
least in large-scale learning scenarios, this testing interpretation is
unrealistic, and allowing mechanisms with $\diffp \gg 1$ may provide
satisfactory privacy protections, especially when there are multiple
layers of protection. Rather than providing protections against
arbitrary inferences, we wish to provide protection against
accurate \emph{reconstruction} of an individual's data $x$. In the large
scale learning scenarios we consider, an adversary given a random
observation $x$ likely has little prior information about $x$, so that
protecting only against reconstructing (functions of) $x$ under some assumptions on the adversary's prior knowledge allows
substantially improved model-fitting.

Our formal setting is as follows.
For a parameter space $\Theta \subset \R^d$ and loss $\loss : \Theta
\times \mc{X} \to \R_+$, we wish to solve the risk minimization problem
\begin{equation}
  \label{eqn:pop-risk}
  \minimize_{\theta \in \Theta} ~ \risk(\theta) \defeq
  \E[\loss(\theta, X)].
\end{equation}
The standard approach~\cite{HastieTiFr09} to such problems is to construct
the empirical risk minimizer $\what{\theta}_n = \argmin_{\theta} \frac{1}{n}
\sum_{i = 1}^n \loss(\theta, X_i)$ for data $X_i \simiid P$, the
distribution defining the expectation~\eqref{eqn:pop-risk}. In this paper,
however, we consider the stochastic minimization
problem~\eqref{eqn:pop-risk} while providing both local privacy to
individual data $X_i$ and---to maintain the satisfying guarantees of
centralized differential privacy~\eqref{eqn:central-diffp}---stronger
guarantees on the global privacy of the output $\what{\theta}_n$ of our
procedure.  With this as motivation, we describe our contributions at a high
level. As above, we argue that large local privacy~\eqref{eqn:diffp}
parameters, $\diffp \gg 1$, still provide reasonable privacy protections. We
develop new mechanisms and privacy protecting schemes that more carefully
reflect the statistical aspects of problem~\eqref{eqn:pop-risk}; we
demonstrate that these mechanisms are minimax optimal for \emph{all} ranges
of $\diffp \in [0, d]$, a broader range than all prior work (where $d$ is
problem dimension).  A substantial portion of this work is devoted to
providing \emph{practical} procedures while providing meaningful local
privacy guarantees, which currently do not exist. Consequently, we provide
extensive empirical results that demonstrate the tradeoffs between private
federated (distributed) learning scenarios, showing that it is possible to
achieve performance comparable to federated learning procedures without
privacy safeguards.


\subsection{Our approach and results}

We propose and investigate a two-pronged approach to model fitting under
local privacy. Motivated by the difficulties associated with local
differential privacy we discuss in the immediately subsequent section, we
reconsider the threat models (or types of disclosure) in locally private
learning. Instead of considering an adversary with access to all data, we
consider ``curious'' onlookers, who wish to decode individuals' data but
have little prior information on them.  Formalizing this (as we discuss in
Section~\ref{sec:protections})
allows us to consider substantially relaxed values for the
privacy parameter $\diffp$, sometimes even scaling with the dimension of the
problem, while still providing protection.
While this brings us away from the standard
guarantees of differential privacy, we can still provide privacy guarantees
for the type of onlookers we consider.

This privacy model is natural in distributed model-fitting (federated
learning) scenarios~\cite{FedLearn17, ApplePrivacy17}. By providing
protections against curious onlookers, a company can protect its users
against reconstruction of their data by, for example, internal employees; by
encapsulating this more relaxed local privacy model within a broader central
differential privacy layer, we can still provide satisfactory privacy
guarantees to users, protecting them against strong external adversaries as
well. 

We make several contributions to achieve these goals. In
Section~\ref{sec:protections}, we describe a model for curious adversaries,
with appropriate privacy guarantees, and demonstrate how (for these curious
adversaries) it is still nearly impossible to accurately reconstruct
individuals' data. We then detail a prototypical private federated learning
system in Section~\ref{sec:pfl}.  In this direction, we develop new (minimax
optimal) privacy mechanisms for privatization of high-dimensional vectors in
unit balls (Section~\ref{sec:mechanisms}).  These mechanisms yield
substantial improvements over the schemes
\citet{DuchiJoWa18,DuchiJoWa13_focs} develop, which are optimal only in the
case $\diffp \le 1$, providing order of magnitude improvements over
classical noise addition schemes, and we provide a unifying theorem showing
the asymptotic behavior of a stochastic-gradient-based private learning
scheme in Section~\ref{sec:asymptotics-local}. As a consequence of this
result, we conclude that we have (minimax) optimal procedures for many
statistical learning problems for all privacy parameters $\diffp \le d$, the
dimension of the problem, rather than just $\diffp \in [0, 1]$. We conclude
our development in Section~\ref{sec:results} with several large-scale
distributed model-fitting problems, showing how the tradeoffs we make allow
for practical procedures.  Our approaches allow substantially improved
model-fitting and prediction schemes; in situations where local differential
privacy with smaller privacy parameter fails to learn a model at all, we can
achieve models with performance near non-private schemes.

\subsection{Why local privacy makes model fitting challenging}
\label{sec:local-is-hard}

To motivate our approaches, we discuss why local privacy causes
some difficulties in a classical learning problem.  \citet{DuchiRu18b} help to elucidate the precise
reasons for the difficulty of estimation under $\diffp$-local differential
privacy, and we can summarize them heuristically here, focusing on the
machine learning applications of interest.  To do so, we begin with a brief
detour through classical statistical learning and the usual convergence
guarantees that are (asymptotically) possible~\cite{VanDerVaart98}.

Consider the population risk minimization problem~\eqref{eqn:pop-risk}, and
let $\theta\opt = \argmin_{\theta} \risk(\theta)$ denote its minimizer. We
perform a heuristic Taylor expansion to understand the difference between
$\what{\theta}_n$ and $\theta\opt$.  Indeed, we have
\begin{align*}
  0 = \frac{1}{n} \sum_{i=1}^n \nabla \loss(\what{\theta}_n, X_i)
  & = \frac{1}{n} \sum_{i = 1}^n \nabla \loss(\theta\opt, X_i)
  + \frac{1}{n} \sum_{i = 1}^n (\nabla^2 \loss(\theta\opt, X_i)
  + \mbox{error}_i) (\what{\theta}_n - \theta\opt) \\
  & = \frac{1}{n} \sum_{i = 1}^n \nabla \loss(\theta\opt, X_i)
  + (\nabla^2 \risk(\theta\opt) + o_P(1)) (\what{\theta}_n - \theta\opt),
\end{align*}
(for $\mbox{error}_i$ an error term in the Taylor expansion of $\loss$),
which---when carried out rigorously---implies
\begin{equation}
  \label{eqn:influence-function-form}
  \what{\theta}_n - \theta\opt
  = \frac{1}{n} \sum_{i = 1}^n 
  \underbrace{-\nabla^2 \risk(\theta\opt)^{-1} \nabla \loss(\theta\opt; X_i)}_{
    \eqdef \psi(X_i)}
  + o_P(1 / \sqrt{n}).
\end{equation}
The \emph{influence
  function} $\psi$~\cite{VanDerVaart98} of the parameter $\theta$
measures the effect that changing a single observation $X_i$ has
on the resulting estimator $\what{\theta}_n$.

All (regular) statistically efficient estimators asymptotically have the
form~\eqref{eqn:influence-function-form}~\cite[Ch.~8.9]{VanDerVaart98},
and typically
a problem is ``easy'' when the variance of the function $\psi(X_i)$ is
small---thus, individual observations do not change the estimator
substantially.  In the case of \emph{local} differential privacy, however,
as \citet{DuchiRu18b} demonstrate, (optimal) locally private estimators
typically have the form
\begin{equation}
  \label{eqn:pita-form}
  \what{\theta}_n - \theta\opt
  = \frac{1}{n} \sum_{i = 1}^n \left[\psi(X_i) + W_i\right]
\end{equation}
where $W_i$ is a noise term that must be taken so that $\psi(x)$ and
$\psi(x')$ are indistinguishable for \emph{all} $x, x'$. Essentially, a
local differentially private procedure cannot leave $\psi(x)$ small even
when it is typically small (i.e.\ the problem is easy) because it
\emph{could} be large for some value $x$. In locally private procedures,
this means that differentially private tools for typically ``insensitive''
quantities (cf.~\cite{DworkRo14}) cannot apply, as an individual term
$\psi(X_i)$ in the sum~\eqref{eqn:pita-form} is (obviously) sensitive to
arbitrary changes in $X_i$. An alternative perspective comes via
information-theoretic ideas~\cite{DuchiRo19}: differential privacy
constraints are essentially equivalent to limiting the bits of information
it is possible to communicate about individual data $X_i$, where privacy
level $\diffp$ corresponds to a communication limit of $\diffp$ bits, so
that we expect to lose in efficiency over non-private or
non-communication-limited estimators at a rate roughly of $d / \diffp$ (see
\citet{DuchiRo19} for a formalism). The consequences of this are striking,
and extend even to weakenings of local differential privacy~\cite{DuchiRo19,
  DuchiRu18b}: adaptivity to easy problems is essentially impossible
for standard $\diffp$-locally-differentially private procedures, at least
when $\diffp$ is small, and there must be substantial dimension-dependent
penalties in the error $ \what{\theta}_n - \theta\opt$.  Thus, to enable
high-quality estimates for quantities of interest in machine learning tasks,
we explore locally differentially private settings with larger privacy
parameter $\diffp$.



\section{Privacy protections}
\label{sec:protections}

In developing any statistical machine learning system providing privacy
protections, it is important to consider the types of attacks that we wish
to protect against. In distributed model fitting and federated learning
scenarios, we consider two potential attackers: the first is a curious
onlooker who can observe all updates to a model and communication from
individual devices, and the second is from a powerful external adversary who
can observe the final (shared) model or other information about individuals
who may participate in data collection and model-fitting. For the latter
adversary, essentially the only effective protection is to use a small privacy parameter in a localized or
centralized differentially private scheme~\cite{McMahanMoRaHaAr17,
  DworkMcNiSm06, Mironov17}. For the curious onlookers, however---for
example, internal employees of a company fitting large-scale models---we
argue that protecting against reconstruction is reasonable.


\subsection{Differential privacy and its relaxations}
\label{sec:diffp-definitions}

We begin by discussing the various definitions of privacy we consider,
covering both centralized and local definitions of privacy.  We begin with
the standard centralized definitions, which extend the basic differential
privacy definition~\eqref{eqn:central-diffp}, and allow a trusted curator to
view the entire dataset. We have
\begin{definition}[Differential privacy,
    Dwork et al.~\cite{DworkMcNiSm06,DworkKeMcMiNa06}]
  \label{definition:dp}
  A randomized
  mechanism $M : \mc{X}^n \to \mc{Z}$ is \emph{$(\diffp, \delta)$
    differentially private} if for all samples $\bx, \bx' \in \mc{X}^n$
  differing in at most a single example, for all measurable sets $S \subset
  \mc{Z}$
  \begin{equation*}
    \P(M(\bx) \in S) \le e^\diffp \P(M(\bx') \in S) + \delta.
  \end{equation*}
\end{definition}

Other variants of privacy require that likelihood ratios are near one on
average, and include concentrated and R\'{e}nyi differential
privacy~\cite{DworkRo16,BunSt16,Mironov17}. Recall that the R\'{e}nyi
$\renparam$-divergence between distributions $P$ and $Q$ is
\begin{equation*}
  \drenyi{P}{Q} \defeq \frac{1}{\renparam - 1}
  \log \int \left(\frac{dP}{dQ}\right)^\renparam dQ,
\end{equation*}
where $\drenyi{P}{Q} = \dkl{P}{Q}$ when $\renparam = 1$ by taking a limit.
We abuse notation, and for random variables $X$ and $Y$ distributed
as $P$ and $Q$, respectively, write $\drenyi{X}{Y} \defeq \drenyi{P}{Q}$,
which allows us to make the following
\begin{definition}[R\'{e}nyi-differential privacy~\cite{Mironov17}]
  \label{definition:rdp}
  A randomized mechanism $M : \mc{X}^n \to \mc{Z}$ is
  \emph{$(\diffp, \renparam)$-R\'{e}nyi differentially private}
  if for all samples $\bx, \bx' \in \mc{X}^n$ differing in
  at most a single example,
  \begin{equation*}
    \drenyi{M(\bx)}{M(\bx')} \le \diffp.
  \end{equation*}
\end{definition}
\noindent
\citet{Mironov17} shows that if $M$ is
$\diffp$-differentially private, then for any
$\renparam \ge 1$, it is
$(\min\{\diffp, 2\renparam \diffp^2\}, \renparam)$-R\'{e}nyi private,
and conversely, if $M$ is $(\diffp, \renparam)$-R\'{e}nyi private,
it also satisfies
$(\diffp + \frac{\log(1 / \delta)}{\renparam - 1}, \delta)$-differential
privacy for all $\delta \in [0, 1]$.

We often use local notions of these definitions, as we consider protections
for individual data providers without trusted curation; in this case, the
mechanism $M$ applied to an individual data point $x$ is \emph{locally}
private if the Definition~\ref{definition:dp} or~\ref{definition:rdp} holds
with $M(\bx)$ and $M(\bx')$ replaced by $M(x)$ and $M(x')$, where $x, x' \in
\mc{X}$ are arbitrary.

\subsection{Reconstruction breaches}
\newcommand{\cF}{\mathcal{F}}
\newcommand{\reconloss}{L_{\rm rec}}
\newcommand{\lreconstruct}{\reconloss}
\newcommand{\ones}{\mathbf{1}}
\newcommand{\zeros}{\mathbf{0}}

Abstractly, we work in a setting in which a user or study participant has
data $X$ he or she wishes to keep private. Via some process, this
data is transformed into a vector $W$---which may simply be an identity
transformation, but $W$ may also be a gradient of the loss
$\loss$ on the datum $X$ or other derived statistic.  We then
privatize $W$ via a randomized mapping $W \to Z$. An onlooker
wishes to estimate some function $f(X)$ on the private data
$X$.  In most
scenarios with a curious onlooker, if $X$ or $f(X)$ is suitably
high-dimensional, the onlooker has limited prior information about $X$, so
that relatively little obfuscation is required in the mapping from $W \to Z$.

As a motivating example, consider an image processing scenario. A user has
an image $X$, where $W \in \R^d$ are wavelet coefficients of $X$ (in some
prespecified wavelet basis)~\cite{Mallat08}; without loss of generality, we
assume we normalize $W$ to have energy $\ltwo{W} = 1$. Let $f(X)$ be a
low-dimensional version of $X$ (say, based on the first 1/8 of wavelet
coefficients); then (at least intuitively)
taking $Z$ to be a noisy version of $W$ such that $\ltwo{Z - W} \gtrsim
1$---that is, noise on the scale of the energy $\ltwo{W}$---should be
sufficient to guarantee that the observer is unlikely to be able to
reconstruct $f(X)$ to any reasonable accuracy.  Moreover, a simple
modification of the techniques of \citet{HardtTa10} shows that for $W \sim
\uniform(\sphere^{d-1})$, any $\diffp$-differentially private quantity $Z$
for $W$ satisfies $\E[\ltwo{Z - W}] \gtrsim 1$ whenever $\diffp \le d - \log
2$. That is, we might expect that in the definition~\eqref{eqn:diffp}
of local differential privacy, even very large $\diffp$ provide
protections against reconstruction.


With this in mind, let us formalize a reconstruction breach in our
scenario. Here, the onlooker (or adversary) has a prior $\prior$ on $X \in
\mc{X}$, and there is a known (but randomized) mechanism $M : \mc{W} \to
\mc{Z}$, $W \mapsto Z = M(W)$. We then have the following definition.


\begin{definition}[Reconstruction breach]
  \label{defn:recon-breach}
  Let $\prior$ be a prior on $\mc{X}$, and let $X, W, Z$ be generated with
  Markov structure $X \to W \to Z = M(W)$ for a mechanism $M$.
  Let $f : \mc{X} \to \R^k$ be the target of reconstruction and $\reconloss
  : \R^k \times \R^k \to \R_+$ be a loss function.  Then an estimator
  $\what{v} : \mc{Z} \to \R^k$ provides an $(\alpha,
  p, f)$-\emph{reconstruction breach} for the loss $\reconloss$ if there exists
  $z$ such that
  \begin{equation}
    \P\left(\reconloss(f(X), \what{v}(z)) \le \alpha \mid M(W) = z \right)
    >  p.
    \label{eq:recon}
  \end{equation}
  If for every estimator $\what{v} : \mc{Z} \to \R^k$,
  \begin{equation*}
    \sup_{z \in \mc{Z}}
    \P\left( \reconloss(f(X), \what{v}(z)) \le \alpha \mid M(W)
    = z \right)\leq p,
  \end{equation*}
  then the mechanism $M$ is $(\alpha, p, f)$-\emph{protected against
    reconstruction} for the loss $\reconloss$.
\end{definition}
\noindent
Key to Definition~\ref{defn:recon-breach} is that it applies uniformly
across all possible observations $z$ of the mechanism $M$---there are no
rare breaches of privacy.\footnote{We ignore measurability
  issues; in our setting, all random variables are mutually absolutely
  continuous and follow regular conditional probability
  distributions, so the conditioning on $z$ in Def.~\ref{defn:recon-breach} has
  no issues~\cite{Kallenberg97}.} This requires somewhat stringent
conditions on mechanisms and also disallows relaxed privacy definitions
beyond differential privacy.

\subsection{Protecting against reconstruction}
\label{sec:reconstruction-protection-lemmas}

We can now develop guarantees against reconstruction.
The simple insight is that if an adversary has a diffuse prior
on the data of interest $f(X)$---is a priori unlikely
to be able to accurately reconstruct $f(X)$ given no information---the
adversary remains unlikely to be able to reconstruct $f(X)$ given
differentially private views of $X$ even for very large $\diffp$. Key to
this is the question of how ``diffuse'' we might expect an
adversary's prior to be.  We detail a few examples here, providing
what we call best-practices recommendations for limiting information, and
giving some strong heuristic calculations for reasonable prior information.

We begin with the simple claim that prior beliefs change little under
differential privacy, which follows immediately from Bayes' rule.
\begin{lemma}
  \label{lemma:bayesian-updating}
  Let the mechanism $M$ be $\diffp$-differentially private
  and let $V = f(X)$ for a measurable function $f$. Then for
  any $\prior \in \cP$ on $\mc{X}$ and measurable sets $A, A' \subset
  f(\mc{X})$,
  the posterior distribution $\prior_V(\cdot \mid z)$ (for
  $z = M(X)$) satisfies
  \begin{equation*}
    \frac{\prior_V(A \mid z, z)}{\prior_V(A' \mid z, z)}
    \le   e^{\diffp} \frac{\prior_V(A)}{\prior_V(A')}.
  \end{equation*}
\end{lemma}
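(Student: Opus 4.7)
The plan is to reduce the posterior-ratio bound to the pointwise likelihood-ratio bound guaranteed by $\diffp$-differential privacy, via Bayes' rule and a two-stage averaging argument. Assuming regular conditional densities exist (which the footnote permits us to do), write $p(z \mid x)$ for the conditional density of $Z = M(X)$ given $X = x$, and let $p(z \mid v)$ denote the conditional density of $Z$ given $V = f(X) = v$, so that $p(z \mid v) = \E[p(z \mid X) \mid V = v]$. By Bayes' rule, the posterior for $V$ given $Z = z$ satisfies
\begin{equation*}
  \prior_V(A \mid z) \;=\; \frac{\int_A p(z \mid v)\,d\prior_V(v)}{\int p(z \mid v)\,d\prior_V(v)},
\end{equation*}
so that the normalizing constant cancels when we form the ratio $\prior_V(A \mid z)/\prior_V(A' \mid z)$.

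Next I would observe that $\diffp$-differential privacy of $M$ gives the pointwise bound $p(z \mid x) \le e^{\diffp} p(z \mid x')$ for all $x, x' \in \mc{X}$ (on a common $z$-support, which holds by the likelihood-ratio form of Definition~\ref{definition:dp}). Taking conditional expectations given $V = v$ and given $V = v'$, respectively, and using that the inequality holds uniformly in $x,x'$, this descends to
\begin{equation*}
  p(z \mid v) \;\le\; e^{\diffp}\, p(z \mid v') \qquad \text{for all } v, v' \in f(\mc{X}).
\end{equation*}
This is the key step: the coarsening from $X$ to $V = f(X)$ does not degrade the privacy parameter, because $p(z\mid v)$ is just a mixture of the $p(z\mid x)$'s with $f(x)=v$.

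The remaining step is the averaging. Fix $v' \in A'$; averaging the pointwise inequality over $v \in A$ against $\prior_V$ (and dividing by $\prior_V(A)$) gives
\begin{equation*}
  \frac{1}{\prior_V(A)} \int_A p(z \mid v)\,d\prior_V(v) \;\le\; e^{\diffp}\, p(z \mid v').
\end{equation*}
Now average the right-hand side over $v' \in A'$ against $\prior_V$ (and divide by $\prior_V(A')$); the left-hand side is unchanged, and rearranging yields exactly
\begin{equation*}
  \frac{\int_A p(z\mid v)\,d\prior_V(v)}{\int_{A'} p(z\mid v')\,d\prior_V(v')} \;\le\; e^{\diffp}\,\frac{\prior_V(A)}{\prior_V(A')},
\end{equation*}
which is the claim after cancelling the normalizers.

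The only subtlety I expect to have to be careful about is the factor of $e^{\diffp}$ rather than $e^{2\diffp}$: a naive ``numerator up, denominator down'' bound that replaces $p(z\mid v)$ by $e^{\diffp} p(z\mid v_0)$ in the numerator and by $e^{-\diffp} p(z\mid v_0)$ in the denominator would cost a factor of $e^{2\diffp}$. Averaging against a \emph{common} anchor $v'$ and then averaging over $A'$ (as above) is what saves the extra factor. Beyond this, the argument is just Bayes' rule plus Fubini, and measurability is handled by the regular conditional distribution assumption noted in the footnote.
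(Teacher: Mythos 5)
Your proof is correct and is exactly the argument the paper intends: the paper states the lemma ``follows immediately from Bayes' rule,'' and your write-up simply fleshes out that one-liner by marginalizing the pointwise bound $p(z\mid x)\le e^{\diffp}p(z\mid x')$ over $x$ with $f(x)=v$ and then cancelling the Bayes normalizer in the ratio. The averaging against a common anchor correctly yields the stated factor $e^{\diffp}$ rather than $e^{2\diffp}$, so no gap remains.
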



Based on Lemma~\ref{lemma:bayesian-updating}, we can show the following
result, which guarantees that difficulty of reconstruction of a signal is
preserved under private mappings.

\begin{lemma}
  \label{lemma:preserve-difficulty}
  Assume that the prior $\prior_0$ on $X$ is such that
  for a tolerance $\alpha$, probability $p(\alpha)$, target
  function $f$, and loss $\reconloss$, we have
  \begin{equation*}
    \P_{\prior_0}(\reconloss(f(X), v_0) \le \alpha) \le p(\alpha)
    ~~ \mbox{for~all~fixed}~ v_0.
  \end{equation*}
  If $M$ is $\diffp$-differentially private,
  then it is $(\alpha, e^{\diffp} \cdot
  p(\alpha), f)$-protected against reconstruction for $\reconloss$.
\end{lemma}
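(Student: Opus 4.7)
The plan is to reduce the reconstruction guarantee to a direct application of Lemma~\ref{lemma:bayesian-updating}, by observing that fixing the observation $z$ turns the adversary's guess $\what{v}(z)$ into a deterministic point against which we can invoke the prior's diffuseness assumption.

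Concretely, I would fix an arbitrary estimator $\what{v}:\mc{Z}\to\R^k$ and an arbitrary $z\in\mc{Z}$, and define the bad set
\[
B_z \defeq \{v \in f(\mc{X}) : \reconloss(v, \what{v}(z)) \le \alpha\}.
\]
The goal reduces to showing $\P(f(X) \in B_z \mid Z = z) \le e^\diffp p(\alpha)$, after which taking the supremum over $z$ and $\what{v}$ yields the claim. Because $\what{v}(z)$ is a fixed point of $\R^k$ once we condition on $z$, the hypothesis on the prior applied with $v_0 = \what{v}(z)$ immediately gives
\[
\prior_V(B_z) = \P_{\prior_0}\big(\reconloss(f(X), \what{v}(z)) \le \alpha\big) \le p(\alpha),
\]
where $V = f(X)$ under $\prior_0$.

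Next, I would apply Lemma~\ref{lemma:bayesian-updating} with $A = B_z$ and $A' = f(\mc{X})$. Since $\prior_V(f(\mc{X}))=1$ and likewise $\prior_V(f(\mc{X}) \mid z) = 1$, the inequality collapses to
\[
\prior_V(B_z \mid z) \le e^\diffp \, \prior_V(B_z) \le e^\diffp \, p(\alpha),
\]
which is precisely the required bound on the conditional reconstruction probability. One small caveat is that Lemma~\ref{lemma:bayesian-updating} is stated for the mechanism $M$ applied directly to $X$, whereas Definition~\ref{defn:recon-breach} has the Markov structure $X \to W \to Z=M(W)$; this is handled by post-processing, since the composition $X \mapsto M(W(X))$ remains $\diffp$-differentially private whenever $M$ is.

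I do not anticipate a serious obstacle: the hypothesis on $\prior_0$ is tailored so that the adversary's data-dependent guess $\what{v}(z)$ can be plugged into the role of $v_0$, and Lemma~\ref{lemma:bayesian-updating} provides the posterior-to-prior multiplicative bound in exactly the form needed. The only things requiring care are (i) the post-processing remark above, (ii) verifying that $B_z$ is measurable (which follows from regularity of $\reconloss$ and $f$, and is anyway covered by the measurability footnote in Definition~\ref{defn:recon-breach}), and (iii) keeping the quantifiers straight so that the supremum over $z$ is inside the privacy bound rather than inside the prior.
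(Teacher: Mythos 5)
Your proposal is correct and follows essentially the same route as the paper: both reduce to Lemma~\ref{lemma:bayesian-updating} to bound the posterior probability of the bad event by $e^\diffp$ times its prior probability, and then invoke the hypothesis on $\prior_0$ with $v_0 = \what{v}(z)$, which is a fixed point once $z$ is fixed. Your additional remarks on post-processing through the Markov structure $X \to W \to Z$ and on measurability of $B_z$ are correct refinements of details the paper's proof passes over silently.
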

\begin{proof}
  Lemma~\ref{lemma:bayesian-updating} immediately implies that for any
  estimator $\what{v}$ based on $Z = M(X)$, we have for any
  realized $z$ and $V= f(X)$
  \begin{align*}
    \P(\reconloss(V, \what{v}(z)) \le \alpha \mid Z = z)
    & = \int \1{\reconloss(v, \what{v}(z)) \le \alpha}
    d\prior_V(v \mid z) \\
    & \le e^{\diffp}
    \int \1{\reconloss(v, \what{v}(z)) \le \alpha}
    d\prior_V(v).
  \end{align*}
  The final quantity is $e^{\diffp}
  \P(\reconloss(f(X), v_0) \le \alpha)
  \le e^{\diffp} p(\alpha)$ for $v_0 = \what{v}(z)$, as desired.
\end{proof}

Let us make these ideas a bit more concrete through two examples:
one when it is reasonable to assume a diffuse prior, one
for much more peaked priors.

\subsubsection{Diffuse priors and reconstruction of linear functions}
\label{sec:diffuse-linear-reconstruction}

For a prior $\prior$ on $X$ and
$f : \mc{X} \to C \subset \R^k$, where $C$ is a compact subset of $\R^k$,
let $\prior_f$ be the push-forward (induced prior) on $f(X)$ and let
$\prior_0$ be some base measure on $C$ (typically, this will be a uniform
measure). Then for $\rho_0 \in [0, \infty]$ define the set of
\emph{plausible priors}
\begin{equation}
  \label{eq:priorPA}
  \cP_f(\rho_0) \defeq \left\{\prior ~ \mbox{on}~ X ~ \mbox{s.t.}~
  \log \frac{d \prior_f(v)}{d \prior_{0}(v)}
  \leq \rho_0 , ~ \mbox{for~} v \in C\right\}.
\end{equation}
For example, consider an image processing situation, where we
wish to protect against reconstruction even of low-frequency information, as
this captures the basic content of an image.  In this case, we consider an
orthonormal matrix $A \in \C^{k \times d}$, $AA^* = I_k$, and
an adversary wishing to reconstruct the normalized projection
\begin{equation}
  f_A(x) =  \frac{Ax}{\ltwo{Ax}} = \frac{A u}{\ltwo{A u}}
  ~~ \mbox{for~} u = x / \ltwo{x}.
  \label{eq:reconstruct}
\end{equation}
For example, $A$ may be the first $k$ rows of the Fourier transform matrix,
or the first level of a wavelet hierarchy, so the adversary
seeks low-frequency information about $x$. In either case, the
``low-frequency'' $Ax$ is enough to give a sense of the private data, and
protecting against reconstruction is more challenging for small $k$.


In the particular orthogonal reconstruction case, we take the initial prior
$\prior_0$ to be uniform on $\sphere^{k-1}$---a reasonable choice when
considering low frequency information as above---and consider $\ell_2$
reconstruction with $\reconloss(u, v) = \ltwo{u/\ltwo{u} - v / \ltwo{v}}$
(when $v \neq 0$, otherwise setting $\reconloss(u, v) = \sqrt{2}$).  For $V$
uniform and $v_0 \in \sphere^{k-1}$, we have $\E[\ltwo{V - v_0}^2] = 2$, so
that thresholds of the form $\alpha = \sqrt{2 - 2a}$ with $a$ small are the
most natural to consider in the reconstruction~\eqref{eq:recon}.  We have
the following proposition on reconstruction after a locally differentially
private release.
\begin{proposition}
  \label{proposition:breach}
  Let $M$ be $\diffp$-locally differentially private~\eqref{eqn:diffp} and
  $k \geq 4$.  Let $f = f_A$ as in Eq.~\eqref{eq:reconstruct} and $\prior
  \in \cP_f(\rho_0)$ as in Eq.~\eqref{eq:priorPA}.  Then for $a \in [0, 1]$,
  $M$ is $(\sqrt{2 - 2a}, p(a), f_A)$-protected against reconstruction for
  \begin{equation*}
    p(a) =  \begin{cases}
      \exp\left( \diffp + \rho_0
      + \frac{k}{2} \cdot \log(1 - a^2) \right)
    &  \mbox{if~} a \in [0, 1/\sqrt{2}] \\
    \exp\left(\diffp + \rho_0 + \frac{k-1}{2} \cdot \log(1 - a^2) - \log(2 a \sqrt{k}) \right)
    & \mbox{if~} a \in [\sqrt{2/k}, 1].
      \end{cases}
  \end{equation*}
\end{proposition}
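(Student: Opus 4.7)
The plan is to reduce the proposition, via Lemma~\ref{lemma:preserve-difficulty} and the plausible-prior assumption, to a deterministic bound on the uniform measure of a spherical cap on $\sphere^{k-1}$, and then to establish that cap bound by two complementary arguments corresponding to the two ranges of $a$.

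First I would pass from $\prior \in \cP_f(\rho_0)$ to the base measure $\prior_0$, which is uniform on $\sphere^{k-1}$; denote it by $\sigma$. Because $d\prior_f/d\prior_0 \le e^{\rho_0}$ pointwise and $\reconloss(V, v_0) = \ltwo{V - v_0/\ltwo{v_0}}$ with $\ltwo{V - w}^2 = 2 - 2\<V, w\>$ for unit $w$, for every fixed $v_0 \neq 0$ (the case $v_0 = 0$ is trivial since $\reconloss(\cdot, 0) = \sqrt{2}$),
\begin{equation*}
  \P_{\prior}\bigl(\reconloss(f_A(X), v_0) \le \sqrt{2 - 2a}\bigr)
  \le e^{\rho_0}\,\sigma\bigl(\{V \in \sphere^{k-1} : \<V, w\> \ge a\}\bigr),
\end{equation*}
where $w = v_0/\ltwo{v_0}$. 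Rotational invariance makes the right-hand side independent of $w$, so Lemma~\ref{lemma:preserve-difficulty} reduces the proposition to showing that the cap mass $\Phi_k(a) \defeq \sigma(\{V : V_1 \ge a\})$ satisfies $\Phi_k(a) \le (1-a^2)^{k/2}$ on $a \in [0, 1/\sqrt{2}]$ and $\Phi_k(a) \le (1-a^2)^{(k-1)/2}/(2a\sqrt{k})$ on $a \in [\sqrt{2/k}, 1]$.

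Since the first coordinate of $V\sim\sigma$ has density $c_k(1-t^2)^{(k-3)/2}$ on $[-1,1]$ with $c_k = \Gamma(k/2)/(\sqrt{\pi}\,\Gamma((k-1)/2))$, we have $\Phi_k(a) = c_k\int_a^1 (1-t^2)^{(k-3)/2}\,dt$. For the second regime, the change of variable $u = t^2$ together with $\sqrt{u} \ge a$ on $[a^2, 1]$ gives $\Phi_k(a) \le c_k(1-a^2)^{(k-1)/2}/(a(k-1))$, and Gautschi's inequality $\Gamma(k/2)/\Gamma((k-1)/2) \le \sqrt{(k-1)/2}$ then yields $c_k/(k-1) \le 1/(2\sqrt{k})$ as soon as $k(\pi - 2) \ge \pi$, i.e., $k \ge 3$. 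For the first regime I would substitute $a = \cos\theta$ with $\theta \in [\pi/4, \pi/2]$ and analyze $g(\theta) \defeq \sin^k\theta - c_k\int_0^\theta \sin^{k-2}\phi\,d\phi$. Since $g'(\theta) = \sin^{k-2}\theta\,[(k/2)\sin 2\theta - c_k]$, on $[\pi/4, \pi/2]$ the function $g$ first increases and then decreases, so $\min_{[\pi/4,\pi/2]} g = \min\{g(\pi/4), g(\pi/2)\}$. The Wallis--Beta identity defining $c_k$ gives $g(\pi/2) = 1/2$, while the substitution $u = \sin\phi$ with $1/\sqrt{1-u^2} \le \sqrt{2}$ on $[0, 1/\sqrt{2}]$ yields $\int_0^{\pi/4}\sin^{k-2}\phi\,d\phi \le 2^{-(k-2)/2}/(k-1)$, so $g(\pi/4) \ge 0$ reduces to $c_k \le (k-1)/2$, which again holds for $k \ge 4$ by Gautschi.

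The delicate step will be the first-regime bound $(1-a^2)^{k/2}$: it is strictly tighter than the sub-Gaussian estimate $e^{-k a^2/2}$ and so is not accessible by a crude Chernoff or moment argument. The monotonicity analysis of $g$ and the endpoint inequality $g(\pi/4) \ge 0$ is where the hypothesis $k \ge 4$ genuinely enters.
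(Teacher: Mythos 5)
Your proposal is correct, and at the level of the proposition itself it follows the paper's reduction exactly: bound the prior probability of the cap under the uniform base measure, inflate by $e^{\rho_0}$ via the plausible-prior ratio, and invoke Lemma~\ref{lemma:preserve-difficulty} for the factor $e^{\diffp}$. Where you diverge is in the treatment of the spherical-cap estimates: the paper simply quotes them as Lemma~\ref{lemma:uniform-bounds} in Appendix~\ref{sec:preliminaries-uniform-concentration}, citing Ball for the bound $(1-a^2)^{k/2}$ on $a \in [0,1/\sqrt{2}]$ and Boucheron et al.\ for the bound $(1-a^2)^{(k-1)/2}/(2a\sqrt{k})$ on $a \ge \sqrt{2/k}$, whereas you re-derive both from the Beta density $c_k(1-t^2)^{(k-3)/2}$ of a single coordinate. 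Your derivations check out: in the second regime the substitution $u = t^2$ gives $c_k(1-a^2)^{(k-1)/2}/(a(k-1))$, and Gautschi--Wendel ($c_k \le \sqrt{(k-1)/(2\pi)}$) reduces the needed inequality $c_k \le (k-1)/(2\sqrt{k})$ to $2k \le \pi(k-1)$, i.e.\ $k \ge 3$; in the first regime the sign analysis of $g'(\theta) = \sin^{k-2}\theta\,[(k/2)\sin 2\theta - c_k]$ legitimately localizes the minimum of $g$ to the endpoints, $g(\pi/2) = 1/2$ follows from the Wallis integral, and $g(\pi/4) \ge 0$ reduces to $c_k \le (k-1)/2$, which holds with room to spare (your argument in fact needs even less than $k \ge 4$, so the proposition's hypothesis enters only as slack). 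What your route buys is a self-contained proof that removes the external citations and makes transparent why the sub-Gaussian-beating exponent $(1-a^2)^{k/2}$ holds; what the paper's route buys is brevity and reuse, since Lemma~\ref{lemma:uniform-bounds} is also needed elsewhere (e.g.\ in the privacy analysis of $\PrivUnit$ in Theorem~\ref{thm:PrivUnitPriv}). One cosmetic remark: the boundary distinction between $\<V,w\> \ge a$ and $\<V,w\> > a$ is immaterial under the uniform measure, and your disposal of the degenerate estimate $v_0 = 0$ (empty event for $a>0$, vacuous bound $p(0) \ge 1$ for $a = 0$) is fine.
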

\noindent
Simplifying this slightly and rewriting, assuming the reconstruction
$\what{v}$ takes values in $\sphere^{k-1}$, we have
\begin{equation*}
  \P(\ltwo{f(X) - \what{v}(Z)} \le \sqrt{2 - 2a} \mid Z = z)
  \le \exp\left(-\frac{k a^2}{2}\right) \exp(\diffp + \rho_0)
\end{equation*}
for $f(x) = Ax / \norm{Ax}$ and $a \le 1 / \sqrt{2}$.
That is, unless $\diffp$ or $\rho_0$ are of the order of $k$, the
probability of obtaining reconstructions better than (nearly) random
guessing is extremely low.

\begin{proof}
  Let $Y \sim \uniform\left(\sphere^{k-1}\right)$ and $v_0 \in
  \sphere^{k-1}$.  We then have
  \begin{equation*}
    \ltwo{ Y - v_0}^2 = 2 \cdot \left(1 - \<Y,v_0 \> \right).
  \end{equation*}
  We collect a number of standard facts on the uniform
  distribution on $\sphere^{k-1}$ in
  Appendix~\ref{sec:preliminaries-uniform-concentration}, which we reference
  frequently. Lemma~\ref{lemma:uniform-bounds}
  implies that for all $v_0 \in
  \sphere^{k-1}$ and $a \in [0, 1/\sqrt{2}]$ that
  \begin{equation*}
    \P_{\prior_{\rm uni}}(\ltwo{Y - v_0} < \sqrt{2 - 2 a})  = \P \left( \<Y, v_0\> > a \right)
    \le (1 - a^2)^{k/2}
  \end{equation*}
  Because $V = f_A(X)$ has prior $\prior_V$ such that $d\prior_V /
  d\prior_{\rm uni} \le e^{\rho_0}$,
  we obtain
  \begin{equation*}
    \P_{\prior_V}\left(  \ltwo{V - v_0} < \sqrt{2 - 2 a} \right) \leq
    e^{\rho_0} \cdot \left( 1 - a^2  \right)^{k/2}.
  \end{equation*}
  Then Lemma~\ref{lemma:preserve-difficulty} gives the first result
  of the proposition.

  When the desired accuracy is higher (i.e.\ $a \in [\sqrt{2/k},1]$),
  Lemma~\ref{lemma:uniform-bounds}
  with our assumed ratio bound between $\prior_V$ and $\prior_{\rm uni}$
  implies
  \begin{equation*}
    \P_{\prior_V}(\ltwo{V - v_0} \le \sqrt{2 - 2a})
    \le e^{\rho_0} \P_{\prior_{\rm uni}}
    (\ltwo{Y - v_0} < \sqrt{2 - 2 a})
    \le e^{\rho_0} \frac{(1 - a^2)^{\frac{k-1}{2}}}{2 a \sqrt{k}}.
  \end{equation*}
  Applying Lemma~\ref{lemma:preserve-difficulty} completes the proof.
\end{proof}

\subsubsection{Reconstruction protections against sparse data}

When it is unreasonable to assume that an individual's data is
near uniform on a $d$-dimensional space, additional strategies are
necessary to limit prior information.  We now view an individual
data provider as having multiple ``items'' that an adversary wishes to
investigate. For example, in the setting of fitting a word model on mobile
devices---to predict next words in text messages to use as suggestions when
typing, for example---the items might consist of all pairs and triples of
words the individual has typed. In this context, we combine two approaches:
\begin{enumerate}[(i)]
\item \label{item:large-minibatch} An individual contributes data only if
  he/she has sufficiently many data points locally (for example, in our word
  prediction example, has sent sufficiently many messages)
\item \label{item:stratify}
  An individual's data must be diverse or sufficiently stratified
  (in the word prediction example, the individual sends many
  distinct messages).
\end{enumerate}
As Lemma~\ref{lemma:preserve-difficulty} makes clear, if $M$ is
$\diffp$-differentially private and for \emph{any} fixed $v \in \mc{Z}$ we
have $\P(\lreconstruct(v, f(X)) \le a) \le p_0$, then for all functions
$\what{f}$,
\begin{equation}
  \label{eqn:posterior-reconstruction}
  \P(\lreconstruct(\what{f}(M(X)), f(X))
  \le a) \le e^\diffp p_0.
\end{equation}

\newcommand{\precision}{\mathsf{precision}}
\newcommand{\recall}{\mathsf{recall}}

We consider an example of sampling a histogram---specifically thinking of
sampling messages or related discrete data.  We call the elements
\emph{words} in a dictionary of size $d$, indexed by $j = 1, \ldots, d$. To
stratify the data (approach~\eqref{item:stratify}), we treat a user's data
as a vector $x \in \mc{X} \in \{0, 1\}^d$, where $x_j = 0$ if the user has
not used word $j$, otherwise $x_j = 1$. We do not allow a user to
participate until $x^T \ones \ge m$ for a particular ``mini batch'' size $m$
(approach~\eqref{item:large-minibatch}). Now, let us discuss the prior
probability of reconstructing a user's vector $x$.  We consider
reconstruction via precision and recall. Let $v \in \{0, 1\}^d$ denote a
vector of predictions of the used words, where $v_j = 1$ if we predict word
$j$ is used. Then we define
\begin{equation*}
  \precision(x, v) \defeq
  \frac{v^T x}{v^T \ones}
  ~~ \mbox{and} ~~
  \recall(x, v) \defeq
  \frac{v^T x}{\ones^T x},
\end{equation*}
that is, precision measures the fraction of predicted words that are correct,
and recall the fraction of used words the adversary predicts correctly.
We say that the signal $x$ has been reconstructed for some $p, r$ if
$\precision(x, v) \ge p$ and $\recall(x, v) \ge r$. Let us
bound the probability of each of these events under appropriate
priors on the vector $X \in \{0, 1\}^d$.

Using Zipfian models of text and discretely sampled
data~\cite{ClausetShNe09},
a reasonable a priori model of the sequence $X$, when we assume that a user
must draw at least $m$ elements, is that independently
\begin{equation}
  \label{eqn:zipf-model}
  \P(X_j = 1) = \min\left\{\frac{m}{j}, 1\right\}.
\end{equation}
With this model for a prior, we may bound the probability
of achieving good precision or recall:
\begin{lemma}
  \label{lemma:precision-recall-bounds}
  Let $\gamma \ge 2$ and assume the vector $X$ satisfies
  the Zipfian model~\eqref{eqn:zipf-model}.
  Assume that the vector $v \in \{0, 1\}^d$ satisfies
  $v^T \ones \ge \gamma m$. Then
  \begin{equation*}
    \P(\precision(v, X) \ge p)
    \le \exp\left(-\min\left\{\frac{\hinge{p \gamma - 1
        - \log \gamma}^2 m}{2 \log \gamma},
    \frac{3}{4} \hinge{p \gamma - 1 - \log \gamma} m
    \right\} \right).
  \end{equation*}
  Conversely, assume that the vector $v \in \{0, 1\}^d$ satisfies
  $v^T \ones \ge \gamma m$,
  and define
  $\tau(r, d, m, \gamma) \defeq
  r(1 + \log \frac{d}{m+1}) - 1 - \log \gamma$.
  Then
  \begin{equation*}
    \P(\recall(v, X) \ge r)
    \le \exp\left(-\min\left\{
    \frac{\hinge{\tau(r,d,m,\gamma)}^2 m}{
      4 (1 - r^2) \log \frac{d}{m}},
    \frac{3}{4}
    \hinge{\tau(r, d, m, \gamma)} m
    \right\}\right).
  \end{equation*}
\end{lemma}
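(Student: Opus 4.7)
The plan is to prove both bounds via a Bernstein inequality applied to the relevant linear functional of $X$, identifying the worst-case choice of $v$.

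For the precision bound, let $S = \{j : v_j = 1\}$ with $n = |S| \ge \gamma m$, so that $v^T X = \sum_{j \in S} X_j$. Under the Zipfian prior, $X_j = 1$ almost surely for $j \le m$, so only indices $j > m$ contribute to the randomness and to the variance. The mean is maximized (over all $S$ of a given size) by taking $S = \{1,\dots,n\}$, which yields $\mu := \E[v^T X] \le m + m\sum_{j = m+1}^n 1/j \le m(1 + \log(n/m))$. Precision exceeding $p$ is exactly the event $v^T X \ge p n$, so the required deviation is $p n - \mu \ge m(p\gamma - 1 - \log\gamma)$, using that $\gamma' \mapsto p\gamma' - 1 - \log \gamma'$ is increasing for $\gamma' \ge 1/p$ to reduce to the extremal case $n = \gamma m$. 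Writing $v^T X - \mu$ as a sum of independent centered Bernoullis, each bounded in $[-1,1]$, with variance $\sigma^2 \le \mu \le m(1+\log\gamma) \lesssim m \log \gamma$ (absorbing constants using $\gamma \ge 2$), Bernstein's inequality yields the quadratic-versus-linear trade-off stated in the lemma.

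For the recall bound, rewrite $\recall(v,X) \ge r$ as $Z \ge 0$ where $Z := v^T X - r \ones^T X = \sum_j (v_j - r) X_j$. Again the deterministic indices $j \le m$ contribute only a constant, and the randomness lives in $j > m$. The mean $\E[Z] = \sum_{j \le m}(v_j - r) + \sum_{j > m}(v_j - r)(m/j)$ is maximized by placing the ones of $v$ at the smallest indices. For $S = \{1,\dots,n\}$ with $n = \gamma m$, a direct computation using $\sum_{j = m+1}^n 1/j \le \log\gamma$ and $\sum_{j=n+1}^d 1/j \ge \log(d/(\gamma m))$ (with the $m+1$ versus $m$ correction absorbed into $\tau$) yields
\[
  \E[Z] \le m\bigl[(1-r)(1+\log\gamma) - r \log(d/(\gamma m))\bigr]
  = -m\bigl[r(1 + \log(d/(m+1))) - 1 - \log\gamma\bigr] = -m\,\tau(r,d,m,\gamma).
\]
Then $\P(Z \ge 0) \le \P(Z - \E[Z] \ge m\tau)$, to which we apply Bernstein. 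The variance satisfies $\Var(Z) = \sum_{j > m}(v_j - r)^2 p_j(1-p_j) \le m \sum_{j > m}(v_j - r)^2/j$; splitting the sum over $j \in S$ (contribution $(1-r)^2 m/j$) and $j \notin S$ (contribution $r^2 m/j$) and bounding each Harmonic-type sum by $\log(d/m)$ gives $\Var(Z) \le m \log(d/m) \cdot \max\{(1-r)^2, r^2\}$, which we relax further to $C(1-r^2)m\log(d/m)$ for a universal constant $C$. Bernstein with this variance yields the stated quadratic-versus-linear tail.

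The main obstacle is making the constants match the form $2\log\gamma$ and $4(1-r^2)\log(d/m)$ stated in the lemma. The quadratic regime is delicate: one must use $\gamma \ge 2$ to absorb the additive $1 + \log\gamma$ into $2\log\gamma$, and the recall variance must be tightened by the $(1-r)(1+r)$ factor rather than the more naive $\max\{(1-r)^2, r^2\}$. Getting the latter cleanly probably requires separating the cases $r \le 1/\sqrt{2}$ (where $r^2 \le 1 - r^2$) and $r > 1/\sqrt{2}$ (where $(1-r)(1+r)$ is small and the linear Bernstein regime dominates), or a direct use of the Bennett variance form which produces $(1-r^2)$ naturally from $\sum (v_j - r)^2 \E[X_j] \le \E[(\ones-r\ones)^T X](1+r)$-type manipulations. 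Once these bookkeeping issues are handled, both inequalities follow by reading off the Bernstein expression.
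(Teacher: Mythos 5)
Your proposal follows essentially the same route as the paper's proof: Bernstein's inequality applied to $v^T X$ (precision) and $(v - r\ones)^T X$ (recall), with the means and variances controlled by harmonic-sum/integral bounds after reducing to the extremal $v$ supported on the first $\gamma m$ coordinates. The bookkeeping "obstacles" you flag are not real obstacles relative to the paper: its own derivation produces $4\log\gamma$ in the quadratic term (not the $2\log\gamma$ of the statement), and it simply restricts to $r \le 1/2$, under which $(1-r)^2 \le 1-r^2$ immediately, so no case split at $r = 1/\sqrt{2}$ or Bennett-type refinement is invoked.
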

\noindent
See Appendix~\ref{sec:proof-precision-recall-bounds} for a proof.

Considering Lemma~\ref{lemma:precision-recall-bounds}, we can make a few
simplifications to see the (rough) beginning reconstruction guarantees we
consider---with explicit calculations on a per-application basis.  In
particular, we see that for any fixed precision value $p$ and recall value
$r$, we may take $\gamma = \frac{2}{p} \log\frac{1}{p}$ to obtain that as
long as $r \log \frac{d}{m} \ge 2\log \frac{1}{p}$, then
\begin{equation*}
  \P\left(\precision(v, X) \ge p
  ~ \mbox{and} ~ \recall(v, X) \ge r\right)
  \le \max\left\{\exp\left(-c m \log\frac{1}{p}\right),
  \exp\left(-c m \log\frac{d}{m}\right)\right\}
  =: p_{r,d,m}.
\end{equation*}
for a numerical constant $c > 0$.
Thus, we have the following protection guarantee.
\begin{proposition}
  Let the conditions of Lemma~\ref{lemma:precision-recall-bounds} hold.
  Define the reconstruction loss
  $\lreconstruct(v, X)$ to be $1$ if
  $\precision(v, X) \ge p$ and $\recall(v, x) \ge r$, 0 otherwise,
  where $r \log \frac{d}{m} \ge 2 \log \frac{1}{p}$. Then
  if $M$ is $\diffp$-locally differentially private,
  $M$ is $(0, e^\diffp  p_{r,d,m})$-protected against reconstruction.
\end{proposition}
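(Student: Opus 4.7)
The plan is to derive the proposition as an immediate consequence of Lemma~\ref{lemma:preserve-difficulty}, using as input the pointwise prior-side bound that was sketched in the paragraph preceding the statement. Concretely, I would first establish that under the Zipfian prior~\eqref{eqn:zipf-model}, for every fixed reconstruction guess $v_0 \in \{0,1\}^d$,
\begin{equation*}
  \P_{\prior_0}\!\left(\precision(v_0, X) \ge p ~\text{and}~ \recall(v_0, X) \ge r\right)
  \;\le\; p_{r,d,m},
\end{equation*}
and then transport this bound through the privacy mechanism via Lemma~\ref{lemma:preserve-difficulty}, picking up only the multiplicative factor $e^{\diffp}$ guaranteed by $\diffp$-local differential privacy.

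For the first (prior) bound I would invoke Lemma~\ref{lemma:precision-recall-bounds} with the specific choice $\gamma = (2/p)\log(1/p)$. The precision bound then becomes roughly $\exp(-c m \log(1/p))$ once one verifies that $p\gamma - 1 - \log\gamma \asymp \log(1/p)$, since $\log\gamma = \log(2/p) + \log\log(1/p)$ is dominated by $\log(1/p)$. For the recall bound, the hypothesis $r \log(d/m) \ge 2\log(1/p)$ ensures that $\tau(r,d,m,\gamma) = r(1 + \log\frac{d}{m+1}) - 1 - \log\gamma \gtrsim r \log(d/m)$, which plugged into Lemma~\ref{lemma:precision-recall-bounds} yields a bound of order $\exp(-c m \log(d/m))$. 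Because the joint event is contained in either marginal event, $\P(A \cap B) \le \min\{\P(A),\P(B)\} \le \max\{\P(A),\P(B)\}$, so the two exponential bounds combine into the single $p_{r,d,m} = \max\{\exp(-cm\log(1/p)), \exp(-cm\log(d/m))\}$.

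Having the prior bound in hand, the second half of the proof is immediate: read the 0/1-valued reconstruction loss so that the event $\lreconstruct(v_0, X) \le 0$ is exactly the successful-reconstruction event $\{\precision(v_0,X) \ge p \text{ and } \recall(v_0,X) \ge r\}$ (this is the $\alpha = 0$ threshold in Definition~\ref{defn:recon-breach}). The prior bound then says precisely that $\P_{\prior_0}(\lreconstruct(f(X), v_0) \le 0) \le p_{r,d,m}$ uniformly in $v_0$, which is the hypothesis of Lemma~\ref{lemma:preserve-difficulty} with $p(\alpha) = p_{r,d,m}$. Applying that lemma to the $\diffp$-locally differentially private mechanism $M$ yields the $(0, e^{\diffp} p_{r,d,m})$-protection claim against any estimator $\what{v}(Z)$.

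The only mildly delicate step is the first, algebraic one: verifying that the specific choice $\gamma = (2/p)\log(1/p)$ simultaneously renders the precision-side exponent $\asymp m \log(1/p)$ and, under the stated relation $r\log(d/m) \ge 2\log(1/p)$, the recall-side exponent $\asymp m\log(d/m)$, so that neither $\gamma m$-size scaling nor the $(1-r^2)\log(d/m)$ denominator in Lemma~\ref{lemma:precision-recall-bounds} spoils the rate. Everything else is a direct substitution into Lemmas~\ref{lemma:precision-recall-bounds} and~\ref{lemma:preserve-difficulty}.
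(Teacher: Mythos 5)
Your proposal matches the paper's own (implicit) argument: the paper obtains the prior bound $p_{r,d,m}$ from Lemma~\ref{lemma:precision-recall-bounds} with exactly the choice $\gamma = \frac{2}{p}\log\frac{1}{p}$ under the condition $r\log\frac{d}{m} \ge 2\log\frac{1}{p}$, and then the proposition follows by transporting this uniform-in-$v_0$ bound through Lemma~\ref{lemma:preserve-difficulty}, which is precisely your route. Your reading of the $0/1$ loss so that the $\alpha = 0$ event coincides with the successful-reconstruction event is the intended one, so there is nothing further to add.
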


Consequently, we make the following recommendation: in the case that signals
are dictionary-like, a best practice is to aggregate together at least $m =
d^\rho$ such signals, for some power $\rho$, and use (local) privacy budget
$\diffp$ in differentially private mechanisms of at most
$\diffp = c m$, where $c$ is a small (near 0) constant.  We revisit
this in the language modeling applications in the experiments.

\section{Applications in federated learning}
\label{sec:pfl}

Our overall goal is to implement federated learning, where distributed units
send private updates to a shared model to a centralized location. Recalling
our population risk~\eqref{eqn:pop-risk}, basic distributed learning
procedures (without privacy) iterate as follows~\cite{BertsekasTs89,
  DeanCoMoChDeMaRaSeTuYaNg12, BoydPaChPeEc11}:
\begin{enumerate}[1.]
\item \label{item:central-distribution}
  A centralized parameter $\theta$ is distributed among
  a batch of $b$ workers, each with a local sample $X_i$, $i = 1, \ldots, b$.
\item \label{item:local-updates}
  Each worker computes an update $\Delta_i \defeq \theta_i - \theta$
  to the model parameters.
\item \label{item:central-aggregate}
  The centralized procedure aggregates $\{\Delta_i\}_{i=1}^b$ into
  a global update $\Delta$
  and updates $\theta \leftarrow \theta + \Delta$.
\end{enumerate}
\noindent
In the prototypical stochastic gradient method,
$\Delta_i = -\stepsize \nabla \loss(\theta, X_i)$ for some stepsize
$\stepsize > 0$ in step~\ref{item:local-updates}, and $\Delta = \frac{1}{b}
\sum_{i = 1}^b \Delta_i$ is the average of the stochastic gradients
at each sample $X_i$ in step~\ref{item:central-aggregate}.

In our private distributed learning context, we elaborate
steps~\ref{item:local-updates} and~\ref{item:central-aggregate} so that each
provides privacy protections: in the local update
step~\ref{item:local-updates}, we use locally private mechanisms to protect
individuals' private data $X_i$---satisfying
Definition~\ref{defn:recon-breach} on protection against reconstruction
breaches. Then in the central aggregation step~\ref{item:central-aggregate},
we apply centralized differential privacy mechanisms to guarantee that any
model $\theta$ communicated to users in the
broadcast~\ref{item:central-distribution} is globally private.  The overall
feedback loop provides meaningful privacy guarantees, as a user's data
is never transmitted clearly to the centralized server, and strong
centralized privacy guarantees mean that the final and intermediate
parameters $\theta$ provide no sensitive disclosures.

\subsection{A private distributed learning system}

Let us more explicitly describe the implementation of a distributed learning
system. The outline of our system is similar to the development of
\citet[Sec.~5.2]{DuchiJoWa13_focs, DuchiJoWa18} and the system that
\citet{DPFedLearn17} outline; we differ in that we allow more general
updates and privatize individual users' data before communication, as the
centralized data aggregator may not be completely trusted.

The stochastic optimization proceeds as follows.  The central aggregator
maintains the global model parameter $\theta \in \R^d$, and in each
iteration, chooses a random subset (mini-batch) $\cB$ of expected size $qN$,
where $q \in (0, 1)$ is the subsampling rate and $N$ the total population
size available. Each individual $i \in \cB$ sampled then computes a local
update, which we describe abstractly by a method $\Update$ that takes as
input the local sample $\bbx_i = \{x_{i,1}, \ldots, x_{i,m}\}$ and central
parameter $\theta$, then
\begin{equation*}
  \theta_{i} \gets \Update\left( \bbx_{i}, \theta \right).
\end{equation*}
Many updates are possible.
Perhaps the most popular rule is to apply a gradient update,
where from an initial model $\theta_0$ and for stepsize $\stepsize$ we apply
\begin{equation*}
  \Update(\bbx_i, \theta_0)
  \defeq \theta_0 - \stepsize
  \frac{1}{m} \sum_{j=1}^{m}\nabla\loss(\theta_0, x_{i,j})
  = \argmin_{\theta} \left\{
  \frac{1}{m} \sum_{j = 1}^m
  \<\nabla \loss(\theta_0, x_{i,j}), \theta - \theta_0\>
  + \frac{1}{2 \stepsize} \ltwo{\theta - \theta_0}^2
  \right\}.
\end{equation*}
An alternative is to stochastic proximal-point-type updates~\cite{AsiDu19siopt,
  KulisBa10, KarampatziakisLa11, Bertsekas11, DavisDr18a}, which update
\begin{equation}
  \Update(\bbx_i, \theta_0) \defeq
  \argmin_\theta \left\{ \frac{1}{m} \sum_{j = 1}^m \loss(\theta, x_{i,j})
  + \frac{1}{2 \stepsize_i} \ltwo{\theta - \theta_0}^2 \right\},
  \label{eqn:prox-point-update}
\end{equation}
or their relaxations to use approximate models~\cite{AsiDu19siopt,AsiDu19}.

\newcommand{\cliprad}{\rho}
\newcommand{\projclip}{\mathsf{proj}_\cliprad}
\newcommand{\cumulativediffp}{\diffp_{\renparam,\textup{tot}}}

After computing the local update $\theta_i$, we privatize the scaled local
difference $\Delta_i \defeq \frac{1}{\stepsize} (\theta_i - \theta_0)$,
which is the (stochastic) \emph{gradient mapping} for typical model-based
updates~\cite{AsiDu19siopt,AsiDu19}, as this scaling by stepsize enforces a
consistent expected update magnitude. We let
\begin{equation}
  \label{eqn:privatize-local-update}
  \what{\Delta}_i
  = M(\Delta_i)
\end{equation}
where $M$ is a private mechanism, be an unbiased (private) view of
$\Delta_i$, detailing mechanisms $M$ in Section~\ref{sec:mechanisms}.
Given the privatized local updates $\{\what{\Delta}_i\}_{i \in \cB}$,
we project the update of each onto an $\ell_2$-ball of fixed
radius $\cliprad$, so that for $\projclip(v) \defeq
\min\{\cliprad / \ltwo{v}, 1\} \cdot v$ we consider the averaged gradient
mapping
\begin{equation}
  \what{\Delta} \gets \frac{\eta}{qN}
  \left(\sum_{i \in \cB} \projclip(\what{\Delta}_i) + Z\right)
  ~~~
  \mbox{and}~~~
  \theta^{(t+1)} = \theta^{(t)} + \what{\Delta}.
  \label{eq:server-agg}
\end{equation}
The projection operation $\projclip$ limits the contribution of any
individual update, while the vector $Z \sim \normal(0, \sigma^2)$ is
Gaussian and provides a centralized privacy guarantee, where we describe
$\sigma^2$ presently.  In the case that the loss functions $\loss$ are
Lipschitz---typically the case in statistical learning scenarios with
classification, for example, logistic regression---the projection is
unnecessary as long as the data vectors $x_i$ lie in a compact space.

It remains to discuss the global privacy guarantees we provide via the noise
addition $Z$. For any individual $i$, we have
$\frac{1}{qN}\ltwos{\projclip(\what{\Delta}_i)} \le \cliprad / (qN)$; thus
we may use Abadi et al.'s ``moments accountant''
analysis~\cite{Abadietal16}, which reposes on R\'{e}nyi-differential
privacy (Def.~\ref{definition:rdp}).  We first
present an intuitive explanation; the precise parameter settings we explain
in the experiments, which make the privacy guarantees as sharp as
possible using computational evaluations of the
privacy parameters~\cite{Abadietal16}.
First, if $\channel$ denotes the
distribution~\eqref{eq:server-agg} of $\what{\Delta}$ and $\channel_0$
denotes its distribution when we remove a fixed user $i_0$, then the
R\'{e}nyi-2-divergence between the two~\cite[Lemma~3]{Abadietal16} satisfies
\begin{equation*}
  \drentwo{\channel}{\channel_0}
  \le \log\left[1 + \frac{q^2}{1 - q}
  \left(\exp\Big(\frac{\ltwos{\what{\Delta}_{i_0}}^2}{\sigma^2} - 1\Big)
  \right)\right]
  \le \frac{q^2}{1 - q} \left(e^{\cliprad^2 / \sigma^2} - 1\right),
\end{equation*}
and the R\'{e}nyi-$\renparam$-divergence is
$\dren{\channel}{\channel_0} \le \frac{\renparam (\renparam - 1) q^2}{1 - q}
(e^{\cliprad^2 / \sigma^2} - 1) + O(q^3 \renparam^3 / \sigma^3)$ for
$\renparam \le \sigma^2 \log \frac{1}{q \sigma}$.
Thus, letting $\cumulativediffp$ denote the cumulative R\'{e}nyi-$\renparam$
privacy
loss after $T$ iterations of the update~\eqref{eq:server-agg}, we have
\begin{equation*}
  \cumulativediffp
  \le T \frac{q^2}{1 - q} \left[\exp\left(\frac{\cliprad^2}{\sigma^2}
    \right) - 1 \right]
  + O\left(\frac{q^3 \renparam^3}{\sigma^3}\right).
\end{equation*}
This remains below a fixed $\diffp$ 
for
\begin{equation*}
  \sigma^2 \ge \frac{\cliprad^2}{\log
    (1 + \frac{\diffp(1 - q)}{T q^2})} (1 + o(1))
  \approx \frac{T q^2 \cliprad^2}{\diffp},
\end{equation*}
where $o(1) \to 0$ as $q \renparam \to 0$,
and thus for any choice of $q = m / N$---using batches of size $m$---as long
as we have roughly $\sigma^2 \ge \frac{T m^2 \cliprad^2}{N^2 \diffp_\renparam}$,
we guarantee centralized $(\diffp_\renparam, \renparam)$-R\'{e}nyi-privacy.

\subsection{Asymptotic Analysis}
\label{sec:asympt}

To provide a fuller story and demonstrate the general consequences of our
development, we now turn to an asymptotic analysis of our distributed
statistical learning scheme for solving problem~\eqref{eqn:pop-risk} under
locally private updates~\eqref{eqn:privatize-local-update}.  We ignore the
central privatization term, that is, addition of $Z$ in
update~\eqref{eq:server-agg}, as it is asymptotically negligible in our
setting. To set the stage, we consider minimizing the population risk
$\risk(\theta) \defeq \E[\loss(\theta, X)]$ using an i.i.d.\ sample $X_1,
\ldots, X_N \simiid P$ for some population $P$.

The simplest strategy in this case is to use the stochastic gradient method,
which (using a stepsize sequence $\stepsize_t$) performs updates
\begin{equation*}
  \theta^{(t + 1)} \gets \theta^{(t)} - \stepsize_t g_t
\end{equation*}
where for $X_t \simiid P$ and defining the $\sigma$-field
$\mc{F}_t \defeq \sigma(X_1, \ldots, X_t)$ we have $\theta^{(t)}
\in \mc{F}_{t-1}$ and
\begin{equation*}
  \E[g_t \mid \mc{F}_{t-1}]
  = \nabla \loss(\theta^{(t)}; X_t).
\end{equation*}
In this case, under the assumptions that $\risk$ is
$\mc{C}^2$ in a neighborhood of $\theta\opt = \argmin_\theta
\risk(\theta)$ with $\nabla^2 \risk(\theta\opt) \succ 0$ and that
for some $C < \infty$, we have
\begin{equation*}
  \E[\norm{g_t}^2 \mid \mc{F}_{t-1}]
  \le C \left(1 + \norms{\theta^{(t)} - \theta\opt}^2\right)
  ~~ \mbox{and} ~~
  \E[g_t g_t^\top \mid \mc{F}_{t-1}] \cas \Sigma
\end{equation*}
\citet{PolyakJu92} provide the following result.
\begin{corollary}[Theorem 2~\cite{PolyakJu92}]
  \label{corollary:polyak-juditsky}
  Let $\bar{\theta}^{(T)} = \frac{1}{T}\sum_{t = 1}^T \theta^{(t)}$. Assume
  the stepsizes $\stepsize_t \propto t^{-\beta}$ for some $\beta \in (1/2, 1)$.
  Then under the preceding conditions,
  \begin{equation*}
    \sqrt{T} \left(\bar{\theta}^{(T)}  - \theta\opt \right)
    \cd \Normal{0}{ \nabla^2L(\theta\opt)^{-1}
      \Sigma  \nabla^2 L(\theta\opt)^{-1}}.
  \end{equation*}
\end{corollary}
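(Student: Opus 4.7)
The plan is to reduce the claim to a direct application of the Polyak–Juditsky averaging theorem (Theorem~2 of~\cite{PolyakJu92}), so the work is primarily in verifying that the hypotheses of that theorem hold for our setup. Recall that their theorem applies to a stochastic approximation recursion $\theta^{(t+1)} = \theta^{(t)} - \stepsize_t g_t$ where $g_t$ admits a decomposition $g_t = \nabla \risk(\theta^{(t)}) + \xi_t$ with $\xi_t$ a martingale difference whose conditional covariance stabilizes to a fixed matrix $\Sigma$, provided the mean-field $\nabla \risk$ is sufficiently smooth near a stable root $\theta\opt$ and the stepsize is of polynomial decay $\stepsize_t \propto t^{-\beta}$ with $\beta \in (1/2, 1)$.

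First I would perform the martingale decomposition: define $\xi_t \defeq g_t - \nabla \risk(\theta^{(t)})$, which lies in $\mc{F}_t$ and satisfies $\E[\xi_t \mid \mc{F}_{t-1}] = 0$ by the unbiasedness hypothesis on $g_t$. The second-moment bound $\E[\norm{g_t}^2 \mid \mc{F}_{t-1}] \le C(1 + \norms{\theta^{(t)} - \theta\opt}^2)$ implies the analogous bound on $\E[\norm{\xi_t}^2 \mid \mc{F}_{t-1}]$ after absorbing $\norm{\nabla \risk(\theta^{(t)})}^2$ via the local Lipschitz property of $\nabla \risk$ (which follows from the $\mc{C}^2$ assumption in a neighborhood of $\theta\opt$ together with the standard iterate-stability argument for SGD with these stepsizes). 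Meanwhile, the assumption $\E[g_t g_t^\top \mid \mc{F}_{t-1}] \cas \Sigma$ combined with $\nabla \risk(\theta^{(t)}) \to 0$ (which is itself a consequence of almost-sure convergence $\theta^{(t)} \to \theta\opt$) yields $\E[\xi_t \xi_t^\top \mid \mc{F}_{t-1}] \cas \Sigma$.

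Next I would verify the local regularity of $\risk$ required by Polyak–Juditsky: near $\theta\opt$, a Taylor expansion gives $\nabla \risk(\theta) = \nabla^2 \risk(\theta\opt)(\theta - \theta\opt) + o(\ltwo{\theta - \theta\opt})$, which is exactly the Hurwitz linearization condition, with the matrix $A = \nabla^2 \risk(\theta\opt)$ positive definite by hypothesis and therefore stable in their sense. The stepsize condition $\stepsize_t \propto t^{-\beta}$, $\beta \in (1/2, 1)$, is assumed directly.

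The last step is invocation: with all hypotheses verified, Theorem~2 of~\cite{PolyakJu92} yields
\begin{equation*}
\sqrt{T}\left(\bar{\theta}^{(T)} - \theta\opt\right) \cd \Normal{0}{A^{-1} \Sigma A^{-1}},
\end{equation*}
with $A = \nabla^2 \risk(\theta\opt)$, which is the claimed limit. The only nontrivial obstacle is establishing almost-sure convergence $\theta^{(t)} \to \theta\opt$ (needed to pass from $\E[g_t g_t^\top \mid \mc{F}_{t-1}] \cas \Sigma$ to the analogous statement for $\xi_t$), but this follows from a standard Robbins–Siegmund supermartingale argument using the quadratic growth of $\risk$ near $\theta\opt$, the second-moment bound on $g_t$, and $\sum \stepsize_t = \infty$, $\sum \stepsize_t^2 < \infty$.
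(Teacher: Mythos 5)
Your proposal is correct and follows the same route as the paper, which offers no separate argument: the result is stated as a direct citation of Theorem~2 of Polyak and Juditsky, with the listed conditions (local $\mc{C}^2$ smoothness, $\nabla^2 \risk(\theta\opt) \succ 0$, unbiasedness, the conditional second-moment bound, covariance stabilization, and $\stepsize_t \propto t^{-\beta}$, $\beta \in (1/2,1)$) serving exactly as the hypotheses you verify. Your explicit martingale decomposition and the Robbins--Siegmund remark are just the standard bookkeeping behind that invocation, so there is nothing to reconcile.
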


We now consider the impact that local privacy has on this result.  Let $M$
be a local privatizing mechanism~\eqref{eqn:privatize-local-update},
and define $Z(\theta; x) = M( \nabla\ell( \theta; x))$.
We assume that each application of the mechanism $M$ is (conditional
on the pair $(\theta, x)$) independent of all others.
In this case, the stochastic gradient update becomes
\begin{equation*}
  \theta^{(t+1)} \gets \theta^{(t)} - \stepsize_t  \cdot Z(\theta^{(t)}; X_t).
\end{equation*}
In all of our privatization schemes to come, we have continuity of the
privatization $Z$ in $\theta$ so that $\lim_{\theta \to \theta\opt}
\E[Z(\theta; X) Z(\theta; X)^\top] = \E[Z(\theta\opt; X) Z(\theta\opt;
  X)^\top]$. Additionally, we have the unbiasedness---as we show---that
conditional on $\theta$ and $x$, $\E[Z(\theta; x)] = \nabla \loss(\theta;
x)$.  When we make the additional assumption that the gradients of the loss
are bounded---which holds, for example, for logistic regression as long as
the data vectors are bounded---we have the following a consequence of
Corollary~\ref{corollary:polyak-juditsky}.
\begin{corollary}
  \label{corollary:priv-asympt}
  Let the conditions of
  Corollary~\ref{corollary:polyak-juditsky} and the preceding
  paragraph hold.
  Assume that $\sup_{\theta \in
    \Theta} \sup_{x \in \cX} \ltwo{\nabla \ell(\theta; x)} \leq r_{\max} <
  \infty$. Let
  $\Sigma^{\textup{priv}} \defeq \E[Z(\theta\opt; X) Z(\theta\opt; X)^\top]$.
  Then
  \begin{equation*}
    \sqrt{T} \left(\bar{\theta}^{(T)}  - \theta\opt \right)
    \cd \Normal{0}{ \nabla^2L(\theta\opt)^{-1}
      \Sigma^{\rm priv}  \nabla^2 L(\theta\opt)^{-1}}.
  \end{equation*}
\end{corollary}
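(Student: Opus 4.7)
The plan is to apply Corollary~\ref{corollary:polyak-juditsky} directly to the stochastic gradient sequence $g_t \defeq Z(\theta^{(t)}; X_t)$, so the entire argument reduces to verifying the three hypotheses of that corollary for this privatized gradient with $\Sigma$ replaced by $\Sigma^{\textup{priv}}$. The smoothness of $\risk$ near $\theta\opt$, the positive definiteness $\nabla^2 \risk(\theta\opt) \succ 0$, and the stepsize schedule $\stepsize_t \propto t^{-\beta}$ are inherited verbatim; the remaining work is to check (i) conditional unbiasedness $\E[g_t \mid \mc{F}_{t-1}] = \nabla \risk(\theta^{(t)})$, (ii) the quadratic growth bound $\E[\ltwos{g_t}^2 \mid \mc{F}_{t-1}] \le C(1 + \ltwos{\theta^{(t)} - \theta\opt}^2)$, and (iii) the almost-sure convergence $\E[g_t g_t^\top \mid \mc{F}_{t-1}] \cas \Sigma^{\textup{priv}}$.

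For (i), I enlarge the filtration to include $X_t$ and apply the tower property: by assumption $\E[Z(\theta; x) \mid \theta, x] = \nabla \loss(\theta; x)$ and each application of $M$ is independent of everything else conditional on $(\theta^{(t)}, X_t)$, so
\[
\E[g_t \mid \mc{F}_{t-1}] = \E\bigl[\nabla \loss(\theta^{(t)}; X_t) \mid \mc{F}_{t-1}\bigr] = \nabla \risk(\theta^{(t)}),
\]
since $\theta^{(t)} \in \mc{F}_{t-1}$ and $X_t \simiid P$ is independent of $\mc{F}_{t-1}$. For (ii), the bounded-gradient hypothesis $\ltwo{\nabla \loss(\theta; x)} \le r_{\max}$ combined with the fact that each privatization mechanism developed in Section~\ref{sec:mechanisms} has second moment controlled by the squared norm of its input yields a uniform bound $\sup_\theta \E[\ltwos{Z(\theta; X)}^2] \le C r_{\max}^2$, which is trivially dominated by $C'(1 + \ltwos{\theta^{(t)} - \theta\opt}^2)$. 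This moment bound, together with the local strong convexity implied by $\nabla^2 \risk(\theta\opt) \succ 0$ and a standard Robbins--Siegmund supermartingale argument, also delivers the a.s.\ consistency $\theta^{(t)} \cas \theta\opt$ that Polyak and Juditsky's proof requires.

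For (iii), I invoke the assumed continuity of the map $\theta \mapsto \E[Z(\theta; X) Z(\theta; X)^\top]$ at $\theta\opt$. Combined pathwise with $\theta^{(t)} \cas \theta\opt$, the continuous mapping theorem gives
\[
\E[g_t g_t^\top \mid \mc{F}_{t-1}] = \E[Z(\theta; X) Z(\theta; X)^\top]\Big|_{\theta = \theta^{(t)}} \cas \Sigma^{\textup{priv}},
\]
which matches the final Polyak--Juditsky hypothesis. Applying Corollary~\ref{corollary:polyak-juditsky} then yields the stated CLT with sandwich covariance $\nabla^2 \risk(\theta\opt)^{-1} \Sigma^{\textup{priv}} \nabla^2 \risk(\theta\opt)^{-1}$.

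The main subtlety is (ii): the assumption $\sup_{\theta, x} \ltwo{\nabla \loss(\theta; x)} \le r_{\max}$ alone does not bound $\E[\ltwos{Z(\theta; X)}^2]$, because an arbitrary unbiased privatization could in principle inject noise whose variance blows up in $\theta$. The argument therefore leans implicitly on the bounded-input--bounded-variance structure of the specific mechanisms constructed in Section~\ref{sec:mechanisms}; without such structural control, $\Sigma^{\textup{priv}}$ could fail to be finite and the corollary would not apply.
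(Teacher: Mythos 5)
Your proposal is correct and follows the same route the paper takes: the paper treats this corollary as an immediate application of Corollary~\ref{corollary:polyak-juditsky} to the privatized gradients $Z(\theta^{(t)}; X_t)$, relying on exactly the unbiasedness, bounded-second-moment, and continuity-of-$\E[Z Z^\top]$ conditions you verify (the paper gives no separate proof). Your closing remark that the moment bound rests on the structure of the specific mechanisms of Section~\ref{sec:mechanisms}, not on $r_{\max}$ alone, is an accurate reading of what the paper implicitly assumes.
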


Key to Corollary~\ref{corollary:priv-asympt} is that---as we describe in the
next section---we can design mechanisms for which
\begin{equation*}
  \Sigma^{\textup{priv}}
  \preceq \Sigma\subopt
  + C \left[\frac{d}{\diffp \wedge \diffp^2}
    \Sigma\subopt + \frac{\tr(\Sigma\subopt)}{(\diffp \wedge \diffp^2)}
    I\right]
\end{equation*}
for a numerical constant $C < \infty$, where $\Sigma\subopt = \cov(\nabla
\loss(\theta\opt; X))$.  This is (in a sense) the ``correct'' scaling of the
problem with dimension and local privacy level $\diffp$, is
minimax optimal~\cite{DuchiRo19}, and is in contrast
to previous work in local privacy~\cite{DuchiJoWa18}. Describing this more
precisely requires description of our privacy mechanisms and alternatives,
to which we now turn.



\section{Separated Private Mechanisms for High Dimensional Vectors}
\label{sec:mechanisms}

The main application of the privacy mechanisms we develop is to minimax
rate-optimal private
(distributed) statistical learning scenarios; accordingly,
we now carefully consider mechanisms to use in the private
updates~\eqref{eqn:privatize-local-update}.
Motivated by the difficulties we outline in Section~\ref{sec:local-is-hard}
for locally private model fitting---in particular, that estimating the
\emph{magnitude} of a gradient or influence function is challenging,
and the scale of an update is essentially important---we
consider mechanisms that transmit information $W \in \R^d$ by privatizing a
pair $(U, R)$, where $U = W / \ltwo{W}$ is the direction and $R = \ltwo{W}$
the magnitude, letting $Z_1 = M_1(U)$ and $Z_2 = M_2(R)$ be their privatized
versions (see Fig.~\ref{fig:graphical-structure}). We consider mechanisms
satisfying the following definition.
\begin{figure}[h!]
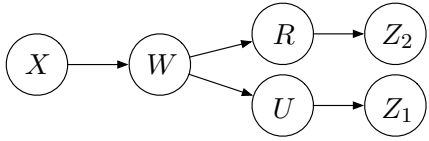

  \centering
  \begin{overpic}[width=.35\columnwidth]{
      Figures/x-w-u-r-graphical}
    \put(6,16){$X$}
    \put(34,16){$W$}
    \put(63,6){$U$}
    \put(63,23){$R$}
    \put(88,6){$Z_1$}
    \put(88,23){$Z_2$}
  \end{overpic}
  \caption{Markovian graphical structure between data $X$ and privatized pair $(Z_1, Z_2)$ \label{fig:graphical-structure}}
\end{figure}

\begin{definition}[Separated Differential Privacy]
  \label{definition:separated-privacy}
  A pair of mechanisms $M_1, M_2$ mapping from $\mc{U} \times \mc{R}$ to
  $\mc{Z}_1 \times \mc{Z}_2$ (i.e.\ a channel with the Markovian structure
  of Fig.~\ref{fig:graphical-structure}) is $(\diffp_1,
  \diffp_2)$-\emph{separated differentially private} if $M_i$ is
  $\diffp_i$-locally differentially private~\eqref{eqn:diffp} for $i =
  1, 2$.
\end{definition}
\noindent
The basic composition properties of differentially private
channels~\cite{DworkRo14} guarantee that $M = (M_1, M_2)$ is $(\diffp_1 +
\diffp_2)$-locally differentially private, so such mechanisms enjoy the
benefits of differentially private algorithms---group privacy, closure under
post-processing, and composition protections~\citep{DworkRo14}---and they
satisfy the reconstruction guarantees we detail in
Section~\ref{sec:reconstruction-protection-lemmas}.

The key to efficiency in all of our applications is to have accurate
estimates of the actual update $\Delta \in \R^d$---frequently simply the
stochastic gradient $\nabla \loss(\theta; x)$. We consider two regimes of
the most interest: Euclidean settings~\cite{PolyakJu92, NemirovskiJuLaSh09}
(where we wish to privatize vectors belonging to $\ell_2$ balls) and the
common non-Euclidean scenarios arising in high-dimensional estimation and
optimization (mirror descent~\cite{NemirovskiJuLaSh09, BeckTe03}), where we
wish to privatize vectors belonging to $\ell_\infty$ balls.
We thus describe mechanisms for releasing unit vectors,
after which we show how to release scalar
magnitudes; the combination allows us to release (optimally accurate)
unbiased vector estimates, which we can employ in distributed and online
statistical learning problems. We conclude the section by revisiting the
asymptotic normality results of Corollary~\ref{corollary:priv-asympt},
which unifies our entire
development, providing a minimax optimal convergence guarantee---for all
privacy regimes $\diffp$---better than those available for previous locally
differentially private learning procedures.



\subsection{Privatizing unit $\ell_2$ vectors with high accuracy}
\label{sec:private-l2}

\begin{figure}
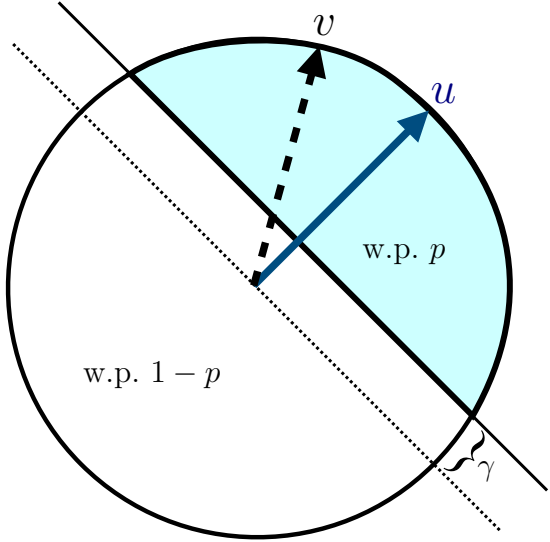

  \begin{center}
    \begin{tabular}{cc}
      \begin{overpic}[width=.5\columnwidth]{
          Figures/sampling-image}
        \put(76,75){\LARGE $\textcolor{darkblue}{u}$}
        \put(57,86){\LARGE $v$}
        \put(65,50){\large w.p.\ $p$}
        \put(20,30){\large w.p.\ $1 - p$}
        \put(83.5,15){\large $\gamma$}
      \end{overpic}
      &
      \begin{minipage}{.45\columnwidth}
        \vspace{-7cm}
        \caption{\label{fig:unit-vector-sampling} Private sampling scheme
          for the $\ell_2$ ball, $\PrivUnit$ in
          Alg.~\ref{alg:unit_mech}. Input unit vector $u$ is resampled,
          chosen uniformly from the shaded spherical cap---parameterized by
          the distance $\gamma \in [0, 1]$ from the equator---with
          probability $p$ and uniformly from the complement with probability
          $1 - p$.}
      \end{minipage}
    \end{tabular}
  \end{center}
\end{figure}

We begin with the Euclidean case, which arises in most classical
applications of stochastic gradient-like
methods~\cite{Zinkevich03,PolyakJu92, NemirovskiJuLaSh09}. In this case, we
have a vector $u \in \sphere^{d-1}$ (i.e.\ $\ltwo{u} = 1$), and we wish to
generate an $\diffp$-differentially private vector $Z$ with the property
that
\begin{equation*}
  \E[Z \mid u] = u
  ~~~ \mbox{for~all~} u \in \sphere^{d-1},
\end{equation*}
where the size $\ltwo{Z}$ is as small as possible to maximize the efficiency
in Corollary~\ref{corollary:priv-asympt}.

We modify the sampling strategy of \citet{DuchiJoWa18} to develop an optimal
mechanism. Given a vector $v \in \sphere^{d-1}$, we draw a vector $V$ from a
spherical cap $\{v \in \sphere^{d-1} \mid \<v, u\> \ge \gamma\}$ with some
probability $p \ge \half$ or from its complement $\{v \in \sphere^{d-1} \mid
\<v,u\> < \gamma\}$ with probability $1 - p$, where $\gamma \in [0, 1]$ and
$p$ are constants we shift to trade accuracy and privacy more precisely. In
Figure~\ref{fig:unit-vector-sampling}, we give a visual representation of this
mechanism, which we term $\PrivUnit$ (see Algorithm~\ref{alg:unit_mech}); in
the next subsection we demonstrate the choices of $\gamma$ and scaling
factors to make the scheme differentially private and unbiased. Given its
inputs $u, \gamma$, and $p$, Algorithm~\ref{alg:unit_mech} returns $Z$
satisfying $\E[Z \mid u] = u$. We set the quantity $m$ in
Eq.~\eqref{eqn:norm-of-W} to guarantee this normalization, where
\begin{equation*}
  B(x;\alpha,\beta) \defeq \int_{0}^x
  t^{\alpha - 1} (1- t)^{\beta-1}dt
  ~~ \mbox{where} ~~
  B(\alpha,\beta) \defeq
  B(1;\alpha,\beta) = \frac{\Gamma(\alpha)
    \Gamma(\beta)}{\Gamma(\alpha+\beta)}
\end{equation*}
denotes the incomplete beta function.  It is possible to sample from this
distribution using inverse CDF transformations and
continued fraction approximations to the log incomplete beta
function~\cite{PressFlTeVe92}.


\begin{algorithm}[t]
\caption{Privatized Unit Vector: $\PrivUnit$}
\label{alg:unit_mech}
\begin{algorithmic}
\Require $u \in \sphere^{d-1}$, $\gamma \in [0,1]$, $p \ge \half$.
\State Draw random vector $V$ according to the distribution
\State \begin{equation}
	  \label{eqn:w-flip-mechanism}
	  V = \begin{cases}
	    \mbox{uniform on } \{v \in \sphere^{d-1} \mid
	    \<v, u\> \ge \gamma\} & \mbox{with~probability~} p \\
	    \mbox{uniform on } \{v \in \sphere^{d-1} \mid
	    \<v, u\> < \gamma \} & \mbox{otherwise}.
	  \end{cases}
\end{equation}
\State Set $\alpha = \frac{d-1}{2}$, $\tau = \frac{1+\gamma}{2}$, and
\begin{equation}
  m = \frac{(1 - \gamma^2)^\alpha}{2^{d-2} (d - 1)}
  \left[\frac{p}{B(\alpha,\alpha) - B(\tau; \alpha,\alpha)}
    - \frac{1 - p}{B(\tau; \alpha, \alpha)}\right]
  \label{eqn:norm-of-W}
\end{equation}
\Return $Z = \frac{1}{m} \cdot V$
\end{algorithmic}
\end{algorithm}

In the remainder of this subsection, we describe the privacy preservation,
bias, and variance properties of
Algorithm~\ref{alg:unit_mech}.

\subsubsection{Privacy analysis}

Most importantly, Algorithm~\ref{alg:unit_mech} protects privacy for
appropriate choices of the spherical cap level $\gamma$.  Indeed, the next
result shows that $\gamma_\diffp \approx \sqrt{\diffp / d}$ is sufficient to
guarantee $\diffp$-differential privacy.

\begin{theorem}
  \label{thm:PrivUnitPriv}
  Let $\gamma \in [0,1]$ and $p_0 = \frac{e^{\diffp_0}}{1 +
    e^{\diffp_0}}$. Then algorithm \emph{$\PrivUnit(\cdot, \gamma, p_0)$}
  is $(\diffp+\diffp_0)$-differentially private whenever $\gamma \ge 0$ is
  such that
  \begin{subequations}
    \label{eqn:sufficient-gamma}
     \begin{equation}
      \label{eqn:small-suff-gamma}
      \diffp \ge
      \log\frac{ 1+\gamma \cdot \sqrt{ 2(d-1) / \pi } }{
        \hinge{1 - \gamma \cdot \sqrt{ 2(d-1) / \pi } } },
      ~~~ \mbox{i.e.} ~~~
      \gamma \le \frac{e^\diffp - 1}{e^\diffp + 1} \sqrt{\frac{\pi}{2(d-1)}},
     \end{equation}
     or
     \begin{equation}
       \label{eqn:big-suff-gamma}
       \diffp \ge \half \log(d) + \log 6 - \frac{d - 1}{2}
       \log (1 - \gamma^2) + \log \gamma
       ~~ \mbox{and} ~~
       \gamma \ge \sqrt{\frac{2}{d}}.
     \end{equation}
  \end{subequations}
\end{theorem}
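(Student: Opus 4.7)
The plan is to reduce the $(\diffp + \diffp_0)$-privacy claim to a purely geometric estimate on spherical-cap areas, and then to obtain that estimate in the two regimes separately. Let $V$ be the raw sample drawn in Algorithm~\ref{alg:unit_mech}, so $Z = V/m$ is just a deterministic rescaling of $V$ (by a quantity depending only on $d, \gamma, p$), and privacy of $Z$ is equivalent to privacy of $V$. Writing $A_+(\gamma)$ and $A_-(\gamma)$ for the surface measures of the cap $\{v \in \sphere^{d-1} : \<v,u\> \ge \gamma\}$ and its complement (which, by rotational invariance, depend only on $\gamma$ and $d$), the density of $V$ with respect to surface measure is
\begin{equation*}
  f_u(v) = \frac{p}{A_+(\gamma)} \ones\{\<v,u\> \ge \gamma\}
  + \frac{1-p}{A_-(\gamma)} \ones\{\<v,u\> < \gamma\}.
\end{equation*}
For any $u, u' \in \sphere^{d-1}$ and any $v \in \sphere^{d-1}$, the worst-case likelihood ratio $f_u(v)/f_{u'}(v)$ is attained when $v$ lies in the cap of $u$ but in the complement of $u'$, giving
\begin{equation*}
  \sup_{u, u', v} \frac{f_u(v)}{f_{u'}(v)}
  = \frac{p}{1-p} \cdot \frac{A_-(\gamma)}{A_+(\gamma)}.
\end{equation*}
With $p = p_0 = e^{\diffp_0}/(1 + e^{\diffp_0})$, one has $p/(1-p) = e^{\diffp_0}$, so it suffices to show $A_-(\gamma)/A_+(\gamma) \le e^\diffp$ under either condition~\eqref{eqn:small-suff-gamma} or~\eqref{eqn:big-suff-gamma}.

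The next step is to estimate the cap ratio via the distribution of $\<V, u\>$ when $V \sim \uniform(\sphere^{d-1})$. For the small-$\gamma$ regime~\eqref{eqn:small-suff-gamma}, I would invoke a ``nearly uniform near the equator'' bound of the form $\P(\<V, u\> \ge \gamma) \ge \half - \gamma \sqrt{(d-1)/(2\pi)}$ (available in the appendix lemmas on the uniform distribution on $\sphere^{d-1}$ that the paper already cites). This yields
\begin{equation*}
  \frac{A_-(\gamma)}{A_+(\gamma)}
  \le \frac{1 + \gamma \sqrt{2(d-1)/\pi}}{\hinge{1 - \gamma \sqrt{2(d-1)/\pi}}},
\end{equation*}
and taking logarithms reproduces~\eqref{eqn:small-suff-gamma} exactly; inverting this inequality gives the stated bound on $\gamma$. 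For the large-$\gamma$ regime~\eqref{eqn:big-suff-gamma}, I would instead use a sharp two-sided tail bound on the cap probability, trivially bounding $A_-(\gamma) \le $ total surface area and lower bounding $A_+(\gamma)$ via the standard estimate $\P(\<V,u\> \ge \gamma) \ge c \cdot (1-\gamma^2)^{(d-1)/2}/(\gamma\sqrt{d})$ valid for $\gamma \ge \sqrt{2/d}$. Taking the ratio and logarithms produces a bound of the form $\half \log d - \frac{d-1}{2}\log(1-\gamma^2) + \log\gamma + \text{const}$, and tracking the numerical constant carefully gives the $\log 6$ in~\eqref{eqn:big-suff-gamma}.

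The main obstacle will be obtaining tight enough two-sided bounds on the spherical-cap probability in both regimes with the correct constants, particularly the constant $\log 6$ appearing in~\eqref{eqn:big-suff-gamma}; this requires a careful reduction (likely through the incomplete beta representation used in~\eqref{eqn:norm-of-W}) and a matching lower bound on $B(\alpha,\alpha) - B(\tau;\alpha,\alpha)$. The small-$\gamma$ case is more mechanical---essentially a Berry--Esseen-style linearization of the equatorial density---while the large-$\gamma$ case demands the Mills-ratio-style tail control. Once both cap estimates are in hand, applying the reduction in the first paragraph and composing with the factor $p_0/(1-p_0) = e^{\diffp_0}$ produces the claimed $(\diffp + \diffp_0)$-differential privacy guarantee.
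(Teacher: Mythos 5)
Your proposal is correct and follows essentially the same route as the paper: reduce privacy of $Z$ to privacy of $V$, bound the worst-case density ratio by $e^{\diffp_0}$ times the ratio of complement-cap to cap probabilities, and then control that ratio in the two regimes exactly as the paper does—via the equatorial bound $\P(\<U,u\>\ge\gamma)\ge\half-\gamma\sqrt{(d-1)/(2\pi)}$ (Lemma~\ref{lemma:small-threshold-uniform-bound-v2}) for small $\gamma$, and the tail lower bound $\P(U\in C(\gamma,u))\ge\frac{1}{6\gamma\sqrt d}(1-\gamma^2)^{(d-1)/2}$ (Lemma~\ref{lemma:uniform-bounds}), with the trivial bound of $1$ on the numerator, for $\gamma\ge\sqrt{2/d}$, which is precisely where the $\log 6$ in~\eqref{eqn:big-suff-gamma} comes from.
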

\begin{proof}
  We again leverage the results in
  Appendix~\ref{sec:preliminaries-uniform-concentration}.
  The random vector $V \in \sphere^{d-1}$ in
  Alg.~\ref{alg:unit_mech} has density (conditional on $u \in
  \sphere^{d-1}$)
  \begin{equation*}
    p(v \mid u)
    \propto \begin{cases} p_0 / \P(\<U, u\> \ge \gamma) & \mbox{if~}
      \<v, u \> \ge \gamma \\
      (1 - p_0) / \P(\<U, u\> < \gamma) & \mbox{if~} \<v, u\> < \gamma.
    \end{cases}
  \end{equation*}
  We use that
  $\gamma \mapsto \P(\<U, u\> < \gamma)$ is increasing in $\gamma$ to obtain
  (by definition of $p_0$) that
  \begin{equation}
    \label{eqn:little-burrito}
    \frac{p(v \mid u)}{p(v \mid u')} \le e^{\diffp_0} \cdot \frac{\P(\<U,
      u'\> < \gamma)}{\P(\<U, u\> \ge \gamma)},
    ~~ \mbox{all~} u, u' \in \sphere^{d-1}.
  \end{equation}
  It is thus sufficient to prove that the last fraction has upper bound
  $e^\diffp$.

  We consider two cases in inequality~\eqref{eqn:little-burrito}.
  In the first, suppose that $\gamma \ge \sqrt{2/d}$. Then
  Lemma~\ref{lemma:uniform-bounds} implies
  \begin{equation*}
    \frac{\P(\<U, u'\> < \gamma)}{\P(\<U, u\> \ge \gamma)}
    \le \frac{6 \gamma \sqrt{d}}{(1 - \gamma^2)^\frac{d-1}{2}},
  \end{equation*}
  which is bounded by $e^\diffp$ when $\log 6 + \half \log d + \log \gamma -
  \frac{d-1}{2} \log(1 - \gamma^2) \le \diffp$.  In the second case,
  Lemma~\ref{lemma:small-threshold-uniform-bound-v2} implies
  \begin{equation*}
    \frac{\P(\<U, u'\> < \gamma)}{\P(\<U, u\> \ge \gamma)}
    \le \frac{1 + \gamma \sqrt{2(d-1) / \pi}}{
      1 - \gamma \sqrt{2(d - 1)/\pi}},
  \end{equation*}
  which is bounded by $e^\diffp$ if and only if $\gamma \le \frac{e^\diffp -
    1}{e^\diffp + 1} \sqrt{\pi / (2(d-1))}$.
\end{proof}

\subsubsection{Bias and variance}

\newcommand{\ball}{\mathbb{B}}

We now turn to optimality and error properties of
Algorithm~\ref{alg:unit_mech}. Our first result is an lower bound
on the $\ell_2$-accuracy of any private mechanism, which
follows from the paper of~\citet{DuchiRo19}.


\begin{proposition}
  \label{proposition:noise-good-means}
  Assume that the mechanism $M: \sphere^{d-1} \to \R^d$ is any of
  $\diffp$-differentially private, $(\diffp, \delta)$-differentially private
  with $\delta \le \half$, or $(\diffp \wedge \diffp^2,
  \renparam)$-R\'{e}nyi differentially private for input $x \in
  \sphere^{d-1}$, all with $\diffp \le d$. Then for $X$ uniformly
  distributed in $\{\pm 1 / \sqrt{d}\}^d$,
  \begin{equation*}
    \E[\ltwo{M(X) - X}^2] \ge c \cdot
    \frac{d}{\diffp \wedge \diffp^2} \wedge 1,
  \end{equation*}
  where $c > 0$ is a numerical constant. Moreover, if $M$ is unbiased,
  then $\E[\ltwo{M(X) - X}^2] \ge c \frac{d}{\diffp \wedge \diffp^2}$.
\end{proposition}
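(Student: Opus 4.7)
The plan is to reduce the Bayes reconstruction bound to an information-theoretic argument based on the strong data-processing inequalities of \citet{DuchiRo19}. Writing the Bayes MMSE and expanding using $\mathrm{Var}(X_j) = 1/d$ under the uniform prior,
\[
\E \ltwo{M(X) - X}^2 \ge \E \ltwo{\E[X \mid Z] - X}^2 = 1 - \sum_{j=1}^d \mathrm{Var}(\E[X_j \mid Z]),
\]
so it suffices to upper bound $\sum_j \mathrm{Var}(\E[X_j \mid Z])$. For binary $X_j \in \{\pm 1/\sqrt{d}\}$ uniform, the elementary inequality $(2p-1)^2 \le 2(\log 2 - H(p))$ gives $\mathrm{Var}(\E[X_j \mid Z]) \le (2/d)\, I(X_j; Z)$; summing and using subadditivity of mutual information for independent coordinates yields $\sum_j I(X_j; Z) \le I(X; Z)$.

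The crucial step, supplied by the SDPI machinery in \citet{DuchiRo19}, is a capacity bound of the form $I(X; Z) \le c \cdot (\diffp \wedge \diffp^2)$ that holds uniformly for pure $\diffp$-DP, $(\diffp, \delta)$-DP with $\delta \le 1/2$, and $(\diffp \wedge \diffp^2, \renparam)$-R\'enyi DP, and which exploits the continuous-domain structure of the DP constraint on $\sphere^{d-1}$ (a purely discrete argument on the cube would be vacuous when $\diffp$ approaches $d$). Combining with the MMSE expansion gives the main claim, with the $\wedge 1$ clip arising from the trivial bound $\sum_j \mathrm{Var}(\E[X_j \mid Z]) \le 1$.

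For the unbiased sharpening, the clip disappears since $\what{X} \equiv 0$ is inconsistent with $\E[M(X) \mid X] = X$. I would instead lower bound $\E\ltwo{M(X) - X}^2 \ge \E \,\mathrm{tr}\,\mathrm{Cov}(M(X) \mid X)$ and invoke the unbiased-estimator variance lower bounds for locally private mean estimation in \citet{DuchiRo19}, which deliver the full $d/(\diffp \wedge \diffp^2)$ rate without truncation. The main obstacle is the SDPI step: the trivial bound $I(X; Z) \le \diffp$ from pure DP is not tight in the $\diffp \in [1, d]$ regime, so the sharper capacity bound---and a uniform treatment absorbing the additive $\delta$ correction for approximate DP and the R\'enyi conversion---are essential to obtain the claimed $(\diffp \wedge \diffp^2)$ scaling across all three privacy variants.
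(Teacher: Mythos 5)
Your first two steps (the MMSE expansion under the uniform prior, the per-coordinate Pinsker bound, and subadditivity of information for independent coordinates) are fine, but the step you yourself flag as crucial does not exist in the form you need, and the paper does not attempt it. The paper's proof of the first claim is a direct appeal to the minimax result \citet[Corollary~3]{DuchiRo19}; it never passes through a mutual-information ``capacity'' bound, and for good reason. First, no bound of the form $I(X;Z) \le c\,(\diffp \wedge \diffp^2)$ holds for the $(\diffp,\delta)$-branch: the mechanism that releases $X$ exactly with probability $1/2$ and otherwise outputs a fixed point is $(0, 1/2)$-differentially private yet has $I(X;Z)$ of order $d$, so the mutual-information route cannot even engage that case (the proposition survives there only because the clipped bound equals a constant, which this mechanism's risk of $1/2$ happily satisfies---but your chain $\E\ltwo{M(X)-X}^2 \ge 1 - \tfrac{2}{d} I(X;Z)$ proves nothing for it). Second, even for pure $\diffp$-DP your argument needs $I(X;Z) \le (1-c)\,d/2$ for all $\diffp \le d$, and this fails in the upper part of the range: randomized response over the $2^d$ cube corners (output the true corner with probability $e^{\diffp}/(e^{\diffp} + 2^d - 1)$, else a uniform other corner) is $\diffp$-DP and has $I(X;Z)$ essentially $\min\{\diffp, d\log 2\}$, so the factor-of-two loss in the Pinsker step makes your bound vacuous once $\diffp \gtrsim d/2$. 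In other words, the sharper uniform capacity bound you describe as ``supplied by the SDPI machinery'' is not supplied by \citet{DuchiRo19} and cannot be true; their lower bound is proved by different (estimation-theoretic, per-coordinate testing) arguments, which is exactly why the paper cites the corollary rather than re-deriving it through $I(X;Z)$.

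The unbiased sharpening has a similar gap. Writing $\E\ltwo{M(X)-X}^2 = \E[\mathrm{tr}\,\mathrm{Cov}(M(X)\mid X)]$ under unbiasedness is correct but is just a restatement; the ``unbiased-estimator variance lower bounds for locally private mean estimation'' you then invoke are not stated in \citet{DuchiRo19} for a single fixed mechanism, so at this point you are citing the conclusion. The paper closes this with a short reduction you should reproduce: if some unbiased $M$ had $\E\ltwo{M(X)-X}^2 \le c_0\, d/(\diffp\wedge\diffp^2)$, then averaging $n$ i.i.d.\ privatized reports $Z_i = M(X_i)$ of samples $X_i$ with mean $\theta$ would give $\E\ltwos{\wb{Z}_n - \theta}^2 \lesssim c_0\, d /(n(\diffp\wedge\diffp^2))$, contradicting the $n$-sample minimax lower bound of \citet[Corollary~3]{DuchiRo19} once $c_0$ is a sufficiently small constant. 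That reduction (or an equivalent argument) is the missing content; as written, your proof of both parts ultimately rests on capacity and variance statements that the cited reference does not provide and that, in the approximate-DP and large-$\diffp$ regimes, are false.
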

\begin{proof}
  The first result is immediate by the result~\cite[Corollary~3]{DuchiRo19}.
  For the unbiasedness lower bound, note that if for a constant $c_0 < c$
  we have $\E[\ltwo{M(X) - X}^2] \le c_0 \frac{d}{\diffp \wedge \diffp^2}$,
  then given a sample $X_1, \ldots, X_n \in \{\pm 1 / \sqrt{d}\}^d$ drawn
  i.i.d.\ from a population with mean $\theta = \E[X_i]$, setting
  $Z_i = M(X_i)$ we would
  have $\E[\ltwo{\wb{Z}_n - \theta}^2]
  \le \frac{c_0 d}{n (\diffp \wedge \diffp^2)}$. For small enough
  constant $c_0$, this contradicts~\cite[Cor.~3]{DuchiRo19}.
\end{proof}

As a consequence of Proposition~\ref{proposition:noise-good-means}, we can
show that Algorithm~\ref{alg:unit_mech} is order optimal for all privacy
levels $\diffp \le d \log 2 - \log \frac{4}{3}$, improving on all previously
known mechanisms for (locally) differentially private vector release. To see
this, we show that $\PrivUnit$ indeed produces an unbiased estimator with
small norm. See Appendix~\ref{sec:proof-l2-unbiased} for a proof
of the next lemma.
\begin{lemma}
  \label{lem:unbiased}
  Let $Z = \PrivUnit(u,\gamma, p)$ for some $u \in \sphere^{d-1}$, $\gamma
  \in [0,1]$, and $p \in [\half, 1]$. Then $\E[Z] = u$.
\end{lemma}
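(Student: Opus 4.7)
}

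The plan is to exploit the rotational symmetry of $\PrivUnit$ to reduce the problem to a one-dimensional integral, then verify directly that the scale factor $m$ in equation~\eqref{eqn:norm-of-W} is exactly the expectation of $\<V, u\>$. First I would observe that the conditional distribution of $V$ given $u$ depends on $V$ only through the inner product $\<V, u\>$: it is uniform on a spherical cap (or its complement) centered at $u$. Hence, for any rotation $R \in O(d)$ with $Ru = u$, the random vector $RV$ has the same distribution as $V$, which forces $\E[V \mid u]$ to be rotationally invariant about $u$ and therefore parallel to $u$. Writing $\E[V \mid u] = c \, u$, we get $c = \E[\<V, u\> \mid u]$, so it suffices to show $c = m$, and then $\E[Z] = \E[V]/m = u$.

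Next I would compute $\E[\<V, u\>]$ using the marginal density of $T = \<U, u\>$ for $U \sim \uniform(\sphere^{d-1})$, which (up to normalization) is proportional to $(1 - t^2)^{(d-3)/2}$ on $[-1,1]$, i.e., proportional to $(1-t^2)^{\alpha - 1}$ for $\alpha = (d-1)/2$. Conditioning on whether $V$ lies in the cap,
\begin{equation*}
  \E[\<V,u\>] = p \cdot \frac{\int_\gamma^1 t (1-t^2)^{\alpha - 1}\, dt}{\int_\gamma^1 (1-t^2)^{\alpha - 1}\, dt} + (1-p) \cdot \frac{\int_{-1}^\gamma t (1-t^2)^{\alpha - 1}\, dt}{\int_{-1}^\gamma (1-t^2)^{\alpha - 1}\, dt}.
\end{equation*}
The numerators admit elementary evaluation: the substitution $w = 1 - t^2$ (handled in two pieces for the second integral to account for non-monotonicity) gives $\int_\gamma^1 t(1-t^2)^{\alpha-1}\,dt = (1-\gamma^2)^\alpha/(2\alpha)$ and $\int_{-1}^\gamma t(1-t^2)^{\alpha-1}\,dt = -(1-\gamma^2)^\alpha/(2\alpha)$.

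For the denominators I would apply the substitution $s = (1+t)/2$, so that $1-t^2 = 4s(1-s)$ and $dt = 2\,ds$, converting the tail integrals into incomplete beta functions: $\int_\gamma^1 (1-t^2)^{\alpha - 1}\,dt = 2^{d-2}\bigl[B(\alpha,\alpha) - B(\tau;\alpha,\alpha)\bigr]$ and $\int_{-1}^\gamma (1-t^2)^{\alpha-1}\,dt = 2^{d-2} B(\tau;\alpha,\alpha)$ with $\tau = (1+\gamma)/2$. Plugging all four pieces back in, the $p$ and $1-p$ contributions combine with a common sign, and using $2\alpha = d-1$ yields exactly
\begin{equation*}
  \E[\<V,u\>] = \frac{(1-\gamma^2)^\alpha}{2^{d-2}(d-1)}\left[\frac{p}{B(\alpha,\alpha) - B(\tau;\alpha,\alpha)} - \frac{1-p}{B(\tau;\alpha,\alpha)}\right] = m,
\end{equation*}
which matches Eq.~\eqref{eqn:norm-of-W}. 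The main ``obstacle'' here is genuinely only careful bookkeeping---matching the normalization constants of the uniform density on $\sphere^{d-1}$, the two substitutions, and the sign from the complement integral---since the symmetry reduction makes the structural part of the argument essentially immediate.
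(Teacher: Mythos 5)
Your proposal is correct and follows essentially the same route as the paper's proof: rotational symmetry reduces $\E[V \mid u]$ to a scalar multiple of $u$, and that scalar is computed by conditioning on the spherical cap and its complement, where it matches the normalizer $m$ of Eq.~\eqref{eqn:norm-of-W} exactly. The only cosmetic difference is that you evaluate the one-dimensional integrals directly via the substitutions $w = 1-t^2$ and $s = (1+t)/2$, while the paper runs the same computation through the representation $\<U,u\> \sim 2B-1$ with $B$ Beta-distributed and the corresponding incomplete-beta identities.
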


Letting $\gamma$ satisfy either of the sufficient
conditions~\eqref{eqn:sufficient-gamma} in $\PrivUnit(\cdot, \gamma,
p_0)$, where $p_0 = e^{\diffp_0} / (1 + e^{\diffp_0})$,
ensures that it is $(\diffp+\diffp_0)$-differentially private.  With
these choices of $\gamma$, we
then have the following utility guarantee for the privatized vector $Z$.
\begin{proposition}
  Assume that $0 \le \diffp \le d$. Let $u \in \sphere^{d-1}$ and $p \ge
  \half$.  Then there exists a numerical constant $c < \infty$ such that if
  $\gamma$ saturates either of the two inequalities
  \eqref{eqn:sufficient-gamma}, then $\gamma \gtrsim \min\{\diffp / \sqrt{d},
  \sqrt{\diffp / d}\}$, and the output $Z = \PrivUnit(u,\gamma, p)$
  satisfies
  \begin{equation*}
    \ltwo{Z}
    \le c \cdot \sqrt{\frac{d}{\diffp} \vee
      \frac{d}{(e^\diffp - 1)^2}}.
  \end{equation*}
  Additionally, $\E[\ltwo{Z - u}^2] \lesssim
  \frac{d}{\diffp} \vee \frac{d}{(e^\diffp - 1)^2}$.
 \label{proposition:ltwo-utility}
\end{proposition}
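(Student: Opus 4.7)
My plan is to split into two regimes according to which of the sufficient conditions~\eqref{eqn:sufficient-gamma} is binding, reduce both claims to a single lower bound on the scaling constant $m$ from~\eqref{eqn:norm-of-W}, and then obtain that lower bound using the spherical-cap estimates of Appendix~\ref{sec:preliminaries-uniform-concentration}.

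The $\gamma$ estimate is elementary: saturating~\eqref{eqn:small-suff-gamma} gives $\gamma = \frac{e^\diffp - 1}{e^\diffp + 1}\sqrt{\pi/(2(d-1))} \asymp (\diffp \wedge 1)/\sqrt{d}$, while saturating~\eqref{eqn:big-suff-gamma} at equality forces the dominant right-hand term $-\frac{d-1}{2}\log(1-\gamma^2) \approx (d-1)\gamma^2/2$ to balance $\diffp$ (up to lower-order $\log \gamma$ and $\log d$ corrections absorbed via $\diffp \le d$), yielding $\gamma \asymp \sqrt{\diffp/d}$; together these give $\gamma \gtrsim \min\{\diffp/\sqrt{d}, \sqrt{\diffp/d}\}$. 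For the reduction, the intermediate vector $V$ in Algorithm~\ref{alg:unit_mech} has unit length, so $\ltwo{Z} = 1/m$ deterministically, and Lemma~\ref{lem:unbiased} gives $\E[Z] = u$, whence $\E[\ltwo{Z-u}^2] = 1/m^2 - 1$; noting that $d/\diffp \vee d/(e^\diffp - 1)^2 \asymp d/(\diffp \wedge \diffp^2)$, both claimed bounds follow once I establish $m \gtrsim \sqrt{(\diffp \wedge \diffp^2)/d}$.

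The main obstacle is this lower bound on $m$ from the explicit expression~\eqref{eqn:norm-of-W}. Setting $\tau = (1+\gamma)/2$ and $\alpha = (d-1)/2$, the substitution $t = (1+s)/2$ identifies $B(\alpha,\alpha) - B(\tau;\alpha,\alpha)$ and $B(\tau;\alpha,\alpha)$ with (constant multiples of) the spherical-cap probabilities $q = \P(\<U,u\>\ge \gamma)$ and $1-q$ for $U \sim \uniform(\sphere^{d-1})$, where $B(\alpha,\alpha) = B(1/2,\alpha)/(2 \cdot 4^{\alpha-1})$ and $B(1/2,\alpha) \asymp 1/\sqrt{d}$ by Stirling. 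This reduces $m$ to a fixed multiple of $(1-\gamma^2)^\alpha (p - q)/(\sqrt{d}\,q(1-q))$. With $p = e^{\diffp_0}/(1 + e^{\diffp_0})$ and $\diffp_0 \asymp \diffp$, I would then apply the sharp two-sided cap estimates of Lemmas~\ref{lemma:uniform-bounds} and~\ref{lemma:small-threshold-uniform-bound-v2}: in the small-$\gamma$ regime ($\gamma \lesssim 1/\sqrt{d}$), $q \approx 1/2 - c\gamma\sqrt{d}$ and $q(1-q) \asymp 1$, so $p - q \asymp \diffp + \gamma\sqrt{d} \asymp \diffp$ and $m \asymp \diffp/\sqrt{d}$; in the large-$\gamma$ regime, $q \asymp (1-\gamma^2)^\alpha/(\gamma \sqrt{d})$ and $1 - q \asymp 1$, so the $(1-\gamma^2)^\alpha$ factors cancel and $m \gtrsim \gamma \asymp \sqrt{\diffp/d}$. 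The delicate point is tracking the cancellation inside the bracket of~\eqref{eqn:norm-of-W} when $\diffp$ is small, since $p$ and $q$ are individually both close to $1/2$; the calibration $p = e^{\diffp_0}/(1 + e^{\diffp_0})$ is precisely what matches the cap-probability imbalance quantitatively and keeps $m$ at the right order in both regimes.
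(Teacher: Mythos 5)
Your plan is correct, and it is close in spirit to the paper's own proof (Appendix~\ref{sec:proof-ltwo-utility}): both arguments reduce the two claims to a lower bound on the normalizer $m$ in~\eqref{eqn:norm-of-W} (so that $\ltwo{Z} = 1/m$ deterministically and, by Lemma~\ref{lem:unbiased}, $\E[\ltwo{Z - u}^2] \le 1/m^2$), and both extract that bound from the spherical-cap estimates of Lemmas~\ref{lemma:uniform-bounds} and~\ref{lemma:small-threshold-uniform-bound-v2}. The mechanics differ: the paper works with the representation $m = p \gamma_+ + (1-p)\gamma_-$, where $\gamma_+ = \E[U_1 \mid U_1 \ge \gamma]$ and $\gamma_- = \E[U_1 \mid U_1 < \gamma]$ for $U \sim \uniform(\sphere^{d-1})$, uses $\gamma_+ \ge \gamma$ and $|\gamma_-| \le 1/\sqrt{d}$ to dispatch the large-$\diffp$ cases ($5 \le \diffp \le d$, split further around $2\log d$ to track the $\log d$ and $\log \gamma$ terms in~\eqref{eqn:big-suff-gamma}), and resorts to cap probabilities only for $\diffp \le 5$. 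You instead convert the incomplete-beta expression for $m$ into cap probabilities directly, obtaining $m \asymp d^{-1/2}(1-\gamma^2)^{(d-1)/2}(p - q)/\bigl(q(1-q)\bigr)$ with $q \defeq \P(\<U, u\> \ge \gamma)$; your normalization via the duplication formula ($2^{d-2}(d-1)B(\alpha,\alpha) \asymp \sqrt{d}$) is right, and the two regime estimates you sketch do close: in the large-$\gamma$ regime the upper bound $q \lesssim (1-\gamma^2)^{(d-1)/2}/(\gamma\sqrt{d})$ cancels the $(1-\gamma^2)^{(d-1)/2}$ factor and gives $m \gtrsim \gamma \asymp \sqrt{\diffp/d}$, while in the small-$\gamma$ regime $\half - q \gtrsim \gamma \sqrt{d}$ gives $m \gtrsim (e^\diffp - 1)/((e^\diffp+1)\sqrt{d})$. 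This buys you a single unified formula for $m$ in place of the paper's three-case analysis of $\gamma_+ + \gamma_-$, at the cost of slightly more bookkeeping in identifying the beta integrals with $q$ and $1 - q$.

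One point to repair: the proposition allows arbitrary $p \ge \half$, whereas you fix the calibrated $p = e^{\diffp_0}/(1+e^{\diffp_0})$, and your closing remark that this calibration is what ``keeps $m$ at the right order'' has the logic backwards. Since $p \mapsto p/q - (1-p)/(1-q)$ is increasing, the worst case is $p = \half$ (this is exactly the paper's WLOG reduction to bounding $2/(\gamma_+ + \gamma_-)$), and at $p = \half$ the needed gap is supplied entirely by $\half - q \gtrsim \gamma\sqrt{d} \asymp (e^\diffp-1)/(e^\diffp+1)$ from Lemma~\ref{lemma:small-threshold-uniform-bound-v2}; your own estimate $p - q \asymp \diffp + \gamma\sqrt{d}$ already contains this term, so adding the one-line monotonicity-in-$p$ observation makes your argument cover the stated generality.
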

\noindent
See Appendix~\ref{sec:proof-ltwo-utility} for a proof.

The salient point here is that the mechanism of
Alg.~\ref{alg:unit_mech} is order optimal---achieving unimprovable
dependence on the dimension $d$ and privacy level $\diffp$---and
substantially improving the earlier results of \citet{DuchiJoWa18}, who
provide a different mechanism that achieves order-optimal guarantees only
when $\diffp \lesssim 1$.
More generally, as we see presently, this mechanism forms
the lynchpin for minimax optimal stochastic optimization.


\subsection{Privatizing unit $\ell_\infty$ vectors with high accuracy}

We now consider privatization of vectors on the surface of the unit
$\ell_\infty$ box, $\cube^d \defeq [-1, 1]^d$, constructing an
$\diffp$-differentially private vector $Z$ with the property that $\E[Z \mid
  u] = u$ for all $u \in \cube^d$. The importance of this setting arises in
very high-dimensional estimation and statistical learning problems,
specifically those in which the dimension $d$ dominates the sample size $n$.
In these cases, mirror-descent-based methods~\cite{NemirovskiJuLaSh09,
  BeckTe03} have convergence rates for stochastic optimization problems that
scale as $\frac{M_\infty R_1 \sqrt{\log d}}{\sqrt{T}}$, where $M_\infty$
denotes the $\ell_\infty$-radius of the gradients $\nabla \loss$ and $R_1$
the $\ell_1$-radius of the constraint set $\Theta$ in the
problem~\eqref{eqn:pop-risk}. With the $\ell_2$-based mechanisms in the
previous section, we thus address the two most important scenarios for
online and stochastic optimization.

Our procedure parallels that for the $\ell_2$ case, except that we now use
caps of the hypercube rather than the sphere.  Given $u \in \cube^d$, we
first round each coordinate randomly to $\pm 1$ to generate $\what{u} \in
\{-1, 1\}^d$ with $\E[\what{u} \mid u] = u$.  We then sample a privatized
vector $V \in \{-1,+1 \}^{d}$ such that with probability $p \ge \half$ we
have $V \in \{v \mid \<v, \what{u} \> > \kappa\}$, while with the remaining
probability $V \in \{v \mid \<v, \what{u} \> \leq \kappa\}$, where $\kappa
\in \{0, \cdots, d-1 \}$.  We debias the resulting vector to construct
$Z$ satisfying $\E[Z \mid u] = u$. See
Algorithm~\ref{alg:unit_mechINFTY}.
\begin{algorithm}
\caption{Privatized Unit Vector: $\PrivUnitInfty$}
\label{alg:unit_mechINFTY}
\begin{algorithmic}
\Require $u \in [-1,1]^d$, $\kappa \in \{0,\cdots, d-1\}, p \ge \half$.
\State Round each coordinate of $u \in [-1, 1]^d$ to a corner of $\cube^{d}$:
\begin{equation*}
  \what{U}_j = \begin{cases}
    1 & \text{w.p.} ~ \frac{1 + u_j }{2} \\
    -1 & \text{otherwise}
  \end{cases}
  ~~ \mbox{for~} j \in [d].
\end{equation*}
\State Draw random vector $V$ via
\State \begin{equation}
  \label{eqn:w-flip-mechanismINFTY}
  V = \begin{cases}
    \mbox{uniform on } \{v \in \{-1,+1 \}^d \mid
    \<v, \what{U}\> > \kappa \} & \mbox{with~probability~} p \\
    \mbox{uniform on } \{v \in \{-1,+1 \}^d \mid
    \<v, \what{U}\> \leq \kappa \} & \mbox{otherwise}.
  \end{cases}
\end{equation}
\State Set $\tau = \frac{\lceil \frac{d+\kappa+1}{2} \rceil }{d}$
and
\begin{equation*}
  m = p\frac{\choose{d-1}{d \tau - 1} }{\sum_{\ell = \tau \cdot d}^d \choose{d}{\ell} }
  - (1 - p) \frac{ \choose{d-1}{d \tau -1 } }{\sum_{\ell = 0}^{d\tau -1 } \choose{d}{\ell} }
\end{equation*}
\Return $Z = \frac{1}{m} \cdot V$
\end{algorithmic}
\end{algorithm}


As in Section~\ref{sec:private-l2}, we divide our analysis into a proof
that Algorithm~\ref{alg:unit_mechINFTY} provides privacy and an argument
for its utility.

\subsubsection{Privacy analysis}

We follow a similar analysis to Theorem~\ref{thm:PrivUnitPriv} to give the
precise quantity that we need to bound to ensure (local) differential
privacy.  
We defer the proof to Appendix~\ref{appendix:proof-privacy-infinity}.

\begin{theorem}
  Let $\kappa \in \{0, \cdots, d-1 \}$, $p_0 = \frac{e^{\diffp_0}}{1 +
    e^{\diffp_0}}$ for some $\diffp_0 \ge 0$, and $\tau \defeq \frac{\lceil
    \frac{d+\kappa+1}{2} \rceil }{d}$.  If
  \begin{equation}
    \log\left( \sum_{\ell = 0}^{d \tau-1 } \choose{d}{\ell} \right) - \log\left( \sum_{\ell = d\tau }^{d} \choose{d}{\ell} \right) \leq \diffp
    \label{eq:sufficient_kappaGEN}
  \end{equation}
  then $\PrivUnitInfty( \cdot, \kappa, p_0)$ is
  $(\diffp+\diffp_0)$-differentially private.
  \label{theorem:privacy-infinity}
\end{theorem}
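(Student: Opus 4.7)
The plan is to follow the same template as Theorem~\ref{thm:PrivUnitPriv}: bound the likelihood ratio of the released vector under two arbitrary inputs $u, u' \in [-1,1]^d$, then use post-processing (since $Z = V/m$ and $m$ depends only on $p, d, \kappa$) to transfer the bound from $V$ to $Z$. The main difference is that we must marginalize out the intermediate randomized rounding $\hat{U}$, which is itself not differentially private in $u$; the trick is to exploit the fact that the conditional distribution of $V$ given $\hat{U}$ takes only two distinct values.

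The key combinatorial observation is the following. For any $\hat{u} \in \{-1,+1\}^d$ and any $v \in \{-1,+1\}^d$, $\langle v, \hat{u}\rangle = 2\ell - d$ where $\ell = |\{j : v_j = \hat{u}_j\}|$, so $\langle v, \hat{u}\rangle > \kappa$ iff $\ell \ge d\tau$ with $d\tau = \lceil(d+\kappa+1)/2\rceil$. Thus the cardinalities $N_1 \defeq |\{v : \langle v, \hat{u}\rangle > \kappa\}| = \sum_{\ell = d\tau}^d \binom{d}{\ell}$ and $N_0 \defeq 2^d - N_1 = \sum_{\ell=0}^{d\tau-1}\binom{d}{\ell}$ are independent of $\hat{u}$. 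Consequently, the sampling step in~\eqref{eqn:w-flip-mechanismINFTY} gives
\begin{equation*}
  \P(V = v \mid \hat{U} = \hat{u}) = \begin{cases} p/N_1 & \text{if } \langle v, \hat{u}\rangle > \kappa,\\ (1-p)/N_0 & \text{otherwise}, \end{cases}
\end{equation*}
taking only two values. Since $p \ge \tfrac{1}{2}$ and $N_1 \le N_0$ (because $d\tau > d/2$ whenever $\kappa \ge 0$), one checks that $p/N_1 \ge (1-p)/N_0$.

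Next, marginalize over $\hat{U}$. For every $u \in [-1,1]^d$ and every $v \in \{-1,+1\}^d$,
\begin{equation*}
  \P(V = v \mid u) = \sum_{\hat{u}} \P(V = v \mid \hat{U}=\hat{u})\,\P(\hat{U} = \hat{u} \mid u)
\end{equation*}
is a convex combination of the two values $p/N_1$ and $(1-p)/N_0$, and so lies in the interval $[(1-p)/N_0,\ p/N_1]$. Hence for any inputs $u, u'$,
\begin{equation*}
  \frac{\P(V = v \mid u)}{\P(V = v \mid u')} \;\le\; \frac{p/N_1}{(1-p)/N_0} \;=\; \frac{p}{1-p}\cdot\frac{N_0}{N_1}.
\end{equation*}
With $p = p_0 = e^{\diffp_0}/(1+e^{\diffp_0})$ we have $p/(1-p) = e^{\diffp_0}$, and the hypothesis~\eqref{eq:sufficient_kappaGEN} gives $N_0/N_1 \le e^\diffp$. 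Therefore the likelihood ratio is at most $e^{\diffp + \diffp_0}$, establishing $(\diffp+\diffp_0)$-local differential privacy of the map $u \mapsto V$; post-processing through $V \mapsto V/m = Z$ preserves the guarantee.

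There is no significant obstacle here beyond the bookkeeping: the rounding step $u \to \hat{U}$ is highly non-private on its own, so the only thing one might worry about is whether marginalization inflates the ratio. The point that saves us is that $\P(V=v\mid\hat{U}=\hat{u})$ depends on $\hat{u}$ only through a binary indicator, so the marginal must be a convex combination of exactly two fixed values, and the worst-case ratio is then dictated purely by the combinatorial quantity $N_0/N_1$ rather than by the posterior on $\hat{U}$.
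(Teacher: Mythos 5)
Your proof is correct and takes essentially the same route as the paper's: bound the pointwise likelihood ratio of $V$ using that the cap and complement cardinalities $N_1$ and $N_0$ (the binomial tail sums) do not depend on the rounded input, so the ratio is at most $\frac{p_0}{1-p_0}\cdot\frac{N_0}{N_1} \le e^{\diffp_0 + \diffp}$ under the hypothesis. Your explicit convex-combination step marginalizing over the randomized rounding $\hat{U}$ (and the closing post-processing remark for $Z = V/m$) makes rigorous a point the paper's proof leaves implicit, since it only bounds the ratio for vertex inputs $u, u' \in \{-1,+1\}^d$, but the underlying calculation is the same.
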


By approximating \eqref{eq:sufficient_kappaGEN}, we can understand the
scaling for $\kappa$ on the dimension and the privacy parameter $\epsilon$.
Specifically, we show that when $\epsilon = \Omega(\log(d))$, setting
$\kappa \approx \sqrt{\diffp d}$ guarantees $\diffp$
differential privacy; similarly, for any $\diffp = O(1)$, setting $\kappa
\approx \diffp \sqrt{d}$ gives $\diffp$-differential privacy.
\begin{corollary}
  Assume that $d, \kappa \in \Z$ are both even and let $p_0 = e^{\diffp_0} /
  (1 + e^{\diffp_0})$.  If $0 \leq \kappa < \sqrt{3/2d+1}$ and
  \begin{equation}
    \diffp \geq   \log\left( 1 +\kappa\cdot \sqrt{\frac{2}{3d +2} } \right) - \log\left(1 - \kappa\cdot  \sqrt{\frac{2}{3d +2} } \right),
    \label{eq:sufficient_kappa1}
  \end{equation}
  then $\PrivUnitInfty(\cdot, \kappa, p_0)$ is $(\diffp+\diffp_0)$-DP.
  Let $\kappa_2 = \kappa + 2$. Then if
  \begin{align}
    \diffp & \geq \half\log(2)+\frac{1}{2}
    \log\left(d - \tfrac{\kappa_2^2}{d}\right)
    + \frac{d}{2} \left[
      \left(1 + \frac{\kappa_2}{d}\right)
      \log \left(1 + \frac{\kappa_2}{d}\right)
      + \left(1 - \frac{\kappa_2}{d}\right)
      \log \left(1 - \frac{\kappa_2}{d} \right)\right],
    \label{eq:sufficient_kappa3}
  \end{align}
  $\PrivUnitInfty(\cdot, \kappa, p_0)$ is $(\diffp+\diffp_0)$-DP.  
\label{corollary:ell_infty}
\end{corollary}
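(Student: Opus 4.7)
The plan is to reduce both parts of the corollary to Theorem~\ref{theorem:privacy-infinity}'s sufficient condition~\eqref{eq:sufficient_kappaGEN}, after which the $(\diffp + \diffp_0)$-differential privacy conclusion is immediate. Throughout, let $Y \sim \mathrm{Bin}(d, 1/2)$.

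\textbf{Reduction.} Since $d, \kappa$ are both even, $d + \kappa + 1$ is odd, so $d\tau = \lceil (d+\kappa+1)/2\rceil = (d+\kappa)/2 + 1$ and $d\tau - 1 = (d+\kappa)/2$. Applying the symmetry $\binom{d}{\ell} = \binom{d}{d-\ell}$ rewrites $\sum_{\ell = d\tau}^{d}\binom{d}{\ell} = \sum_{\ell=0}^{(d-\kappa)/2 - 1}\binom{d}{\ell}$. Hence the quantity in~\eqref{eq:sufficient_kappaGEN} is $\log R$ for
\[ R \defeq \frac{\sum_{\ell = 0}^{(d+\kappa)/2}\binom{d}{\ell}}{\sum_{\ell=0}^{(d-\kappa)/2 - 1}\binom{d}{\ell}} = 1 + \frac{\sum_{\ell = (d-\kappa)/2}^{(d+\kappa)/2}\binom{d}{\ell}}{\sum_{\ell = 0}^{(d-\kappa)/2 - 1}\binom{d}{\ell}}, \]
and it suffices to verify $\log R \le \diffp$ under either hypothesis.

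\textbf{Small-$\kappa$ regime~\eqref{eq:sufficient_kappa1}.} I would bound the central band in the numerator by $(\kappa + 1)\binom{d}{d/2}$ (the peak of the binomial) and lower-bound the denominator using the identity $\sum_{\ell = 0}^{d/2 - 1}\binom{d}{\ell} = \tfrac{1}{2}(2^d - \binom{d}{d/2})$ together with the same trivial band estimate, giving $\sum_{\ell = 0}^{(d-\kappa)/2 - 1}\binom{d}{\ell} \ge 2^{d-1} - \tfrac{1}{2}(\kappa + 1)\binom{d}{d/2}$. Combining these with the sharp non-asymptotic Stirling bound $\binom{d}{d/2}/2^d \le \sqrt{2/(3d+2)}$ and pairing terms by the symmetry $\binom{d}{d/2 - j} = \binom{d}{d/2 + j}$ collapses the ratio to
\[ R \le \frac{1 + \kappa\sqrt{2/(3d+2)}}{1 - \kappa\sqrt{2/(3d+2)}}, \]
where the assumption $\kappa < \sqrt{3d/2 + 1}$ guarantees the denominator is positive. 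Taking logarithms yields exactly~\eqref{eq:sufficient_kappa1}.

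\textbf{Large-$\kappa$ regime~\eqref{eq:sufficient_kappa3}.} Here I would bound the numerator crudely by $2^d$ and the denominator from below by the single term $\binom{d}{d\tau} = \binom{d}{(d + \kappa_2)/2}$ (with $\kappa_2 = \kappa + 2$), then apply a Robbins-style lower bound on the central binomial coefficient:
\[ \binom{d}{(d+\kappa_2)/2} \ge \frac{2^d}{\sqrt{2(d - \kappa_2^2/d)}} \exp\!\left(-\, d \cdot \mathrm{KL}\!\left(\tfrac{1}{2} + \tfrac{\kappa_2}{2d} \,\Big\|\, \tfrac{1}{2}\right)\right). \]
Since $d \cdot \mathrm{KL}(\tfrac{1}{2} + \tfrac{\kappa_2}{2d}\,\|\,\tfrac{1}{2}) = \tfrac{d}{2}[(1 + \kappa_2/d)\log(1 + \kappa_2/d) + (1 - \kappa_2/d)\log(1 - \kappa_2/d)]$, taking logs gives $\log R \le \tfrac{1}{2}\log 2 + \tfrac{1}{2}\log(d - \kappa_2^2/d) + \tfrac{d}{2}[\,\cdots\,]$, which is~\eqref{eq:sufficient_kappa3}.

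\textbf{Main obstacle.} The technical heavy lifting is the small-$\kappa$ case: recovering the specific constant $\sqrt{2/(3d+2)}$—rather than the looser $\sqrt{2/(\pi d)}$ one gets from naive Stirling—requires a sharp finite-$d$ upper bound on $\binom{d}{d/2}/2^d$ (e.g.\ checkable inductively in $d$) combined with careful accounting of the $\kappa + 1$ central terms via the symmetry $\binom{d}{d/2 - j} = \binom{d}{d/2 + j}$. The large-$\kappa$ regime is then a mechanical identification of the Stirling lower bound with the stated KL-divergence expression.
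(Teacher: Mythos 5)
Your proposal is correct and takes essentially the same route as the paper's proof: both reduce to condition~\eqref{eq:sufficient_kappaGEN} of Theorem~\ref{theorem:privacy-infinity}, treat the small-$\kappa$ case by bounding the central band of middle binomial coefficients via the sharp estimate $\choose{d}{d/2} \le 2^{d+1/2}/\sqrt{3d+2}$ (which the paper simply cites from Kazarinoff rather than proving by induction), and treat the large-$\kappa$ case by bounding the numerator by $2^d$ and lower-bounding the denominator through a Stirling/KL estimate, where your single-term Robbins-type bound on $\choose{d}{d\tau}$ is exactly the ingredient underlying the binomial-tail lower bound (Ash's Lemma 4.7.2) that the paper invokes. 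The one blemish---your careful accounting produces $\kappa+1$ where the statement has $\kappa$ inside the logarithms---is a bookkeeping slack that the paper's own displayed inequality shares, so it does not distinguish your argument from theirs.
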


\subsubsection{Bias and variance}

Paralleling our analysis of the $\ell_2$-case, we now analyze the utility of
our $\ell_\infty$-privatization mechanism.  We first prove that
$\PrivUnitInfty$ indeed produces an unbiased estimator.
\begin{lemma}
  \label{lem:unbiasedINFTY}
  Let $Z = \PrivUnitInfty(u,\kappa)$ for some $\kappa \in \{0, \cdots, d \}$
  and $u \in [-1, 1]$. Then $\E[Z] = u$.
\end{lemma}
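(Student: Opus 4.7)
The plan is to use the tower property to decouple the randomized rounding from the cap-based sampling. Write $\E[Z \mid u] = \E\bigl[\E[Z \mid \what{U}] \,\big|\, u\bigr]$; since $\E[\what{U}_j \mid u] = u_j$ by construction, it suffices to show $\E[Z \mid \what{U}] = \what{U}$, i.e.\ that $\E[V \mid \what{U}] = m \what{U}$ for the $m$ defined in Algorithm~\ref{alg:unit_mechINFTY}.

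First I would exploit symmetry. Conditional on $\what{U} \in \{-1,+1\}^d$, the law of $V$ depends only on $\<v, \what{U}\>$, and hence is invariant under the coordinatewise sign flips that preserve $\what{U}$. This forces $\E[V \mid \what{U}] = c \what{U}$ for some scalar $c$ (independent of which corner $\what{U}$ lies in). To identify $c$, I would take the inner product with $\what{U}$ and use $\|\what{U}\|_2^2 = d$ to get $c = \E[\<V, \what{U}\> \mid \what{U}]/d$, reducing everything to computing the expected inner product $S \defeq \<V, \what{U}\>$ under each of the two uniform laws in~\eqref{eqn:w-flip-mechanismINFTY}.

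Next I would change variables to the agreement count $K$ (the number of coordinates where $V$ and $\what{U}$ match), so $S = 2K - d$ and, under the uniform law on the cube, $\P(K=k) \propto \binom{d}{k}$. The two caps correspond to $K \geq d\tau$ and $K \leq d\tau - 1$ respectively, where $d\tau = \lceil (d+\kappa+1)/2 \rceil$. The technical heart of the proof is the binomial identity
\begin{equation*}
\sum_{k=d\tau}^{d} \binom{d}{k}(2k-d) = d\binom{d-1}{d\tau-1},
\end{equation*}
which I would derive from $k\binom{d}{k} = d\binom{d-1}{k-1}$ followed by Pascal's rule, causing a telescoping cancellation of the interior tail sums. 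Since $\sum_{k=0}^d \binom{d}{k}(2k-d) = 0$, the corresponding sum over $k \leq d\tau - 1$ equals $-d\binom{d-1}{d\tau-1}$.

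Plugging these back in gives
\begin{equation*}
c = p\cdot \frac{\binom{d-1}{d\tau-1}}{\sum_{\ell=d\tau}^d \binom{d}{\ell}} - (1-p)\cdot \frac{\binom{d-1}{d\tau-1}}{\sum_{\ell=0}^{d\tau-1}\binom{d}{\ell}},
\end{equation*}
which is exactly the normalization $m$ defined in the algorithm. Therefore $\E[V \mid \what{U}] = m \what{U}$, so $\E[Z \mid \what{U}] = \what{U}$, and the tower property completes the proof. The only delicate step is the telescoping binomial identity above; everything else is bookkeeping enabled by the cube's symmetry.
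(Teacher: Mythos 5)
Your proof is correct and follows essentially the same route as the paper's: reduce to the rounded corner $\what{U}$, use sign-flip symmetry to conclude $\E[V \mid \what{U}]$ is a scalar multiple of $\what{U}$, and identify that scalar with $m$ by a binomial computation. The only cosmetic difference is how the scalar is extracted---you take the inner product with $\what{U}$ and use the identity $\sum_{k \ge d\tau} \binom{d}{k}(2k-d) = d\binom{d-1}{d\tau-1}$, whereas the paper fixes $u = \ones$ and computes $\E[U_1 \mid \<U,u\> > \kappa]$ directly via a telescoping sum of $\binom{d-1}{\ell-1} - \binom{d-1}{\ell}$; both yield the same constants $\kappa_+$ and $\kappa_-$.
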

\noindent
See Appendix~\ref{sec:proof-unbiased-infinity} for a proof.

The results of \citet{DuchiJoWa18} imply that for $u \in \{-1,1\}^d$ the
output $Z = \PrivUnitInfty(u,\kappa = 0, p)$ has magnitude $\linf{Z}
\lesssim \sqrt{d} \tfrac{p}{1 - p}$, which is $\sqrt{d}\tfrac{e^\diffp +
  1}{e^\diffp - 1}$ for $p = e^{\diffp} / (1 + e^{\diffp})$. We can,
however, provide stronger guarantees. Letting $\kappa$ satisfy the
sufficient condition~\eqref{eq:sufficient_kappaGEN} in
$\PrivUnitInfty(\cdot, \kappa, p_0)$ for $p_0 =
\frac{e^{\diffp_0}}{e^{\diffp_0} + 1}$ ensures that $Z$ is
$(\diffp+\diffp_0)$-differentially private, and we have the
utility bound
\begin{proposition}
  \label{proposition:linf-utility}
  Let $u \in \{-1,1 \}^d$, $p \ge \half$, and $Z = \PrivUnitInfty(u,\kappa,
  p)$. Then $\E[Z] = u$, and there exist numerical constants $0 < c_0, c_1 <
  \infty$ such that the following holds.
  \begin{enumerate}[(i)]
  \item \label{item:linf-large-eps}
    Assume that $\diffp \ge \log d$.  If $\kappa$ saturates the
    bound~\eqref{eq:sufficient_kappa3}, then $\kappa \ge c_0 \sqrt{\diffp
      d}$ and
    \begin{equation*}
      \linf{Z}  \leq c_1
      \sqrt{\frac{d}{\diffp}}.
    \end{equation*}
  \item Assume that $\diffp < \log d$. If
    $\kappa$ saturates the bound~\eqref{eq:sufficient_kappa1},
    then $\kappa \ge c_0 \min\{\sqrt{d}, \diffp \sqrt{d}\}$,
    and
    \begin{equation*}
      \linf{Z}
      \leq c_1 \frac{\sqrt{d}}{\min\{1, \diffp\}}.
    \end{equation*}
  \end{enumerate}
\end{proposition}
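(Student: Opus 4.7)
The plan is to combine the unbiasedness statement of Lemma~\ref{lem:unbiasedINFTY} with a direct lower bound on the normalizing constant $m$ appearing in $\PrivUnitInfty$; since the sampled vector $V$ lies in $\{-1,+1\}^d$ we have $\linf{V}=1$, and therefore $\linf{Z}=1/m$, so everything reduces to lower bounding $m$ and then translating the saturating-$\kappa$ conditions into quantitative lower bounds on $\kappa$.

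First I would manipulate $m$ algebraically. Letting $S_\ge \defeq \sum_{\ell=d\tau}^d \binom{d}{\ell}$ and $S_< \defeq \sum_{\ell=0}^{d\tau-1}\binom{d}{\ell}=2^d-S_\ge$, the definition gives
\[
m \;=\; \binom{d-1}{d\tau-1}\,\frac{p\,S_< - (1-p)\,S_\ge}{S_\ge S_<}
\;=\; \binom{d-1}{d\tau-1}\,\frac{p\cdot 2^d - S_\ge}{S_\ge S_<}.
\]
Since $\tau>1/2$ (because $\kappa\ge 0$ and we round up), $S_\ge < 2^{d-1}$, and for $p\ge 1/2$ the numerator is $\ge 2^{d-1}-S_\ge \gtrsim 2^{d-1}$. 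Using $S_<\le 2^d$, this gives $m \gtrsim \binom{d-1}{d\tau-1}/S_\ge$.

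Next I would bound the combinatorial ratio $\binom{d-1}{d\tau-1}/S_\ge$. Using $\binom{d-1}{d\tau-1}/\binom{d}{d\tau}=\tau$ and comparing $S_\ge$ to a geometric series with ratio $r=(d-d\tau)/(d\tau+1)$,
\[
S_\ge \;\le\; \binom{d}{d\tau}\cdot \frac{d\tau+1}{2d\tau - d + 1},
\]
so $\binom{d-1}{d\tau-1}/S_\ge \ge \tau\cdot(2d\tau-d+1)/(d\tau+1)$. Plugging in $\tau = \lceil(d+\kappa+1)/2\rceil/d \ge 1/2 + \kappa/(2d)$, the right-hand side is $\gtrsim \kappa/d$. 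Therefore $m\gtrsim \kappa/d$ and
\[
\linf{Z} \;=\; \frac{1}{m} \;\lesssim\; \frac{d}{\kappa}.
\]

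Finally I would translate the saturation of the two privacy conditions into lower bounds on $\kappa$. For case~(i), in~\eqref{eq:sufficient_kappa3} the binary entropy-style bracket $(1+x)\log(1+x)+(1-x)\log(1-x)$ with $x=\kappa_2/d$ is bounded below by $x^2$ (from $\log\frac{1+x}{1-x}\ge 2x$), which combined with the $\tfrac12\log d$ term yields $\diffp \gtrsim \kappa_2^2/(2d) + \tfrac12 \log d$; when $\diffp \ge \log d$ solving at equality gives $\kappa \gtrsim \sqrt{\diffp d}$, and substituting into $\linf Z \lesssim d/\kappa$ produces $\sqrt{d/\diffp}$. For case~(ii), inverting $\diffp = \log\frac{1+y}{1-y}$ with $y=\kappa\sqrt{2/(3d+2)}$ gives $y=\tanh(\diffp/2)$, hence $\kappa$ saturates at $\sqrt{(3d+2)/2}\,\tanh(\diffp/2) \gtrsim \sqrt{d}\,\min\{1,\diffp\}$, and the bound $\linf Z \lesssim d/\kappa$ yields $\sqrt{d}/\min\{1,\diffp\}$. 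The main obstacle in this program is the combinatorial lower bound $\binom{d-1}{d\tau-1}/S_\ge \gtrsim \kappa/d$; all the other steps are algebraic rearrangements or Taylor/entropy approximations of the sufficient conditions, while this step requires care with the geometric-tail estimate and the ceiling in the definition of $\tau$ (which shifts $\kappa$ by at most an $O(1)$ constant and is absorbed into $c_0, c_1$).
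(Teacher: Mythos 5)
Your overall route---reduce $\linf{Z} = 1/m$ to a lower bound on the debiasing constant $m$, then convert the saturated conditions into $\kappa \gtrsim \sqrt{\diffp d}$ (case (i)) or $\kappa \gtrsim \min\{\sqrt{d}, \diffp\sqrt{d}\}$ (case (ii))---is the same as the paper's, and your case (i) argument is sound: when $\diffp \ge \log d$ one has $\kappa \gtrsim \sqrt{d\log d}$, so $S_\ge = \sum_{\ell \ge d\tau}\binom{d}{\ell} \le 2^d e^{-\kappa^2/(2d)}$ is a vanishing fraction of $2^{d-1}$, the numerator $p\,2^d - S_\ge$ really is of order $2^d$, and your geometric-series estimate $\binom{d-1}{d\tau-1}/S_\ge \gtrsim \kappa/d$ is exactly the paper's lower bound on $\kappa_+$ (you even avoid the paper's separate Stirling estimate for the negative term $\kappa_-$, since your exact formula for $m$ absorbs it). The genuine gap is in case (ii). There you assert that for $p \ge \tfrac12$ the numerator satisfies $p\,2^d - S_\ge \ge 2^{d-1} - S_\ge \gtrsim 2^{d-1}$. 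This is false precisely when $\diffp$ is small: $2^{d-1} - S_\ge = \tfrac12(S_< - S_\ge) = \tfrac12\sum_{\ell = d-d\tau+1}^{d\tau-1}\binom{d}{\ell} \asymp \kappa\, 2^d/\sqrt{d}$, so at $p = \tfrac12$ and $\kappa \asymp \diffp\sqrt{d}$ the numerator is only $\Theta(\diffp)\,2^{d-1}$. Consequently your intermediate claim $m \gtrsim \binom{d-1}{d\tau-1}/S_\ge$ fails in that regime (it overstates $m$ by a factor of order $1/\diffp$); the slack in your geometric-tail step happens to compensate, so the endpoint $m \gtrsim \kappa/d$ is still true, but your argument does not establish it. The small-$\diffp$ regime is exactly where $\kappa_+$ and $|\kappa_-|$ nearly cancel, and any proof must track that cancellation quantitatively.

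The fix is essentially the paper's Case 2 computation, which in your notation amounts to keeping the exact numerator rather than replacing it by a constant multiple of $2^{d-1}$: since each central term obeys $\binom{d}{\ell} \ge \binom{d}{d\tau}$ for $d - d\tau < \ell < d\tau$, one has $2^{d-1} - S_\ge \ge \tfrac{\kappa}{2}\binom{d}{d\tau}$, and with $S_\ge \le 2^{d-1}$, $S_< \le 2^d$ this gives $m \ge \binom{d-1}{d\tau-1}\,(2^{d-1}-S_\ge)/(S_\ge S_<) \gtrsim \tau\,\kappa\,\bigl(2^{-d}\binom{d}{d\tau}\bigr)^2 \gtrsim \tfrac{\kappa}{d}\,e^{-\kappa^2/d}$ by Stirling, which is $\gtrsim \kappa/d$ once $\kappa \lesssim \sqrt{d}$. (This is the paper's bound via $s_\tau = 2^{-(d-1)}\sum_{\ell=d/2}^{d\tau-1}\binom{d}{\ell} \ge \kappa\,2^{-d}\binom{d}{d\tau}$.) Note also that the step you flagged as the main obstacle---the geometric-tail bound on $\binom{d-1}{d\tau-1}/S_\ge$---is only needed in case (i); in case (ii) it is too lossy to use on its own, and the real work is the numerator estimate above. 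Your translations of the saturation conditions (the entropy lower bound $(1+x)\log(1+x)+(1-x)\log(1-x) \ge x^2$ for \eqref{eq:sufficient_kappa3}, and the $\tanh(\diffp/2)$ inversion of \eqref{eq:sufficient_kappa1}) are fine and at the same level of detail as the paper's.
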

\noindent
See Appendix~\ref{sec:proof-linf-utility} for a proof.

Thus, comparing to the earlier guarantees of \citet{DuchiJoWa18}, we see
that this hypercube-cap-based method we present in
Algorithm~\ref{alg:unit_mechINFTY} obtains no worse error in all cases of
$\diffp$, and when $\diffp \ge \log d$, the dependence on $\diffp$ is
substantially better. An argument paralleling that for
Proposition~\ref{proposition:noise-good-means} shows that the bounds on the
$\ell_\infty$-norm of $Z$ are unimprovable except for $\diffp \in [1, \log
  d]$; we believe a slightly more careful probabilistic argument should show
that case~\eqref{item:linf-large-eps} holds for $\diffp \ge 1$.

%


\subsection{Privatizing the magnitude}

The final component of our mechanisms for releasing unbiased vectors is to
privately release single values $r \in [0, r_{\max}]$ for some $r_{\max} <
\infty$.  The first (Sec.~\ref{sec:abs-error-onevar}) provides a
randomized-response-based mechanism achieving order optimal scaling for the
mean-squared error $\E[(Z - r)^2]$, which is $r_{\max}^2 e^{-2 \diffp
  / 3}$ for $\diffp \ge 1$ (see Corollary 8 in~\cite{GengVi16}).  In
the second (Sec.~\ref{sec:rel-error-onevar}), we provide a mechanism
that achieves better relative error guarantees---important for
statistical applications in which we wish to adapt to the ease of a problem
(recall the introduction), so that ``easy'' (small magnitude update) examples
indeed remain easy.


\subsubsection{Absolute error}
\label{sec:abs-error-onevar}

We first discuss a generalized randomized-response-based scheme for
differentially private release of values $r \in [0, r_{\max}]$, where
$r_{\max}$ is some \emph{a priori} upper bound on $r$.
We fix a value $k \in \N$ and then follow a three-phase procedure:
first, we randomly round $r$ to an index value $J$ taking values in $\{0, 1,
2, \ldots, k\}$ so that
\begin{equation*}
  \E[r_{\max} J / k \mid r] = r
  ~~ \mbox{and} ~~
  \floor{kr / r_{\max}} \le J \le \ceil{kr / r_{\max}}.
\end{equation*}
In the second step, we employ randomized response \cite{Warner65} over $k$
outcomes.  The third step debiases this randomized quantity to obtain
the estimator $Z$ for $r$.  We formalize the procedure in
Algorithm~\ref{alg:Scalar}, \texttt{ScalarDP}.

\begin{algorithm}
\caption{Privatize the magnitude with absolute error: $\ScalarDP$}
\label{alg:Scalar}
\begin{algorithmic}
  \Require Magnitude $r$, privacy parameter $\diffp > 0$, $k \in \N$,
  bound $r_{\max}$
  \State $r \gets \min\{r,r_{\max} \} $
  \State Sample $J \in \{ 0,1, \cdots, k\}$ such that 
  \begin{equation*}
    J = \begin{cases}
      \floor{k r/r_{\max}}  &  \text{w.p. }
      \left(\ceil{kr/r_{\max}} - k r/r_{\max}   \right) \\
      \ceil{ k r/r_{\max} } & \text{otherwise.}
    \end{cases}
  \end{equation*}
  \State Use randomized response to obtain
  \begin{equation*}
    \what{J} \mid (J = i) = \begin{cases}
      i & \mbox{w.p.}~ \frac{e^\diffp}{e^\diffp + k} \\
      \mbox{uniform in~} \{0, \ldots, k\} \setminus i
      & \mbox{w.p.}~ \frac{k}{e^\diffp + k}. \end{cases}
  \end{equation*}
  \State Debias $\what{J}$, by setting
  \begin{equation*}
    Z = a \left(\what{J} - b\right) ~~ \mbox{for~} a = \left( \frac{e^\diffp + k}{e^\diffp - 1} \right)
    \frac{r_{\max}}{k} ~~ \mbox{and} ~~
    b = \frac{k(k + 1)}{2 (e^\diffp + k)}.
  \end{equation*}
  \Return $Z$
\end{algorithmic}
\end{algorithm}


Importantly, the mechanism $\ScalarDP$ is $\diffp$-differentially private,
and we can control its accuracy via the next lemma, whose proof we defer to
Appendix~\ref{sec:proof-scalar-utility}.
\begin{lemma}
  \label{lem:ScalarUtil}
  Let $\diffp > 0$, $k \in \N$, and $0 \le r_{\max} < \infty$. Then the
  mechanism $\ScalarDP(\cdot,\diffp; k, r_{\max})$ is
  $\diffp$-differentially private and for $Z = \ScalarDP(r,\diffp; k,
  r_{\max})$, if $0 \le r \le r_{\max}$, then $\E[Z] = r$ and
  \begin{align*}
    \E[(Z-r)^2] & \le
    \frac{k + 1}{e^\diffp - 1}
    \left[r^2
      + \frac{r_{\max}^2}{4 k^2}
      + \frac{(2k + 1)(e^\diffp + k) r_{\max}^2}{6k (e^\diffp - 1)}\right]
    + \frac{r_{\max}^2}{4k^2}.
  \end{align*}
\end{lemma}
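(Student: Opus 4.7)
The plan breaks into three pieces matching the three claims of the lemma: privacy, unbiasedness, and the variance bound.

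\textbf{Privacy.} I would argue by post-processing. Since $Z = a(\hat{J} - b)$ depends on $r$ only through $\hat{J}$, it suffices to show that $\hat{J}$ is $\diffp$-differentially private as a function of $r$. For any fixed $i \in \{0, \ldots, k\}$ and any $j \in \{0, \ldots, k\}$, the randomized response step has $\P(\hat{J} = i \mid J = j) = e^\diffp/(e^\diffp + k)$ if $j = i$ and $1/(e^\diffp + k)$ otherwise. Marginalizing over the (data-dependent) stochastic rounding $J$ yields $\P(\hat{J} = i \mid r) \in [1/(e^\diffp + k),\ e^\diffp/(e^\diffp + k)]$, and the ratio between any two inputs $r, r'$ is at most $e^\diffp$.

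\textbf{Unbiasedness.} The stochastic rounding of $r$ is standard and satisfies $\E[J \mid r] = kr / r_{\max}$ on $[0, r_{\max}]$. A direct sum over the $k+1$ randomized response outcomes yields
\begin{equation*}
  \E[\hat{J} \mid J] = \frac{(e^\diffp - 1) J + k(k+1)/2}{e^\diffp + k}.
\end{equation*}
Plugging in the specified $a$ and $b$ cancels the constant $k(k+1)/2$ term and rescales to give $\E[Z \mid J] = (r_{\max}/k) J$, so $\E[Z \mid r] = r$.

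\textbf{Variance.} The plan is to apply the bias-variance decomposition after conditioning on $J$:
\begin{equation*}
  \E[(Z - r)^2] = a^2\, \E\!\left[\text{Var}(\hat{J} \mid J) \,\middle|\, r\right] + \left(\tfrac{r_{\max}}{k}\right)^2 \text{Var}(J \mid r).
\end{equation*}
Because $J$ is a two-point stochastic rounding to adjacent integers, $\text{Var}(J \mid r) \le 1/4$, which produces the trailing $r_{\max}^2/(4k^2)$ term in the stated bound. For the first term, I would compute $\text{Var}(\hat{J} \mid J)$ in closed form. Writing $\alpha = e^\diffp - 1$, $\beta = e^\diffp + k$, $s_1 = k(k+1)/2$, and $s_2 = k(k+1)(2k+1)/6$, the randomized response gives $\E[\hat{J} \mid J] = (\alpha J + s_1)/\beta$ and $\E[\hat{J}^2 \mid J] = (\alpha J^2 + s_2)/\beta$, so using the key identity $\beta - \alpha = k+1$ yields
\begin{equation*}
  \beta^2\, \text{Var}(\hat{J} \mid J) = \alpha(k+1) J^2 - 2\alpha s_1 J + \beta s_2 - s_1^2.
\end{equation*}
Dropping the non-positive terms $-2\alpha s_1 J$ and $-s_1^2$ gives $\text{Var}(\hat{J} \mid J) \le (\alpha(k+1)/\beta^2) J^2 + s_2/\beta$. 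Taking expectation over $J$ with $\E[J^2 \mid r] \le k^2 r^2/r_{\max}^2 + 1/4$ and multiplying by $a^2 = \beta^2 r_{\max}^2/(\alpha^2 k^2)$, the key cancellation $a^2 \cdot \alpha(k+1)/\beta^2 = (k+1) r_{\max}^2 / ((e^\diffp - 1) k^2)$ produces the $(k+1)/(e^\diffp - 1)$ prefactor on $r^2$ and on $r_{\max}^2/(4k^2)$, while $a^2 \cdot s_2/\beta$ produces the third bracket term $(2k+1)(e^\diffp + k) r_{\max}^2 /(6k(e^\diffp - 1))$ (also scaled by $(k+1)/(e^\diffp - 1)$).

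\textbf{Main obstacle.} The delicate step is recognizing that $\beta - \alpha = k + 1$, which is what converts the naive coefficient $\alpha/\beta$ on $J^2$ (from $\text{Var}(\hat{J} \mid J) \le \E[\hat{J}^2 \mid J]$) into the sharper $\alpha(k+1)/\beta^2$, thereby replacing the looser prefactor $(e^\diffp + k)/(e^\diffp - 1)$ with $(k+1)/(e^\diffp - 1)$ in the final bound. All remaining work is bookkeeping of the constants $s_1$, $s_2$, and $a^2$.
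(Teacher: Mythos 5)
Your proposal is correct and follows essentially the same route as the paper's proof: the conditional variance decomposition over $J$, the closed-form moments of $\what{J}$ under randomized response, the cancellation $e^\diffp + k - (e^\diffp - 1) = k+1$, the bound $\var(J \mid r) \le 1/4$, and dropping the non-positive terms. The only cosmetic differences are that you discard the negative cross terms earlier and spell out the marginalization over $J$ in the privacy argument, which the paper treats as immediate.
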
 
\noindent
By choosing $k$ appropriately, we immediately
see that we can achieve optimal~\cite{GengVi16}
mean-squared error as $\diffp$ grows:
\begin{lemma}
  \label{lemma:one-dim-absolute-error}
  Let $k = \ceil{e^{\diffp / 3}}$.
  Then for $Z = \ScalarDP(r, \diffp ; k, r_{\max})$,
  \begin{equation*}
    \sup_{r \in [0, r_{\max}]}
    \E[(Z - r)^2 \mid r]
    \le C \cdot r_{\max}^2 e^{-2 \diffp / 3}
  \end{equation*}
  for a universal (numerical) constant $C$ independent of
  $r_{\max}$ and $\diffp$.
\end{lemma}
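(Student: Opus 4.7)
The plan is to substitute $k = \lceil e^{\diffp/3}\rceil$ directly into the four-term bound of Lemma~\ref{lem:ScalarUtil} and verify that each piece scales like $r_{\max}^2 e^{-2\diffp/3}$. The choice of $k$ is engineered to balance two sources of variance in $\ScalarDP$: the deterministic-rounding error of order $r_{\max}^2/k^2$ and the randomized-response error of order $k\, r_{\max}^2 e^\diffp/(e^\diffp-1)^2$; equating these asks for $k \asymp e^{\diffp/3}$, which is exactly what Lemma~\ref{lemma:one-dim-absolute-error} selects, so the result should drop out of the calculation.

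First I would record elementary size estimates for $k$: the ceiling gives $e^{\diffp/3} \le k \le e^{\diffp/3}+1$, so $k \asymp k+1 \asymp e^{\diffp/3}$, $1/k^2 \asymp e^{-2\diffp/3}$, and $e^\diffp + k \asymp e^\diffp$ (since $k \le e^\diffp$ once $\diffp$ is not tiny). For $\diffp$ bounded away from $0$ we also have $1/(e^\diffp-1) \le C\, e^{-\diffp}$ for a universal $C$.

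Next I would upper-bound the four contributions in Lemma~\ref{lem:ScalarUtil}, using $r \le r_{\max}$. Term one: $\frac{(k+1) r^2}{e^\diffp-1} \lesssim e^{\diffp/3}\cdot e^{-\diffp}\cdot r_{\max}^2 = r_{\max}^2 e^{-2\diffp/3}$. Term two is strictly smaller, since $\frac{k+1}{e^\diffp-1}\cdot \frac{r_{\max}^2}{4k^2} \lesssim r_{\max}^2 e^{-4\diffp/3}$. Term three uses that the bracketed ratio $\frac{(2k+1)(e^\diffp+k)}{6k(e^\diffp-1)}$ is $O(1)$ (numerator $\lesssim k \cdot e^\diffp$, denominator $\asymp k\cdot e^\diffp$), so the whole term is $\frac{k+1}{e^\diffp-1}\cdot O(r_{\max}^2) \lesssim r_{\max}^2 e^{-2\diffp/3}$. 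The final additive $r_{\max}^2/(4k^2) \lesssim r_{\max}^2 e^{-2\diffp/3}$. Summing the four pieces gives the claim with a universal $C$.

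The only genuine obstacle is the very small-$\diffp$ regime, where $e^\diffp-1$ nearly vanishes and the denominators could be pathological; this is harmless in the regime of interest ($\diffp$ bounded away from $0$, and in particular $\diffp \ge 1$ where $e^\diffp-1 \ge (1-e^{-1})e^\diffp$ makes the bookkeeping above airtight). Everything else is routine tracking of powers of $e^{\diffp/3}$, which is why the announced optimal rate $r_{\max}^2 e^{-2\diffp/3}$ pops out with no additional work beyond the tight choice of $k$.
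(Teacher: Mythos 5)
Your proposal is correct and follows exactly the route the paper intends: the paper offers no separate argument for Lemma~\ref{lemma:one-dim-absolute-error}, treating it as an immediate substitution of $k = \lceil e^{\diffp/3}\rceil$ into the four-term bound of Lemma~\ref{lem:ScalarUtil}, which is precisely your calculation. Your caveat about small $\diffp$ is also consistent with the paper, which (via the reference to the optimal rate for $\diffp \ge 1$) implicitly works in the regime where $e^\diffp - 1 \gtrsim e^\diffp$, so no gap remains.
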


It is also possible to develop relative error bounds rather than absolute
error bounds; as the focus of the current paper is on large-scale
statistical learning and stochastic optimization rather than scalar
sampling, we include these relative error bounds and some related discussion
in Appendix~\ref{sec:rel-error-onevar}.  They can in some
circumstances provide stronger error guarantees than the absolute guarantees
in Lemma~\ref{lemma:one-dim-absolute-error}.

\subsection{Asymptotic analysis with local privacy}
\label{sec:asymptotics-local}

Finally, with our development of private vector sampling
mechanisms complete, we revisit the statistical risk minimization
problem~\eqref{eqn:pop-risk} and our development of asymptotics in
Section~\ref{sec:asympt}.  Recall that we wish to minimize
$\risk(\theta) = \E_P[\loss(\theta, X)]$ using a sample $X_t \simiid P$, $t
= 1, \ldots, T$.  We consider a stochastic gradient procedure, where we
privatize each stochastic gradient $\nabla \loss(\theta, X)$ using a
separated mechanism that obfuscates both the direction $\nabla \loss /
\ltwo{\nabla \loss}$ and magnitude $\ltwo{\nabla \loss}$.
Our scheme is $\diffp$-differentially private,
and we let $\diffp_1 + \diffp_2 = \diffp$, where we use $\diffp_1$ as
the privacy level for the direction and $\diffp_2$ as the privacy level for
the magnitude. For fixed $\diffp_1$, we let $\gamma(\diffp_1)$ be the
largest value of $\gamma$ satisfying one of the
inequalities~\eqref{eqn:sufficient-gamma} so that
Algorithm~\ref{alg:unit_mech} ($\PrivUnit$) is $\diffp_1$-differentially
private and $\gamma(\diffp) \gtrsim \min\{\diffp, \sqrt{\diffp}\} /
\sqrt{d}$ (recall Proposition~\ref{proposition:ltwo-utility}). We use
Alg.~\ref{alg:Scalar} to privatize the magnitude (with a maximum scalar
value $r_{\max}$ to be chosen), and thus we define the
$\diffp$-differentially private mechanism for privatizing a vector $w$ by
\begin{equation}
  M(w) \defeq
  \PrivUnit\left( \frac{w}{\ltwo{w}}  ; \gamma(\diffp_1), p = \half\right)
  \cdot \ScalarDP(\ltwo{w}, \diffp_2; k = \lceil e^{\diffp_2/3} \rceil, r_{\max}).
  \label{eqn:separated-product-mechanism}
\end{equation}
Using the mechanism~\eqref{eqn:separated-product-mechanism},
we define $Z(\theta; x) \defeq M(
\nabla \loss( \theta;x))$, where we assume a known upper bound
$r_{\max}$ on $\ltwo{\nabla \loss(\theta; x)}$.
The the private stochastic gradient
method then iterates
\begin{equation*}
  \label{eqn:sgm-iteration-private}
  \theta^{(t+1)} \gets \theta^{(t)} - \stepsize_t  \cdot Z(\theta^{(t)};X_t)
\end{equation*}
for $t = 1, 2, \ldots$ and $X_t \simiid P$, where $\stepsize_t$ is a
stepsize sequence.

To see the asymptotic behavior of the average $\wb{\theta}^{(T)} = \frac{1}{T}
\sum_{t = 1}^T \theta^{(t)}$, we will use
Corollary~\ref{corollary:priv-asympt}. We begin by
computing the asymptotic variance $\Sigma^{\mathrm{priv}}
= \E[Z(\theta\opt; X) Z(\theta\opt; X)^\top]$.
\begin{lemma}
  \label{lemma:asymptotic-variance-private}
  Assume that $0 < \diffp_1, \diffp_2 \le d$ and let $Z$ be defined as
  above.  Let $\Sigma = \cov(\nabla
  \loss(\theta\opt; X))$ and $\Sigma_{\rm norm} = \cov(\nabla
  \loss(\theta\opt; X) / \ltwo{\nabla\loss(\theta\opt; X)})$. Assume
  additionally that $\ltwo{\nabla \loss(\theta\opt; X)} \le r_{\max}$ with
  probability 1. Then there exists a numerical constant $C < \infty$
  such that
  \begin{equation*}
    \Sigma^{\rm priv}
    \preceq C \cdot \frac{d r^2_{\max} e^{-2\diffp_2/3}}{\diffp_1
    \wedge \diffp_1^2}\cdot
    \left(\Sigma_{\rm norm} + \frac{\tr(\Sigma_{\rm norm})}{d} I_d\right)
    + C \cdot \frac{d}{\diffp_1 \wedge \diffp_1^2} \cdot
    \left(\Sigma + \frac{\tr(\Sigma)}{d} I_d\right).
  \end{equation*}
\end{lemma}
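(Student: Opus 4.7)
The plan is to exploit the conditional independence baked into the separated mechanism \eqref{eqn:separated-product-mechanism}: conditional on $g \defeq \nabla\loss(\theta\opt;X)$, the direction $V = \PrivUnit(g/\ltwo{g},\gamma(\diffp_1),1/2)$ and the magnitude $S = \ScalarDP(\ltwo{g},\diffp_2;\lceil e^{\diffp_2/3}\rceil,r_{\max})$ are independent. Writing $u=g/\ltwo{g}$ and $r=\ltwo{g}$, this reduces the target to
\[
\Sigma^{\rm priv} \;=\; \E[S^2\, V V^\top] \;=\; \E_X\!\left[\,\E[S^2\mid r]\cdot \E[V V^\top\mid u]\,\right],
\]
so I only need to bound each conditional moment, multiply, and take expectations. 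The scalar factor is immediate: unbiasedness from Lemma~\ref{lem:ScalarUtil} gives $\E[S\mid r]=r$, and the choice $k=\lceil e^{\diffp_2/3}\rceil$ invokes Lemma~\ref{lemma:one-dim-absolute-error} to give $\mathrm{Var}(S\mid r)\lesssim r_{\max}^2 e^{-2\diffp_2/3}$, hence $\E[S^2\mid r]\le r^2 + C\,r_{\max}^2 e^{-2\diffp_2/3}$.

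The directional factor is where the dimension sharpness enters. The key structural observation is that Algorithm~\ref{alg:unit_mech} is rotationally symmetric about the axis $u$, so its conditional covariance must decompose as
\[
\cov(V\mid u) \;=\; \sigma_\|^2 \, uu^\top + \sigma_\perp^2\,(I_d - uu^\top)
\]
for nonnegative scalars depending only on $(d,\gamma(\diffp_1),p)$; combined with $\E[V\mid u]=u$ from Lemma~\ref{lem:unbiased} this gives $\E[V V^\top\mid u]=(1+\sigma_\|^2)\,uu^\top + \sigma_\perp^2(I_d-uu^\top)$. Proposition~\ref{proposition:ltwo-utility} then supplies the almost-sure norm bound $\ltwo{V}^2 \le Cd/(\diffp_1\wedge\diffp_1^2)$, so the trace identity
\[
\sigma_\|^2 + (d-1)\,\sigma_\perp^2 \;=\; \tr\bigl(\cov(V\mid u)\bigr) \;=\; \E[\ltwo{V}^2\mid u]-1 \;\le\; \tfrac{Cd}{\diffp_1\wedge\diffp_1^2},
\]
together with nonnegativity of each summand, yields $\sigma_\|^2 \le Cd/(\diffp_1\wedge\diffp_1^2)$ and, crucially, $\sigma_\perp^2 \le 2C/(\diffp_1\wedge\diffp_1^2)$ --- a factor-of-$d$ improvement in the perpendicular directions, which is what propagates through to the diagonal coefficient of the final bound. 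Using $\diffp_1\le d$ to absorb the leading $1$ into $Cd/(\diffp_1\wedge\diffp_1^2)$, I conclude
\[
\E[V V^\top\mid u] \;\preceq\; C_1\Big(\tfrac{d}{\diffp_1\wedge\diffp_1^2}\,uu^\top + \tfrac{1}{\diffp_1\wedge\diffp_1^2}\,I_d\Big).
\]

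Multiplying the two conditional bounds and taking expectation over $X$ produces four terms. The ``parallel'' pair collapses cleanly using the first-order condition $\E[g]=0$ at $\theta\opt$: since $\E[r^2 uu^\top]=\E[gg^\top]=\Sigma$ and $\E[r^2]=\tr(\Sigma)$, these contribute exactly the second summand $C\cdot\frac{d}{\diffp_1\wedge\diffp_1^2}\bigl(\Sigma + \tfrac{\tr(\Sigma)}{d}I_d\bigr)$ of the claim. The remaining pair multiplies $r_{\max}^2 e^{-2\diffp_2/3}$ by $\E[uu^\top]$ and by $I_d$; writing $\E[uu^\top]=\Sigma_{\rm norm}+\E[u]\E[u]^\top$ and regrouping produces the claimed first summand. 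The main technical point I expect to wrestle with is the rank-one mean piece $\E[u]\E[u]^\top$: since $\ltwo{u}=1$ forces $\|\E[u]\|^2=1-\tr(\Sigma_{\rm norm})$, matching the precise form $\tfrac{\tr(\Sigma_{\rm norm})}{d}I_d$ requires absorbing $\E[u]\E[u]^\top \preceq \|\E[u]\|^2 I_d$ into the $I_d$ summand with care and renormalizing constants; this is a routine but not entirely trivial bookkeeping step, and the only place where the argument uses anything beyond the two conditional moment bounds above.
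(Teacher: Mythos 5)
Your proposal follows the paper's proof essentially step for step: the same conditional-independence factorization $\Sigma^{\rm priv} = \E\bigl[\E[Z_2^2 \mid R]\,\E[Z_1 Z_1^\top \mid U]\bigr]$, the same use of Lemmas~\ref{lem:ScalarUtil}--\ref{lemma:one-dim-absolute-error} to get $\E[Z_2^2\mid R] \le R^2 + O(r_{\max}^2 e^{-2\diffp_2/3})$, and the same target bound $\E[Z_1Z_1^\top\mid u] \preceq C\bigl(\tfrac{d}{\diffp_1\wedge\diffp_1^2} uu^\top + \tfrac{1}{\diffp_1\wedge\diffp_1^2} I_d\bigr)$. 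The only methodological difference is how you reach the directional bound: the paper bounds $\E[WW^\top \mid \<W,u\>\ge\gamma]$ and its complement directly for $W$ uniform on the sphere (each by $uu^\top + \tfrac1d I_d$) and divides by the normalizer $m \gtrsim \min\{\diffp_1,\sqrt{\diffp_1}\}/\sqrt{d}$, whereas you use rotational symmetry of $\cov(Z_1\mid u)$, the trace identity, and the almost-sure norm bound of Proposition~\ref{proposition:ltwo-utility}. Your route is a clean repackaging, though it ultimately rests on the same normalizer estimate, since that proposition's norm bound is exactly $1/m$.

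The step you defer---absorbing $\E[U]\E[U]^\top$ into $\tfrac{\tr(\Sigma_{\rm norm})}{d} I_d$ ``with care and renormalizing constants''---is not bookkeeping; it cannot be done with a numerical constant. Since $\ltwo{\E[U]}^2 = 1 - \tr(\Sigma_{\rm norm})$, the required comparison (in either the form $d\,\ltwo{\E[U]}^2 \le C\,\tr(\Sigma_{\rm norm})$ or $\E[U]\E[U]^\top \preceq C(\Sigma_{\rm norm} + \tfrac{\tr(\Sigma_{\rm norm})}{d} I_d)$) fails whenever the gradient direction is nearly deterministic: take $\nabla\loss(\theta\opt;X) = a e_1$ w.p.\ $1-\delta$ and $-b e_1$ w.p.\ $\delta$ with $(1-\delta)a = \delta b \le \delta r_{\max}$, so that $\tr(\Sigma_{\rm norm}) \asymp \delta \to 0$ while $\ltwo{\E[U]} \to 1$; in this example the covariance-form right-hand side of the lemma vanishes with $\delta$, yet $\Sigma^{\rm priv}$ retains an $e_1 e_1^\top$ component of order $r_{\max}^2 e^{-2\diffp_2/3}$ coming from the floor noise of $\ScalarDP$ (its conditional variance does not vanish as $r \to 0$) multiplied by $\E[Z_1 Z_1^\top\mid u] \succeq uu^\top$. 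What your argument---and the paper's---actually establishes is the bound with the \emph{second moment}, i.e.\ with $\E[UU^\top] + \tfrac1d I_d$ (whose trace is $1 + 1$, up to constants) in place of $\Sigma_{\rm norm} + \tfrac{\tr(\Sigma_{\rm norm})}{d} I_d$. The paper glosses this by invoking $\tr(\cov(W)) = \E[\ltwo{W}^2]$, which is valid only for mean-zero $W$: that is fine for $\nabla\loss(\theta\opt;X)$ (so your use of the first-order condition for the $\Sigma$ half is correct and exactly what is needed), but not for the normalized direction $U$. So either state the first summand in second-moment form or add a hypothesis controlling $\E[U]$; do not present the absorption as a constant-chasing exercise, because no constant makes it true.
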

\noindent
See Appendix~\ref{sec:proof-asymptotic-variance-private}
for the proof.

Lemma~\ref{lemma:asymptotic-variance-private} is the key result from
which our main convergence theorem builds. Combining this
result with Corollary~\ref{corollary:priv-asympt}, we obtain
the following theorem, which highlights the asymptotic convergence
results possible when we use somewhat larger privacy parameters $\diffp$.
\begin{theorem}
  \label{theorem:final-asymptotic-normality}
  Let the conditions of Lemma~\ref{lemma:asymptotic-variance-private}
  hold. Define the optimal asymptotic covariance $\Sigma\subopt \defeq
  \cov(\nabla \loss(\theta\opt; X))$, and assume that
  $\lambda_{\min}(\Sigma\subopt) = \lambda_{\min} > 0$.  Let the privacy
  levels $0 < \diffp_1, \diffp_2$ satisfy $\diffp_2 \ge \frac{3}{2} \log
  \frac{d}{\diffp_1 \lambda_{\min}}$ and $0 < \diffp_1 \le d$. Assume that
  the stepsizes $\stepsize_t \propto t^{-\beta}$ for some $\beta \in (1/2,
  1)$, and let $\theta^{(t)}$ be generated by the private
  stochastic gradient method~\eqref{eqn:sgm-iteration-private}. Then
  \begin{equation*}
    \sqrt{T} \left(\wb{\theta}^{(T)} - \theta\opt\right)
    \cd \normal\left(0, \Sigma^{\textup{priv}}\right)
  \end{equation*}
  where
  \begin{equation*}
    \Sigma^{\textup{priv}}
    \preceq O(1) \frac{d}{\diffp_1 \wedge \diffp_1^2}
    \nabla^2 \risk(\theta\opt)^{-1}
    \left(\Sigma\subopt + \frac{\tr(\Sigma\subopt)}{d}
    I_d \right) \nabla^2\risk(\theta\opt)^{-1}.
  \end{equation*}
\end{theorem}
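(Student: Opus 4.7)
The plan is to chain together two results already in hand: Corollary~\ref{corollary:priv-asympt} (the Polyak--Juditsky CLT for privatized stochastic gradient iterates) and Lemma~\ref{lemma:asymptotic-variance-private} (the bound on $\Sigma^{\textup{priv}}$ for the separated mechanism). The role of the hypothesis $\diffp_2 \ge \tfrac{3}{2}\log\tfrac{d}{\diffp_1 \lambda_{\min}}$ is exactly to let us discard the scalar-magnitude-noise contribution, so that the resulting asymptotic covariance inherits only the direction-privatization penalty.

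The first step is to verify that Corollary~\ref{corollary:priv-asympt} applies to the iteration~\eqref{eqn:sgm-iteration-private}. Unbiasedness $\E[Z(\theta;X)\mid\theta,X]=\nabla\loss(\theta;X)$ is immediate from Lemmas~\ref{lem:unbiased} and~\ref{lem:ScalarUtil} together with the conditional independence of the direction and magnitude mechanisms. The bounded second moment required in Corollary~\ref{corollary:polyak-juditsky} follows because $\PrivUnit$ produces outputs of norm $O(\sqrt{d/\diffp_1})$ (Proposition~\ref{proposition:ltwo-utility}) and $\ScalarDP$ outputs of magnitude $O(r_{\max})$ (Lemma~\ref{lem:ScalarUtil}), so $\E[\ltwo{Z(\theta;X)}^2]$ is uniformly bounded by a constant depending only on $r_{\max}, d, \diffp_1$, giving the required sub-quadratic growth trivially. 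Continuity in $\theta$ of the map $\theta \mapsto \E[Z(\theta;X)Z(\theta;X)^\top]$ holds because both $\nabla\loss(\theta;X)/\ltwo{\nabla\loss(\theta;X)}$ and $\ltwo{\nabla\loss(\theta;X)}$ are continuous in $\theta$ (on the event $\nabla\loss \neq 0$, which has full measure near $\theta\opt$ provided $\Sigma\subopt \succ 0$).

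Invoking Corollary~\ref{corollary:priv-asympt} then yields $\sqrt{T}(\wb\theta^{(T)}-\theta\opt) \cd \normal(0, \nabla^2\risk(\theta\opt)^{-1}\Sigma^{\textup{priv}}\nabla^2\risk(\theta\opt)^{-1})$, and Lemma~\ref{lemma:asymptotic-variance-private} bounds $\Sigma^{\textup{priv}}$ by a sum of two terms. The remaining task is to show that under our hypothesis on $\diffp_2$, the first (magnitude-error) term is dominated by the second (direction-error) term. For this I would use the elementary observation that $\Sigma_{\rm norm}$ is a covariance of unit vectors, hence $\Sigma_{\rm norm} \preceq I_d$ and $\tr(\Sigma_{\rm norm}) \le 1$, giving $\Sigma_{\rm norm} + \tfrac{\tr(\Sigma_{\rm norm})}{d} I_d \preceq 2 I_d$. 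Coupled with $\Sigma\subopt \succeq \lambda_{\min} I_d$, this implies
\begin{equation*}
r_{\max}^2 e^{-2\diffp_2/3}\Bigl(\Sigma_{\rm norm} + \tfrac{\tr(\Sigma_{\rm norm})}{d} I_d\Bigr)
\;\preceq\; \frac{2 r_{\max}^2 e^{-2\diffp_2/3}}{\lambda_{\min}}\,\Sigma\subopt
\;\preceq\; \Sigma\subopt + \tfrac{\tr(\Sigma\subopt)}{d} I_d,
\end{equation*}
provided $\diffp_2 \ge \tfrac{3}{2}\log\tfrac{2 r_{\max}^2}{\lambda_{\min}}$, which is exactly the stated threshold (absorbing $r_{\max}^2$ and $d/\diffp_1$ into the logarithm). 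Substituting back and sandwiching by $\nabla^2\risk(\theta\opt)^{-1}$ gives the claimed form of $\Sigma^{\textup{priv}}$.

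The main obstacle is the last absorption step: one must be careful that the $\tr(\Sigma_{\rm norm})/d$ additive term really is bounded after scaling by $d r_{\max}^2 e^{-2\diffp_2/3}/(\diffp_1 \wedge \diffp_1^2)$, and that the lower bound $\Sigma\subopt \succeq \lambda_{\min} I_d$ is strong enough to swallow it rather than merely the principal term $\Sigma_{\rm norm}$. Everything else is either an invocation of a prior lemma or routine verification of the asymptotic-normality hypotheses; I expect no conceptual difficulty beyond matching constants in the $\diffp_2$-threshold.
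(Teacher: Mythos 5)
Your proposal is correct and is essentially the paper's own (implicit) argument: the theorem is obtained exactly by invoking Corollary~\ref{corollary:priv-asympt} for the iteration~\eqref{eqn:sgm-iteration-private}, substituting the bound of Lemma~\ref{lemma:asymptotic-variance-private}, and absorbing the magnitude-noise term via $\Sigma_{\rm norm} \preceq I_d$, $\tr(\Sigma_{\rm norm}) \le 1$, and $\Sigma\subopt \succeq \lambda_{\min} I_d$ under the lower bound on $\diffp_2$. The only discrepancy you flag---your absorption naturally produces $r_{\max}^2$ (rather than $d/\diffp_1$) inside the logarithm of the $\diffp_2$-threshold---is a constant-level bookkeeping issue in the theorem's statement (which treats $r_{\max}$ as a numerical constant and uses $\diffp_1 \le d$), not a gap in your argument.
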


\newcommand{\Sigmaworst}{\Sigma^{\textup{max}}}
\newcommand{\normratio}{\sigma^2_{\textup{ratio}}}

\subsubsection{Optimality and alternative mechanisms}

We provide some commentary on
Theorem~\ref{theorem:final-asymptotic-normality} by considering alternative
mechanisms and optimality results.  We begin with the latter.  It is first
instructive to compare the asymptotic covariance $\Sigma^{\rm priv}$
Theorem~\ref{theorem:final-asymptotic-normality} to the optimal asymptotic
covariance without privacy, which is $\nabla^2 \loss(\theta\opt)^{-1}
\Sigma\subopt \nabla^2 \loss(\theta\opt)^{-1}$
(cf.~\cite{DuchiRu19,LeCamYa00,VanDerVaart98}).  When the privacy level
$\diffp_1$ scales with the dimension, our asymptotic covariance can is
within a numerical constant of this optimal value whenever
\begin{equation*}
  \frac{\tr(\Sigma\subopt)}{d}
  \nabla^2 \risk(\theta\opt)^{-2}
  \preceq O(1) \cdot \nabla^2 \risk(\theta\opt)^{-1} \Sigma\subopt
  \nabla^2 \risk(\theta\opt)^{-1}.
\end{equation*}
When $\Sigma\subopt$ is near identity, for example, this domination in the
semidefinite order holds.  We can of course never quite achieve optimal
covariance, because the privacy channel forces some loss of efficiency, but
this loss of efficiency is now bounded.  Even when $\diffp_1$ is smaller,
however, the results of \citet{DuchiRo19} imply that in a (local) minimax
sense, there \emph{must} be a multiplication of at least $O(1) d /
\min\{\diffp, \diffp^2\}$ on the covariance $\nabla^2 \loss(\theta\opt)^{-1}
\Sigma\subopt \nabla^2 \loss(\theta\opt)^{-1}$, which
Theorem~\ref{theorem:final-asymptotic-normality} exhibits. Thus,
the mechanisms we have developed are indeed minimax rate optimal.

Let us consider alternative mechanisms, including related asymptotic
results. First, consider \citeauthor{DuchiJoWa18}'s results generalized
linear model estimation~\cite[Sec.~5.2]{DuchiJoWa18}. In their case, in the
identical scenario, they achieve $\sqrt{T}(\wb{\theta}^{(T)} - \theta\opt)
\cd \normal(0, \Sigmaworst)$ where the asymptotic variance $\Sigmaworst$
satisfies
\begin{equation*}
  \Sigmaworst \succeq
  \Omega(1) \left(\frac{e^\diffp + 1}{e^{\diffp} - 1}\right)^2
  \nabla^2 \risk(\theta\opt)^{-1}
  \left(
  d \Sigma\subopt +
  \sup_{x,\theta} \ltwo{\nabla \loss(\theta; x)}^2 I_d\right)
  \nabla^2 \risk(\theta\opt)^{-1}.
\end{equation*}
There are two sources of looseness in this covariance, which is minimax
optimal for some classes of problems~\cite{DuchiJoWa18}.  First,
$\sup_{x,\theta} \ltwo{\nabla \loss(\theta; x)}^2 > \tr(\Sigma\subopt) =
\E[\ltwos{\nabla \loss(\theta\opt; X)}^2]$. Second, the error
does not decrease for $\diffp > 1$. Letting
$\normratio \defeq \sup_{x,\theta} \ltwo{\nabla \loss(\theta; x)}^2 /
\E[\ltwo{\nabla\loss(\theta\opt; X)}^2] > 1$ denote the ratio of the worst
case norm to its expectation, which may be arbitrarily
large, we have asymptotic $\ell_2^2$-error scaling
as
\begin{align*}
  \tr(\Sigmaworst)
  & \gtrsim \frac{d}{\diffp^2 \wedge 1} \cdot
  \tr(\nabla^2 \risk(\theta\opt)^{-1} \Sigma\subopt \nabla^2
  \risk(\theta\opt)^{-1})
  + \frac{\normratio}{\diffp^2 \wedge 1} \cdot
  \tr(\Sigma\subopt) \tr(\nabla^2 \risk(\theta\opt)^{-2})
  \\
  \tr(\Sigma^{\rm priv})
  & \lesssim \frac{d}{\diffp^2 \wedge \diffp}
  \cdot \tr(\nabla^2 \risk(\theta\opt)^{-1} \Sigma\subopt \nabla^2
  \risk(\theta\opt)^{-1})
  + \frac{1}{\diffp^2 \wedge \diffp} \cdot
  \tr(\Sigma\subopt) \tr(\nabla^2 \risk(\theta\opt)^{-2}).
\end{align*}
The scaling of $\tr(\Sigma^{\rm priv})$ reveals the importance of separately
encoding the magnitude of $\ltwo{\nabla\loss(\theta; X)}$ and its
direction---we can be adaptive to the scale of $\Sigma\subopt$ rather than
depending on the worst-case value $\sup_{x,\theta}\ltwo{\nabla \loss(\theta;
  x)}^2$.





Given the numerous relaxations of differential privacy~\cite{Mironov17,
  DworkRo16, BunSt16} (recall Sec.~\ref{sec:diffp-definitions}, a natural
idea is to simply add noise satisfying one of these weaker definitions to
the normalized vector $u = \nabla \loss / \ltwo{\nabla \loss}$ in our
updates. Three considerations argue against this idea.  First, these
weakenings can never actually protect against a reconstruction breach for
all possible observations $z$ (Definition~\ref{defn:recon-breach})---they
can only protect conditional on the observation $z$ lying in some
appropriately high probability set (cf.~\cite[Thm.~1]{BarberDu14a}).
Second, most standard mechanisms add more noise than ours. Third, in a
minimax sense---none of the relaxations of privacy even allow convergence
rates faster than those achievable by pure $\diffp$-differentially private
mechanisms~\cite{DuchiRo19}.

Let us touch briefly on the second claim above about noise addition.  In
brief, our $\diffp$-differentially private mechanisms for privatizing a
vector $u$ with $\ltwo{u} \le 1$ in
Sec.~\ref{sec:mechanisms} release $Z$ such that $\E[Z \mid u] = u$ and
$\E[\ltwo{Z - u}^2 \mid u] \lesssim d \max\{\diffp^{-1}, \diffp^{-2}\}$,
which is unimprovable.  In contrast, the Laplace mechanism and its $\ell_2$
extensions~\cite{DworkMcNiSm06} satisfy $\E[\ltwo{Z - u}^2 \mid u] \gtrsim
d^2 / \diffp^2$, which which yields worse dependence on the dimension
$d$. Approximately differentially private schemes, which allow a $\delta$
probability of failure where $\delta$ is typically assumed sub-polynomial in
$n$ and $d$~\cite{DworkKeMcMiNa06}, allow mechanisms such as Gaussian noise
addition, where $Z = u + W$ for $W \sim \normal(0, \frac{C \log
  \frac{1}{\delta}}{\diffp^2})$ for $C$ a numerical constant. Evidently,
these satisfy $\E[\ltwo{Z - u}^2 \mid u] \gtrsim d \log \frac{1}{\delta} /
\diffp^2$, which again is looser than the mechanisms we provide whenever
$\diffp \lesssim \log \frac{1}{\delta}$. Other relaxations---R\'{e}nyi
differential privacy~\cite{Mironov17} and concentrated differential
privacy~\cite{DworkRo16,BunSt16}---similarly cannot yield improvements in a
minimax sense~\cite{DuchiRo19}, and they provide guarantees that the
posterior beliefs of an adversary change little only on average.


\section{Empirical Results}\label{sec:results}

We present a series of empirical results in different settings,
demonstrating the performance of our (minimax optimal) procedures
for stochastic optimization in a variety of scenarios.
In the settings we consider---which simulate a large dataset distributed
across multiple devices or units---the non-private alternative is to
communicate and aggregate model updates without local or centralized
privacy.
We perform both simulated experiments
(Sec.~\ref{sec:simulated-logreg})---where we can more precisely show losses
due to privacy---and experiments on a large image classification task and
language modeling. Because of the potential applications in modern practice,
we use both classical (logistic regression) models as well as modern deep
network architectures~\cite{LeCunBeHi15}, where we of course cannot prove
convergence but still guarantee privacy.

In each experiment, we use the $\ell_2$-spherical cap sampling mechanisms of
Alg.~\ref{alg:unit_mech} in $(\diffp_1,\diffp_2)$-separated
differentially private mechanisms~\eqref{eqn:separated-product-mechanism}.
Letting $\gamma(\diffp)$ be the largest value of $\gamma$ satisfying the
privacy condition~\eqref{eqn:sufficient-gamma} in our $\ell_2$ mechanisms
and $p(\diffp) = \frac{e^{\diffp}}{1 + e^{\diffp}}$, for any vector $w \in
\R^d$, we use
\begin{equation}
  \label{eqn:lentil-pasta}
  M(w) \defeq \PrivUnit\left(\frac{w}{\ltwo{w}};
  \gamma(0.99 \diffp_1), p(.01 \diffp_1)\right)
  \cdot \ScalarDP\left(\ltwo{w}, \diffp_2, k = \ceil{e^{\diffp_2/3}},
  r_{\max}\right).
\end{equation}
In our experiments, we set $\diffp_2 = 10$, which is large enough (recall
Theorem~\ref{theorem:final-asymptotic-normality}) so that its contribution
to the final error is negligible relative to the sampling error in
$\PrivUnit$ but of smaller order than $\diffp_1$.  In each experiment,
we vary $\diffp_1$, the dominant term in the asymptotic convergence
of Theorem~\ref{theorem:final-asymptotic-normality}.

Our goal is to investigate whether private federated statistical
learning---which includes separated differentially private mechanisms
(providing local privacy protections against reconstruction) and central
differential privacy---can perform nearly as well as models fit without
privacy. We present results both for models trained \emph{tabula rasa} (from
scratch, with random initialization) as well as those pre-trained on other
data, which is natural when we wish to update a model to better
reflect a new population. Within each figure plotting results, we
plot the accuracy of the current model $\theta_k$ at iterate $k$
versus the best accuracy achieved by a \emph{non-private} model,
providing error bars over multiple trials. In short, we find the following:
we can get to reasonably strong accuracy---nearly comparable with
non-private methods---for large values of local privacy parameter $\diffp$.
However, with smaller values, even using (provably) optimal procedures can
cause substantial performance degradation.


  
\paragraph{Centralized aggregation}
In
our large-scale real-data experiments, we include
the centralized privacy protections by projecting~\eqref{eq:server-agg} the
updates onto an $\ell_2$-ball of radius $\cliprad$, adding
$\normal(0, \sigma^2 I)$ noise with $\sigma^2 = 
Tq^2 \cliprad^2 / \diffp_\renparam$,
where $q$ is the fraction of users we subsample, $T$ the
total number of updates, and $\diffp_\renparam$ the R\'{e}nyi-privacy
parameter we choose.  In our
experiments, we report the \emph{resulting} centralized privacy levels for
each experiment.


We make a concession to computational feasibility, slightly reducing the
value $\sigma$ that we actually use in our experiments beyond the
theoretical recommendations. In particular, we use batch size $m$,
corresponding to $q = m / N$, of at most $200$ and test $\sigma \in \{.001,
.002, .005, .01\}$, depending on our experiment, which of course requires
either larger $\diffp_\renparam$ above or larger subsampling rate $q$ than
our effective rate. As \citet{DPFedLearn17} note, increasing this batch size
has negligible effect on the accuracy of the centralized model, so that we
report results (following~\cite{DPFedLearn17}) that use this inflated batch
size estimate from a population of size $N =$ 10,000,000.

\subsection{Simulated logistic regression experiments}
\label{sec:simulated-logreg}

\newcommand{\snr}{\tau}

Our first collection of experiments focuses on a logistic regression
experiment in which we can exactly evaluate population losses and errors in
parameter recovery. We generate data pairs $(X_i, Y_i) \in \sphere^{d-1}
\times \{\pm 1\}$ according to the logistic model
\begin{equation*}
  p_\theta(y \mid x) = \frac{1}{1 + \exp(-y \theta^T x)},
\end{equation*}
where the vectors $X_i$ are i.i.d.\ uniform on the sphere $\sphere^{d-1}$.
In each experiment we choose the true parameter $\theta\opt$ uniformly on
$\snr \cdot \sphere^{d-1}$ so that $\snr > 0$ reflects the
signal-to-noise ratio in the problem.  In this case, we perform the
stochastic gradient method as in Sections~\ref{sec:asympt}
and~\ref{sec:asymptotics-local} on the logistic loss $\loss(\theta; (x, y))
= \log(1 + e^{-y x^T \theta})$.  For a given privacy level $\diffp$, in the
update~\eqref{eqn:separated-product-mechanism} we use the parameters
\begin{equation*}
  \diffp_1 = \frac{13 \diffp}{16},
  ~~
  \diffp_2 = \frac{\diffp}{8},
  ~~
  p = \frac{e^{\diffp/16}}{1 + e^{\diffp/16}},
\end{equation*}
that is, we privatize the direction $g / \ltwo{g}$ with $\diffp_1
= \frac{13}{16} \diffp$-local privacy and flip probability
$p = \frac{e^{\diffp/16}}{1 + e^{\diffp/16}}$ and the magnitude $\ltwo{g}$
using $\frac{\diffp}{8}$ privacy.

Within each experiment, we draw a sample $(X_i, Y_i) \simiid P$ of size $N$
as above, and then perform $N$ stochastic gradient iterations using the
mechanism~\eqref{eqn:separated-product-mechanism}.  We choose stepsizes
$\stepsize_k = \stepsize_0 k^{-\beta}$ for $\beta = .51$, where the choice
$\stepsize_0 = \sqrt{\diffp / d}$, so that (for large magnitude noises) the
stepsize is smaller---this reflects the ``optimal'' stepsize tuning in
standard stochastic gradient methods~\cite{NemirovskiJuLaSh09}.  Letting
$L(\theta) = \E[\loss(\theta; X, Y)]$ for the given distribution, we then
evaluate $L(\theta_k) - L(\theta\opt)$ and $\ltwo{\theta_k - \theta\opt}$
over iterations $k$, where $\theta\opt = \argmin_\theta L(\theta)$.  In
Figure~\ref{fig:logistic-plots}, we plot the results of experiments using
dimension $d = 500$, sample size $N = 10^5$, and signal size $\snr =
4$. (Other dimensions, sample sizes, and signal strengths yield
qualitatively similar results.) We perform 50 independent experiments,
plotting aggregate results. On the left plot, we plot the error of the
private stochastic gradient methods, which---as we note earlier---are
(minimax) optimal for this problem. We give $90\%$ confidence intervals
across all the experiments, and we see roughly the expected behavior: as the
privacy parameter $\diffp$ increases, performance approaches that of the
non-private stochastic gradient estimator. The right plot provides box plots
of the error for the stochastic gradient methods as well as the non-private
maximum likelihood estimator.

\begin{figure}[t]
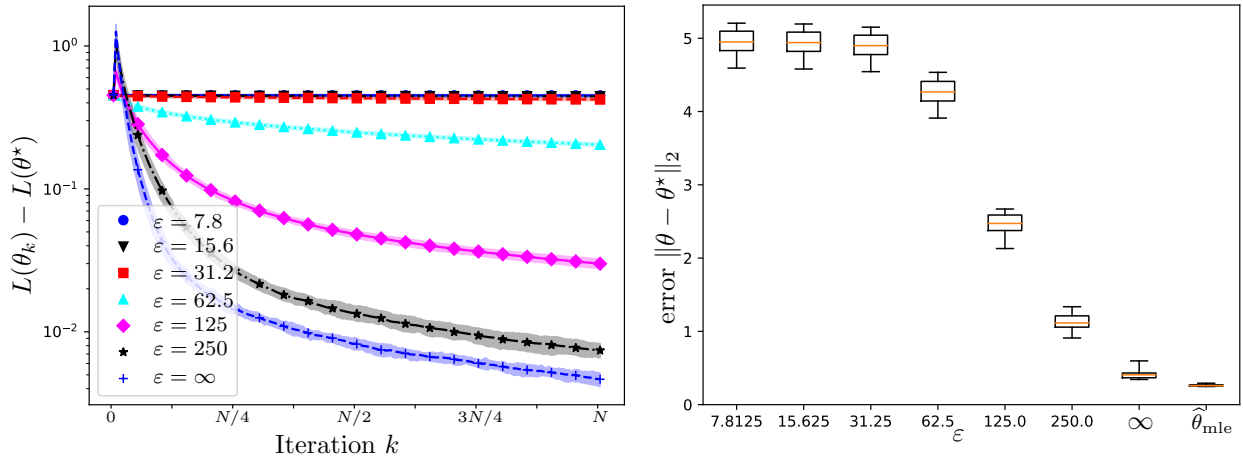

  \begin{center}
    \begin{tabular}{cc}
      \begin{overpic}[width=.49\columnwidth]{
          Figures/optimality-gaps-100000-by-500}
        \put(10,2){
          \tikz{\path[draw=white,fill=white] (0, 0) rectangle (7cm,.25cm);}
        }
        \put(-5,24){
          \rotatebox{90}{{\small $L(\theta_k) - L(\theta\opt)$}}}
        \put(20,9.5){\scriptsize $\diffp = \infty$}
        \put(20,14){\scriptsize $\diffp = 250$}
        \put(20,18){\scriptsize $\diffp = 125$}
        \put(20,22){\scriptsize $\diffp = 62.5$}
        \put(20,26.5){\scriptsize $\diffp = 31.2$}
        \put(20,31){\scriptsize $\diffp = 15.6$}
        \put(20,35){\scriptsize $\diffp = 7.8$}
        \put(12.2,3){\tiny $0$}
        \put(30,3){\tiny $N/4$}
        \put(50,3){\tiny $N/2$}
        \put(70,3){\tiny $3N/4$}
        \put(92,3){\tiny $N$}
        \put(40,-2){\small Iteration $k$}
      \end{overpic} &
      \begin{overpic}[width=.47\columnwidth]{
          Figures/parameter-box-errors-100000-by-500}
        \put(-3,20){\rotatebox{90}{{\small error $\ltwos{\theta - \theta\opt}$}}}
        \put(48,0){{\small $\diffp$}}
        \put(75,1){
          \tikz{\path[draw=white,fill=white] (0, 0) rectangle (1.7cm,.25);}
        }
        \put(78,2){{\small $\infty$}}
        \put(89,2){{\scriptsize $\what{\theta}_{\rm mle}$}}
      \end{overpic}
    \end{tabular}
    \caption{\label{fig:logistic-plots} Logistic regression simulations with
      sample size $N = 10^5$ and dimension $d = 500$. Left: optimization
      error versus iteration $k$ in the stochastic gradient iteration, with
      95\% error bars.  Right: box plot error $\ltwos{\theta - \theta\opt}$
      of the averaged iterate $\wb{\theta}_N = \frac{1}{N} \sum_{k = 1}^N
      \theta_k$ over the stochastic gradient methods.
      The horizontal axis indexes privacy level $\diffp$,
      and $\what{\theta}_{\rm mle}$ denotes the error of the maximum
      likelihood estimator.}
  \end{center}
\end{figure}

Perhaps the most salient point here is that, to maintain utility, we require
non-trivially large privacy parameters for this (reasonably)
high-dimensional problem; without $\diffp \ge d/8$ the performance is
essentially no better than that of a model using $\theta = 0$, that is,
random guessing. (And alternative stepsize choices $\stepsize_0$ do
not help.)

\subsection{Fitting deep models tabula rasa}

We now present results on deep network model fitting for image
classification tasks when we initialize the model to have
i.i.d.\ $\normal(0,1)$ parameters.  Recall our
mechanism~\eqref{eqn:lentil-pasta}, so that we allocate $\gamma =
\gamma(0.99 \diffp_1)$ for the spherical cap threshold and $p =
p(.01\cdot\diffp_1)$ for the probability with which we choose a particular
spherical cap in the randomization $\PrivUnit(\cdot, \gamma,p)$, which
ensures $\diffp_1$-differential privacy.


\begin{table}[ht]
  \centering
  \begin{tabular}{| c| c | c || c | c |c |}
    \hline
    \multicolumn{3}{|c|| }{MNIST parameters}
    & \multicolumn{3}{|c|}{CIFAR parameters} \\
    \hline
    $\diffp_1$ & $\gamma(0.99\diffp_1)$ &
    $p = \frac{e^{.01 \diffp_1}}{1 + e^{.01 \diffp_1}}$ &
    $\diffp_1$ & $\gamma(0.99\diffp_1)$ &
    $p = \frac{e^{.01 \diffp_1}}{1 + e^{.01 \diffp_1}}$ \\
    \hline
    $500$ & $0.01729$ & 1.0 & $5000$ & $0.09598$ & 1.0 \\
    $250$ &$0.01217$ & 0.924 & $1000$ &$0.04291$ & 1.0 \\
    $100$ &$0.00760$ & 0.731 & $500$ &$0.03027$ & 0.993 \\
    $50$ &$0.00526$ & 0.622 & $100$ &$0.01331$ & 0.731 \\ [1ex]
    \hline
  \end{tabular}
  \caption{\label{table:mnist}
    Parameters in experiments training six-layer CNNs
    from random initializations.}
\end{table}

\paragraph{MNIST}
We begin with results on the MNIST handwritten digit recognition
dataset~\cite{LeCunBoBeHa98}.  We use the default six-layer convolutional
neural net (CNN) architecture of the TensorFlow
tutorial~\cite{TensorFlowTutorial18} with default optimizer.  The network
contains $d = 3,274,634$ parameters.  We proced in iterations $t = 1, 2,
\ldots, T$. In each round, we randomly sample $B = 200$ sets of $m = 100$
images, then on each batch $b = 1, 2, \ldots, B$, of $m$ images, approximate
the update~\eqref{eqn:prox-point-update} by performing $5$ local gradient
steps on $\frac{1}{m} \sum_{j = 1}^m \loss(\theta, x_{b,j})$ for batch $b$
to obtain local update $\Delta_b$.  To sample magnitude $\ltwo{\Delta_b}$ of
these local updates, we use Alg.~\ref{alg:Scalar}, $\ScalarDP(\cdot,
\diffp_2=10, k = \ceil{e^{2\diffp_2 / 3}} = 29, r_{\max} = 5)$, and for the
unit vector direction privatization we use Alg.~\ref{alg:unit_mech}
($\PrivUnit$) and vary $\diffp_1$ across experiments.
Table~\ref{table:mnist} summarizes the privacy parameters
we use, with corresponding spherical cap radius $\gamma$ and
probability of sampling the correct cap $p$ in Alg.~\ref{alg:unit_mech}.
We use update radius (the $\ell_2$-ball to which we project the
stochastic gradient updates) $\cliprad = 100$ and standard deviation
$\sigma = .005$, so that the moment-accountant~\cite{Abadietal16}
guarantees that (if we use a population of size $N = 10^7$) and
100 rounds, the resulting model enjoys $(\diffp_{\textup{cent}} = 1.9,
\delta = 10^{-9})$-central differential privacy (Def.~\ref{definition:dp}).
We plot standard errors over 20 trials in Fig.~\ref{fig:tabula-rasa}(a).



\paragraph{CIFAR10}

We now present results on the CIFAR10 dataset~\cite{Krizhevsky09}. We use
the same CNN model architecture as in the Tensorflow tutorial
\citep{CifarTutorial18} with an Adam optimizer and dimension $d =$
1,068,298.  We preprocess the data as in the Tensorflow tutorial so that the
inputs are $24\times 24$ with $3$ channels.  In analogy to our experiment
for MNIST, we shuffle the training images into $B = 75$ batches, each with
$m = 500$ images, approximating the update~\eqref{eqn:prox-point-update} via
5 local gradient steps on the $m$ images.  As in the
mechanism~\eqref{eqn:lentil-pasta}, we use $\ScalarDP(\cdot, \diffp_2=10, k
= \lceil e^{\diffp_2/3}\rceil, r_{\max} = 2)$ to sample the magnitude of the
updates and $\PrivUnit(\cdot, \gamma(0.99\cdot\diffp_1), p(0.01 \cdot
\diffp_1))$, varying $\diffp_1$, for the direction.
See
Table~\ref{table:mnist} for the privacy parameters that we set in each
experiment.
We present the results in Figure~\ref{fig:tabula-rasa}(b) for
mechanisms that satisfy $(\diffp_1,\diffp_2 = 10)$-separated DP where
$\diffp_1 \in \{100,500,1000,5000 \}$.  The corresponding
$\ell_2$-projection radius $\cliprad = 30$ and centralized noise addition of
variance $\sigma = .002$ guarantee, again via the
moments-accountant~\cite{Abadietal16}, that with a ``true'' population size
$N = 10^7$ and $T= 200$ rounds, the final model is $(\diffp_{\textup{cent}}
= 1.76,\delta = 10^{-9})$-differneitally private. We plot the difference in
accuracies between federated learning and our private federated learning
system with standard errors over 20 trials.

\begin{figure}[H]
  \centering
  \begin{tabular}{cc}
    \includegraphics[width =.53\columnwidth]{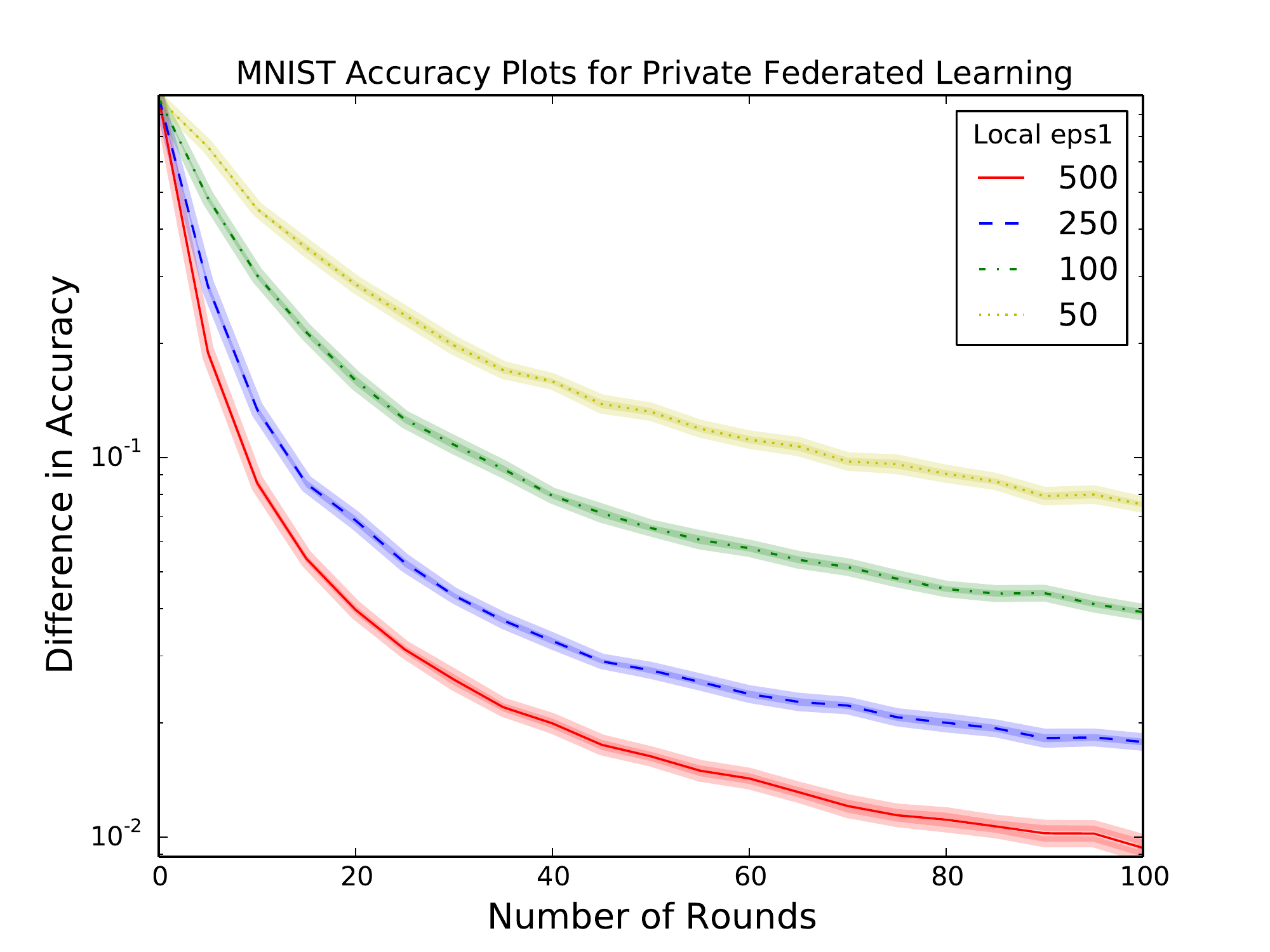}
    &
    \hspace{-1cm}
    \includegraphics[width=.53\columnwidth]{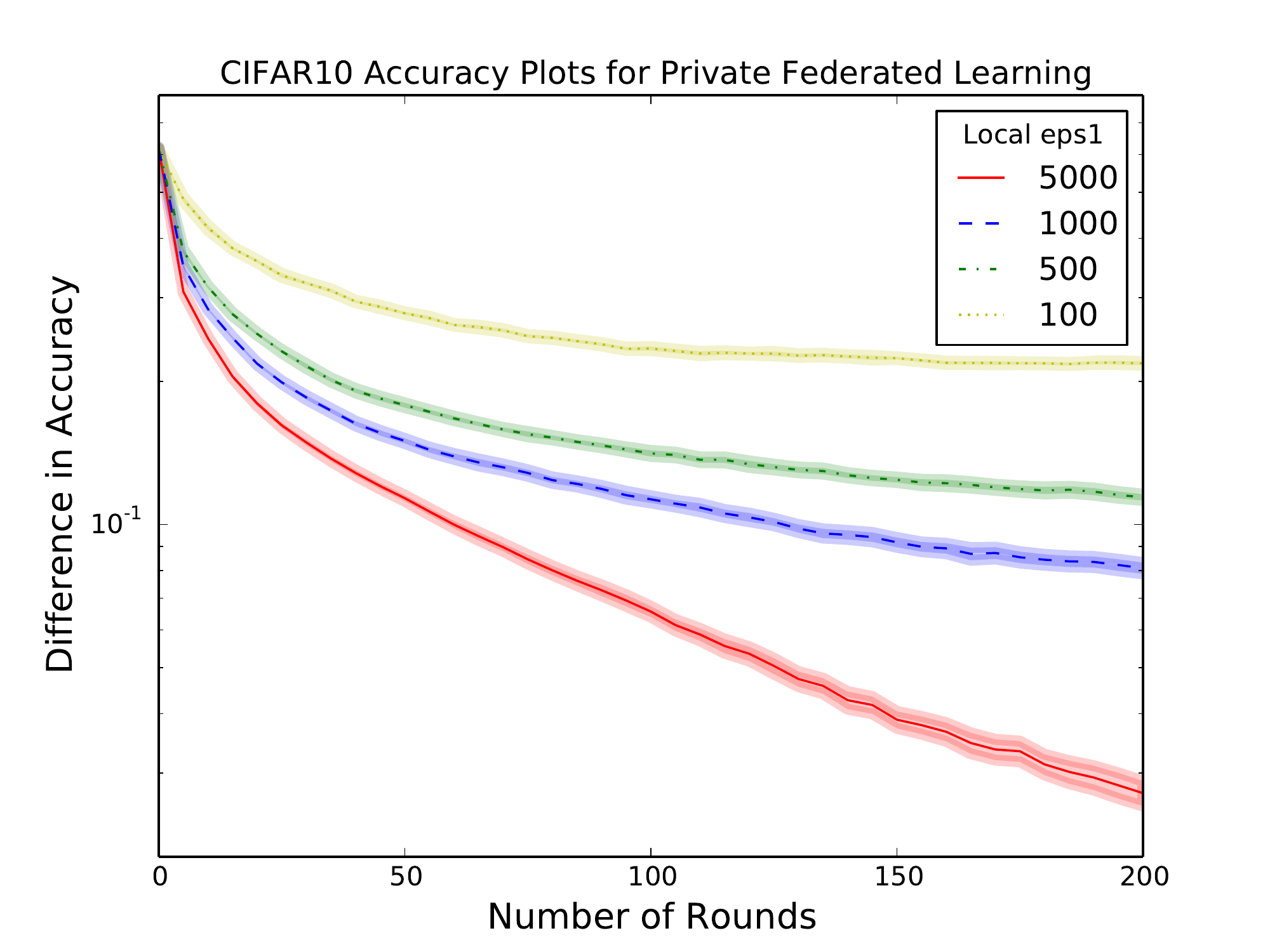} \\
    (a) & (b)
  \end{tabular}
  \caption{\label{fig:tabula-rasa} Accuracy plots for image classification
    comparing the private federated learning approach (indexed by privacy
    parameter $\diffp_1$) with non-private model updates.
    Horizontal axis indexes number of stochastic gradient
    updates, vertical the gap in test-set accuracy between
    the final non-private model and the private model $\theta_k$ at
    the given round. (a) MNIST
    dataset. (b) CIFAR-10 dataset.}
\end{figure}

\subsection{Pretrained models}
\label{sec:pretrained-models}

Our final set of experiments investigates refitting a model on a new
population. Given the large number of well-established and downloadable deep
networks, we view this fine-tuning as a realistic use case for private
federated learning.


\paragraph{Image classification on Flickr over 100 classes}
We perform our first model tuning experiment on a pre-trained ResNet50v2
network~\cite{HeZhReSu16b} fit using ImageNet
data~\cite{DengDoSoLiLiFe09}, whose reference implementation is available at
the website~\cite{TensorNets}.
Beginning from the pre-fit model,
we consider only the final (softmax) layer and final convolutional
layer of the network to be modifiable, refitting the model to
perform 100-class multiclass classification on a subset of
the Flickr corpus~\cite{ThomeeShFrElNiPoBoLi16}.
There are $d =$ 1,255,524 parameters.

We construct a subsample for each of our experiments as follows.
We choose 100  classes (uniformly at random) and $2000$ images
from each class, yielding $2 \cdot 10^5$ images. We randomly permute these
into a  9:1 train/test split. In each iteration of the stochastic
gradient method, we randomly choose $B = 100$ sets of
$m = 128$ images, performing an approximation to the proximal-point
update~\eqref{eqn:prox-point-update} using 15 gradient steps
for each batch $b = 1, 2, \ldots, B$.
%
Again following the mechanism~\eqref{eqn:lentil-pasta}, for the magnitude of
each update we use $\ScalarDP(\cdot, \diffp_2=10, k = \lceil
e^{\diffp_2/3}\rceil, r_{\max} = 10)$, while for the unit direction we use
$\PrivUnit(\cdot, \gamma(0.99\cdot\diffp_1), p(0.01 \cdot \diffp_1))$ while
varying $\diffp_1$.
We present the results in Figure
\ref{fig:Complex}(a) for mechanisms that satisfy $(\diffp_1,\diffp_2 =
10)$-separated DP where $\diffp_1 \in \{50,100,500,5000 \}$.
We plot the difference in accuracies between federated learning and our
private federated learning system with standard errors over 12 trials.


\paragraph{Next Word Prediction}
For our final experiment, we investigate performance of the private
federated learning system for next word prediction in a deep word-prediction
model.  We pretrain an LSTM on a corpus consisting of all Wikipedia entries
as of October 1, 2016~\cite{wikipedia}.  Our model architecture consists of
one long-term-short-term memory (LSTM) cell~\cite{Schmidhuber15} with a word
embedding matrix~\cite{MikolovSuChCoDe13} that maps each of 25,003 tokens
$j$ (including an unknown, end of sentence, and beginning of sentence
tokens) to a vector $w_j$ in dimension 256.  We use the Natural Language
Toolkit (NLTK) tokenization procedure to tokenize each sentence
and word~\cite{NLTK}. The LSTM cell has 256 units, which leads to 526,336
trainable parameters. Then we decode back into 25,003 tokens. In total,
there are $d = $ 13,352,875 trainable parameters in the LSTM.

We refit this pretrained LSTM on a corpus of all user comments on the
website Reddit from November 2017~\cite{Baumgartner17}, again using the NLTK
tokenization procedure~\cite{NLTK}.
Each stochastic update~\eqref{eq:server-agg} consists of
choosing a random batch of $B = 200$ collections of $m = 1000$ sentences,
performing the local updates~\eqref{eqn:prox-point-update}
approximately by computing 10 gradient steps within each batch
$b = 1, \ldots, B$. 
We use update parameters $\ScalarDP(\cdot, \diffp_2=10, k = \lceil
e^{\diffp_2/3}\rceil, r_{\max} = 5)$ and $\PrivUnit(\cdot,
\gamma(0.99\cdot\diffp_1), p(0.01 \cdot \diffp_1))$ while varying $\diffp_1$.
We present results in Figure~\ref{fig:Complex}(b) for mechanisms
that satisfy $(\diffp_1,\diffp_2 = 10)$-separated DP where $\diffp_1 \in
\{100,500,2500, 10000 \}$.
We choose the centralized projection $\cliprad = 100$ and
noise $\sigma$ in the aggregation~\eqref{eq:server-agg} to
guarantee $(\diffp_{\textup{cent}} = 3, \delta = 10^{-9})$ differential
privacy after $T = 200$ rounds.
We plot the difference in accuracies between federated learning and our
private federated learning system with standard errors over 20 trials.

\begin{figure}[H]
  \centering
  \begin{tabular}{cc}
    \includegraphics[width = 0.49\columnwidth]{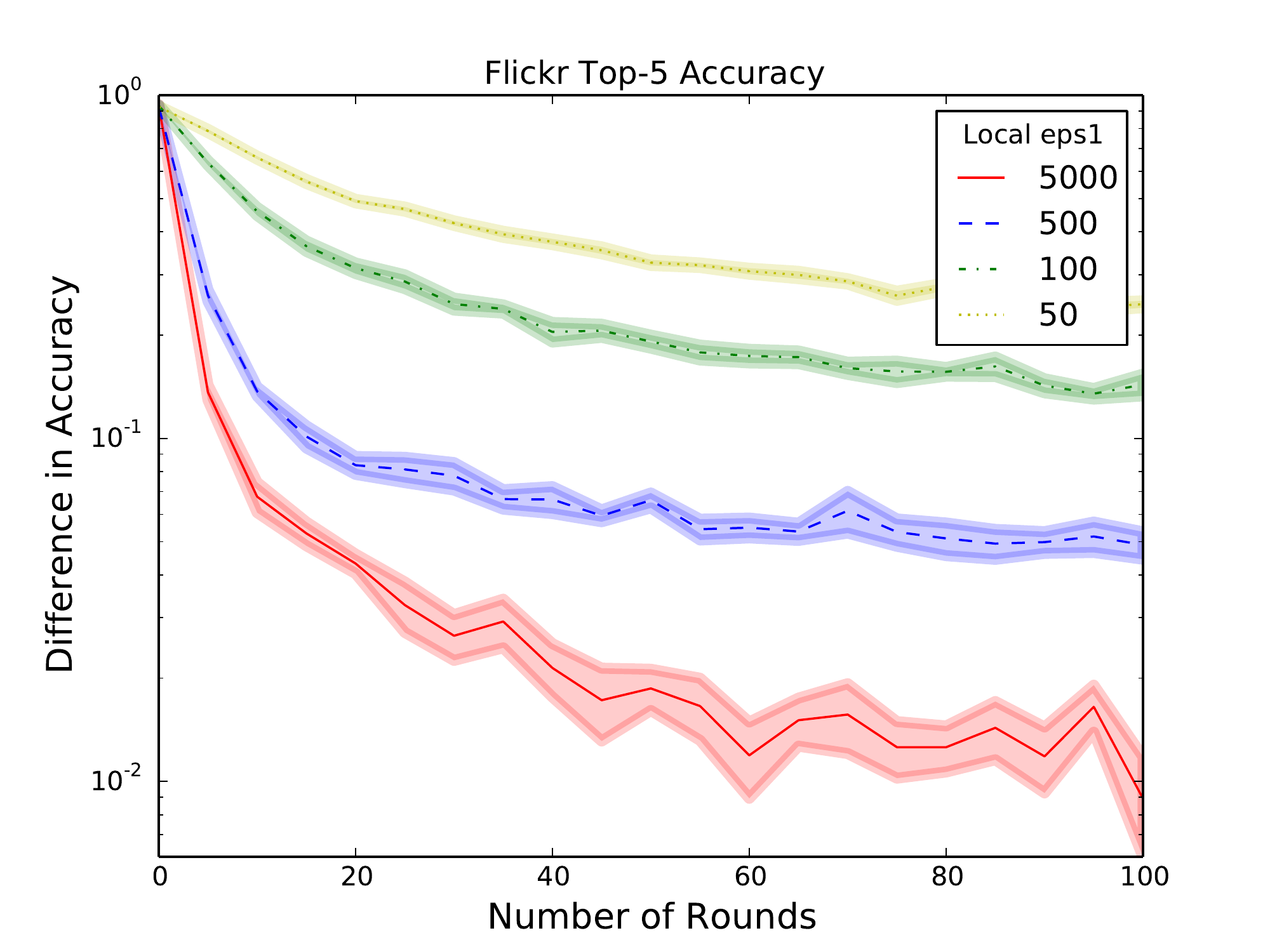} &
    \includegraphics[width=.49\columnwidth]{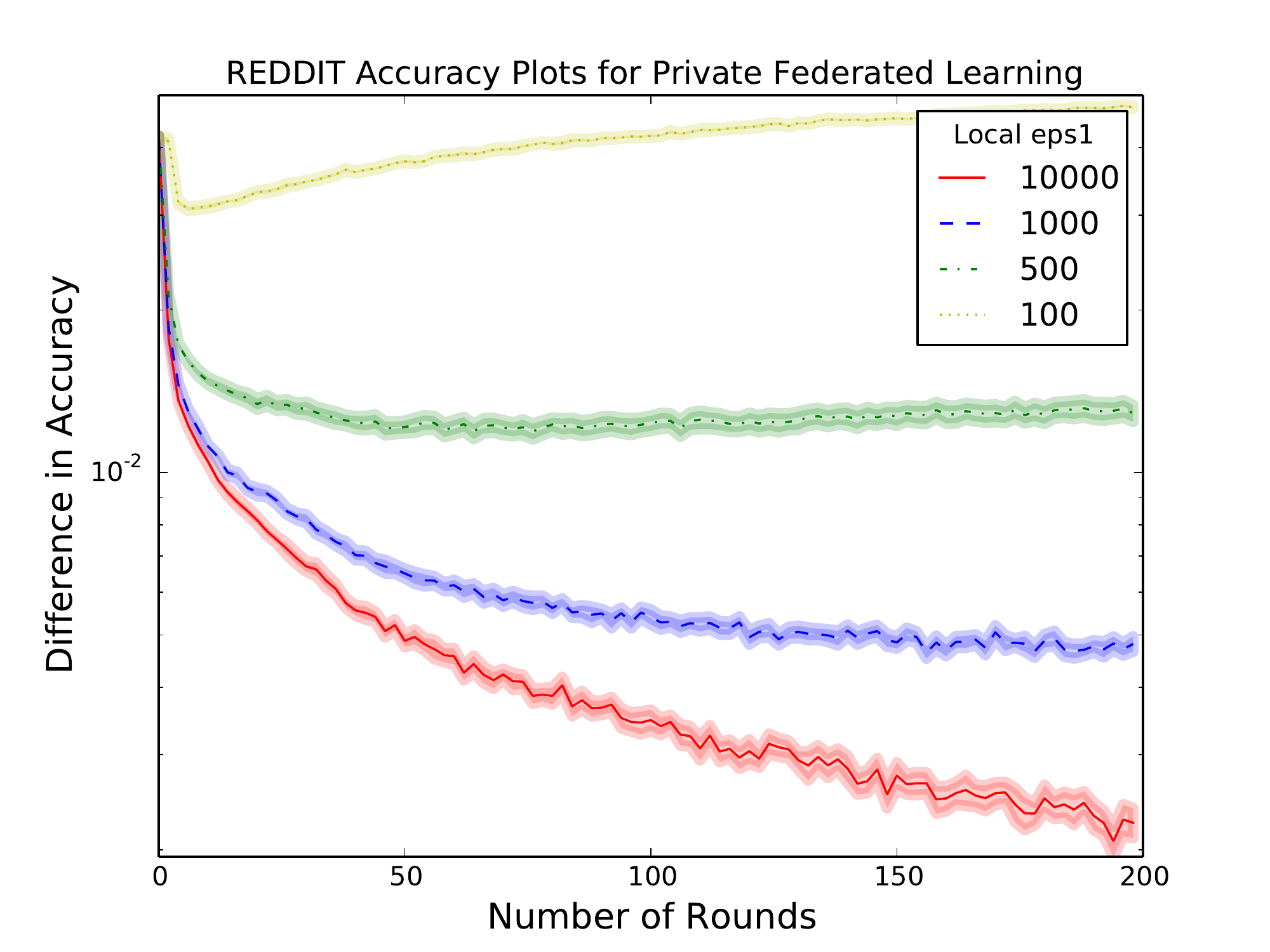} \\
    (a) & (b)
  \end{tabular}
  \caption{\label{fig:Complex}
    Accuracy plots for
    pretrained models comparing our private federated learning approach
    (labeled SDP with the corresponding $\diffp_1$ parameter) with various
    privacy parameter $\diffp$ and federated learning with clear model
    updates (labeled Clear).
    (a) Top 5 accuracy image classification: top
    100 classes from Flickr data with Resnet50v2 model pretrained on
    ImageNet.  (b) Next Word Prediction: Reddit data with pretrained LSTM on
    Wikipedia data with initial accuracy roughly $15.5\%$.}
\end{figure}


\section{Discussion and conclusion}

In this paper, we have described the analysis and implementation---with new,
minimax optimal privatization mechanisms---of a system for large-scale
distributed model fitting, or federated learning. In such systems, users may
prefer local privacy protections, though as this and previous
work~\cite{DuchiJoWa18,DuchiRo19} and make clear, providing small
$\diffp$-local differential privacy makes model-fitting extremely
challenging.  Thus, it is of substantial interest to understand what is
possible in large $\diffp$ regimes, and the corresponding types of privacy
such mechanisms provide; we have provided one such justification via
prior beliefs and reconstruction probabilities from oblivious adversaries.
We believe understanding appropriate privacy
barriers, which provide different types of protections at different levels,
will be important for the practical adoption of private procedures, and
we hope that the current paper provides impetus in this direction.

\section{Acknowledgements}\label{acks}

We thank Aaron Roth for helpful discussions on earlier versions of this
work. His comments helped shape the direction of the paper.

\clearpage

\bibliography{bib}
\bibliographystyle{abbrvnat}
 
 \newpage
\appendix


\section{Technical proofs}
\label{app:proof}

\subsection{Proof of Lemma~\ref{lemma:precision-recall-bounds}}
\label{sec:proof-precision-recall-bounds}

We prove each result in turn. We begin with the precision bound.
Fix $p > 0$, and
assume that $p v^T \ones \ge v^T \E[X]$. Then
using that $\var(v_j (X_j - \E[X_j]))
\le v_j \frac{m}{j}$, and is 0 for $j < m$.
By Bernstein's inequality, we then have
\begin{align}
  \P(\precision(v, X) \ge p)
  & = \P(v^T (X - \E[X]) \ge p v^T \ones - v^T \E[X])
  \nonumber \\
  & \le \exp\left(-\frac{(p v^T \ones - v^T \E[X])^2}{
    2 \sum_{j > m} v_j \var(X_j)
    + \frac{2}{3} (p v^T \ones - v^T \E[X])}\right)
  \nonumber \\
  & \le \exp\left(-
  \min\left\{\frac{(p v^T \ones - v^T \E[X])^2}{
    4 \sum_{j > m} v_j \var(X_j)},
  \frac{3}{4} (p v^T \ones - v^T \E[X])\right\}
  \right)
  \label{eqn:exponential-precision}
\end{align}
For $v^T \ones \ge \gamma m$, assuming that
$p \ge \frac{1}{\gamma}$, we have
\begin{equation*}
  p v^T \ones - v^T \E[X]
  \ge p \gamma m - \sum_{j = 1}^{\gamma m} \E[X_j]
  = p \gamma m - m - \sum_{j = m+1}^{\gamma m}
  \frac{m}{j}
  \ge (p \gamma - 1) m -
  m \int_m^{\gamma m} \frac{1}{t} dt
  = (p \gamma - 1 - \log \gamma) m.
\end{equation*}
For the first term in the exponent~\eqref{eqn:exponential-precision},
the ratio again is maximized by
$v = (\ones_{\gamma m}, \zeros_{d - \gamma m})$, and we have
\begin{equation*}
  \sum_{j = m+1}^d v_j \var(X_j)
  \le \sum_{j > m}^{\gamma m} \frac{m}{j}
  \le m \int_{m}^{\gamma m} \frac{1}{t} dt
  = m \log \gamma.
\end{equation*}
Substituting these bounds in Eq.~\eqref{eqn:exponential-precision},
we have
\begin{equation*}
  \P(\precision(v, X) \ge p)
  \le \exp\left(-
  \min\left\{\frac{(p \gamma - 1 - \log \gamma)^2 m}{4 \log \gamma},
  \frac{3}{4} (p \gamma - 1 - \log \gamma) m \right\}\right).
\end{equation*}

For the recall bounds, we perform a similar derivation, assuming
$r \le \half$ so that $(1 - r) \ge r$. We temporarily
assume $(r \ones - v)^T \E[X] \ge 0$; we shall see that this
holds.
By Bernstein's inequality, we have
\begin{align}
  \nonumber
  \P(\recall(v, X) \ge r)
  & = \P(v^T X \ge r \ones^T X)
  = \P((v - r \ones)^T (X - \E[X])
  \ge (r \ones - v)^T \E[X]) \\
  & \le \exp\left(-\frac{((r \ones - v)^T \E[X])^2}{
    2 \sum_{j > m} (r - v_j)^2 \var(X_j)
    + \frac{2}{3}
    (r \ones - v)^T \E[X]}\right) \nonumber \\
  & \le \exp\left(-\min\left\{
  \frac{((r \ones - v)^T \E[X])^2}{
    4 \sum_{j > m} (r - v_j)^2 \var(X_j)},
  \frac{3}{4} (r \ones - v)^T \E[X]\right\}\right).
  \label{eqn:exponential-recall}
\end{align}
We consider each term in the minimum~\eqref{eqn:exponential-recall}
in turn.
When $\sum_j v_j \le \gamma m$, we have
\begin{equation*}
  \sum_{j > m} (r - v_j) \var(X_j)^2
  \le m(1 - r)^2 \int_m^d \frac{1}{t} dt
  = m (1 - r)^2 \log \frac{d}{m},
\end{equation*}
while
\begin{align*}
  (r \ones - v)^T \E[X]
  & \ge r \ones^T \E[X] -
  \sum_{j = 1}^{\gamma m} \E[X]
  = r m + r \sum_{j = m+1}^d \frac{m}{j}
  - m - \sum_{j = m+1}^{\gamma m} \E[X_j] \\
  & \ge r m + r m \int_{m + 1}^d \frac{1}{t} dt
  - m - m \int_m^{\gamma m} \frac{1}{t} dt
  = m \left[r \left(1 +  \log\frac{d}{m + 1}\right)
    - 1 - \log \gamma\right].
\end{align*}
Substituting into inequality~\eqref{eqn:exponential-recall}
gives the second result of the lemma.

\subsection{Proof of Theorem~\ref{theorem:privacy-infinity}}
\label{appendix:proof-privacy-infinity}

Let $u \in \{-1, +1 \}^d$ and
$U \sim \uniform(\{-1,+1 \}^d)$.  The vector $V \in \{-1, 1\}^d$
sampled as in \eqref{eqn:w-flip-mechanismINFTY}, has p.m.f.\
\begin{equation*}
  p(v \mid u)
  \propto \begin{cases} 1 / \P(\<U, u\> > \kappa) & \mbox{if~}
    \<v, u \> > \kappa \\
    1 / \P(\<U, u\> < \kappa) & \mbox{if~} \<v, u\> \leq \kappa.
  \end{cases}
\end{equation*}
The event that $\<U, u \> = \kappa$ when $\frac{d+\kappa+1}{2} \in \Z$
implies that $U$ and $u$ match in exactly $\frac{d+\kappa+1}{2}$
coordinates; the number of such matches is
$\choose{d}{(d+\kappa+1)/2}$. Computing the binomial sum, we have
\begin{align*}
  \P(\< U, u \> > \kappa) =  \frac{1}{2^d}
  \sum_{\ell = \lceil \frac{d+\kappa+1}{2} \rceil}^d
  \choose{d}{\ell}
  \quad \mbox{and}
  \quad
  \P(\< U, u \> \leq \kappa) =  \frac{1}{2^d}
  \sum_{\ell = 0 }^{\lceil\frac{d+\kappa+1}{2}\rceil -1} \choose{d}{\ell}.
\end{align*}
As $\P(\< U ,u\> > \kappa)$ is decreasing in $\kappa$ for any $u, u' \in
\{-1,+1 \}^d$ and $v \in \{-1, 1\}^d$ we have
\begin{equation*}
  \frac{p(v \mid u) }{p(v \mid u')} \leq \frac{p_0}{1 - p_0}
  \cdot \frac{\P(\< U, u' \> \leq \kappa)}{\P(\< U, u \> > \kappa)}
  = e^{\diffp_0}
  \cdot
  \frac{\sum_{\ell =0}^{d \tau-1 }\choose{d}{\ell}}{
    \sum_{\ell = d\tau }^d \choose{d}{\ell}},
\end{equation*}
where $\tau = \ceil{(d+\kappa)/2}$.
Bounding this by $e^{\diffp+\diffp_0}$ gives the result.

\subsection{Proof of Corollary~\ref{corollary:ell_infty}}

Using Theorem~\ref{theorem:privacy-infinity}, we seek to bound the
quantity~\eqref{eq:sufficient_kappaGEN} for various $\kappa$ values.  We
first analyze the case when $\kappa \leq \sqrt{3/2d+ 1}$.  We use the
following claim to bound each term in the summation in this case.
\begin{claim}[See Problem 1 in \cite{Kazarinoff}]
For even $d \geq 2$, we have 
$$
\choose{d}{d/2} \leq \frac{2^{d+1/2}}{\sqrt{3 d + 2}}
$$
\label{claim:middle_binomial}
\end{claim}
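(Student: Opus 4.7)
The plan is to prove this by induction after rewriting it in a squared, ratio-friendly form. Setting $d = 2n$, the claim becomes $\binom{2n}{n} \le 2^{2n+1/2}/\sqrt{6n+2}$, and squaring clears the awkward $\sqrt{2}$ factor to yield the equivalent statement
\begin{equation*}
  \binom{2n}{n}^2 \le \frac{16^n}{3n+1} \qquad \text{for all } n \ge 1.
\end{equation*}
I would verify the base case $n=1$ directly: $\binom{2}{1}^2 = 4 = 16/(3\cdot 1 + 1)$, so the inequality holds with equality (this matches the fact that the constant $\sqrt{3d+2}$ in the claim is tight at $d=2$).

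For the inductive step I would exploit the clean recursion
$\binom{2n+2}{n+1} = \binom{2n}{n} \cdot \tfrac{2(2n+1)}{n+1}$,
so that
\begin{equation*}
  \binom{2n+2}{n+1}^2 (3n+4)
  = \binom{2n}{n}^2 (3n+1) \cdot \frac{4(2n+1)^2 (3n+4)}{(n+1)^2 (3n+1)}.
\end{equation*}
Applying the inductive hypothesis to the first factor, it suffices to prove the polynomial inequality $4(2n+1)^2(3n+4) \le 16(n+1)^2(3n+1)$, equivalently $(2n+1)^2(3n+4) \le 4(n+1)^2(3n+1)$. Expanding both sides gives $12n^3 + 28n^2 + 19n + 4$ on the left and $12n^3 + 28n^2 + 20n + 4$ on the right, so the difference is exactly $n \ge 0$. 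This closes the induction.

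The entire argument is a short, self-contained computation; the only ``trick'' is choosing the right invariant $\binom{2n}{n}^2(3n+1) \le 16^n$, which is the squared form of the claim. I do not anticipate any real obstacle: the base case gives equality (confirming that the constant $3$ multiplying $d$ cannot be improved), and the inductive step reduces to a cubic polynomial identity whose slack is cleanly visible after expansion. Finally, translating back to $d = 2n$ and taking square roots recovers the stated bound $\binom{d}{d/2} \le 2^{d+1/2}/\sqrt{3d+2}$ for all even $d \ge 2$.
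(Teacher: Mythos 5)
Your induction is correct, and every step checks out: the squared reformulation $\binom{2n}{n}^2(3n+1)\le 16^n$ is exactly equivalent to the claim (since $2^{4n+1}/(6n+2)=16^n/(3n+1)$), the base case $n=1$ holds with equality, the recursion $\binom{2n+2}{n+1}=\binom{2n}{n}\cdot\frac{2(2n+1)}{n+1}$ is right, and the polynomial comparison $(2n+1)^2(3n+4)\le 4(n+1)^2(3n+1)$ does reduce to the difference being exactly $n\ge 0$. Note that the paper does not actually prove this claim at all --- it is stated with a pointer to Problem~1 of Kazarinoff's problem book, where the sharp form of the Wallis-type bound $\binom{2n}{n}\,4^{-n}\le 1/\sqrt{3n+1}$ is established (classically via the monotonicity of $\binom{2n}{n}^2(3n+1)/16^n$, which is the same invariant you identified). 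So your argument is a self-contained, elementary replacement for the citation rather than a departure from the paper's proof; it buys the reader a two-line verification in place of an external reference, and your observation that equality at $n=1$ pins down the constant $3$ is a nice bonus. No gaps.
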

Thus, when $\kappa < \sqrt{\tfrac{3d+2}{2}}$,
\begin{align*}
  \log\left( \sum_{\ell = 0}^{d \tau -1 } \choose{d}{\ell} \right) - \log\left( \sum_{\ell = d \tau}^{d} \choose{d}{\ell} \right)
& = 
\log\left( 1/2 + \frac{1}{2^d} \cdot \sum_{\ell = d/2}^{d\tau-1} \choose{d}{\ell} \right) - \log\left(1/2 -\frac{1}{2^d} \sum_{\ell = d/2}^{d \tau-1 } \choose{d}{\ell}  \right)
\\
&\leq  \log\left(1 +\kappa\cdot \left(\sqrt{\frac{2}{3d +2}}  \right) \right) - \log\left(1 - \kappa\cdot \left( \sqrt{\frac{2}{3d +2} } \right) \right).
\end{align*}
Hence, to ensure $\diffp$-differential privacy, it suffices to have
$$
\diffp \geq \log\left( 1 +\kappa\cdot \sqrt{\frac{2}{3d +2} } \right) - \log\left(1 - \kappa\cdot  \sqrt{\frac{2}{3d +2} } \right),
$$ so Eq.~\eqref{eq:sufficient_kappa1} follows.

We now consider the case where $\epsilon = \Omega(\log(d))$. We use the
following claim.
\begin{claim}[Lemma 4.7.2 in \citep{Ash90}]
Let $Z \sim \Bin{d}{1/2}$. Then for $0< \lambda < 1$, we have 
$$
\P(Z \geq d\lambda)
\geq \frac{1}{\sqrt{8 d\lambda(1-\lambda)}} \exp\left( -d \dkl{\lambda}{1/2}\right)
$$
\end{claim}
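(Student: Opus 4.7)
The plan is to reduce the tail to a single ``central'' binomial term and then invoke Stirling's formula with explicit bounds. Set $k = \lceil d\lambda \rceil$, so that
\[
\P(Z \ge d\lambda) = \P(Z \ge k) \ge \P(Z = k) = \binom{d}{k} \cdot 2^{-d},
\]
which converts the task into an explicit pointwise lower bound on a single binomial probability.

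Next, I would apply the two-sided Stirling estimate $\sqrt{2\pi n}\,(n/e)^n \le n! \le \sqrt{2\pi n}\,(n/e)^n \exp(1/(12n))$ to the three factorials in $\binom{d}{k} = d!/(k!\,(d-k)!)$, using the lower bound on $d!$ and the upper bound on $k!$ and $(d-k)!$. Setting $p = k/d$ and using the identity $p^{-dp}(1-p)^{-d(1-p)} \cdot 2^{-d} = \exp(-d\,\dkl{p}{1/2})$, this yields
\[
\binom{d}{k}\, 2^{-d} \ge \frac{1}{\sqrt{2\pi d\, p(1-p)}} \exp\bigl(-d\,\dkl{p}{1/2}\bigr) \cdot \exp\bigl(-\tfrac{1}{12k} - \tfrac{1}{12(d-k)}\bigr).
\]

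It remains to pass from $p = k/d$ to $\lambda$, since in general $p \in [\lambda, \lambda + 1/d]$. Here a first-order Taylor expansion gives $|\dkl{p}{1/2} - \dkl{\lambda}{1/2}| \le d^{-1}\bigl|\log(\lambda_\star/(1-\lambda_\star))\bigr|$ for some $\lambda_\star \in [\lambda, p]$, while $|p(1-p) - \lambda(1-\lambda)| \le 1/d$. The stated bound has $\sqrt{8}$ in place of the sharper $\sqrt{2\pi}$ and drops the $\exp(-1/(12k) - 1/(12(d-k)))$ Stirling correction; the gap between these constants and what Stirling actually delivers is exactly the slack needed to absorb the rounding error $p \mapsto \lambda$ and the Stirling remainder into a clean uniform inequality.

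The main obstacle is the boundary regime $\lambda \downarrow 0$ or $\lambda \uparrow 1$: there, $\lceil d\lambda \rceil/d$ can differ from $\lambda$ by a large multiplicative factor, and $\log(\lambda/(1-\lambda))$ in the Taylor bound blows up, so the substitution $p \mapsto \lambda$ in both the KL exponent and the $\sqrt{p(1-p)}$ prefactor requires care. These cases are handled by a separate check: when $k \in \{0,1\}$ the inequality is either trivial (the right-hand side is at most $1$) or verified by direct computation of $\binom{d}{k} 2^{-d}$, and for $k \ge 2$ we have $\lambda \ge 1/d$ (symmetrically for $1 - \lambda$), which controls both the relative discrepancy $p/\lambda$ and the Taylor remainder, letting the general bound go through.
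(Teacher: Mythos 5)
Your overall strategy---bound the tail below by its largest single term and apply two-sided Stirling---is the standard textbook proof of this bound (the paper itself offers no proof; it simply cites Ash), and it does go through when $d\lambda$ is an integer, which is the only case the paper ever uses it in (it is applied with $\lambda = \tau = \lceil (d+\kappa+1)/2\rceil/d$, so that $d\tau \in \Z$). The gap is in your final step. You assert that the slack between $\sqrt{8}$ and $\sqrt{2\pi}$ absorbs both the Stirling remainders and the rounding $\lambda \mapsto p = \lceil d\lambda\rceil/d$. It cannot: your own mean-value estimate shows the exponent moves by up to $d(p-\lambda)\bigl|\log\tfrac{\lambda_\star}{1-\lambda_\star}\bigr| \le \log\tfrac{p}{1-p} = \log\tfrac{k}{d-k}$, i.e.\ a multiplicative loss of $k/(d-k)$, which is unbounded as $\lambda \uparrow 1$, whereas the available constant slack is only $2/\sqrt{\pi} \approx 1.13$ (part of which is already consumed by the $e^{-1/(12k)-1/(12(d-k))}$ remainders). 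The boundary cases you defer to a ``separate check'' are exactly where things break, and no check can rescue them because the statement as written is \emph{false} for non-integer $d\lambda$: for $d=10$ and $\lambda = 0.85$ one has $\P(Z \ge 8.5) = \P(Z\in\{9,10\}) = 11/1024 \approx 0.0107$, while $\exp(-10\,\dkl{0.85}{1/2})/\sqrt{8\cdot 10\cdot 0.85\cdot 0.15} \approx 0.0670/3.19 \approx 0.021$. More dramatically, for any $\lambda \in ((d-1)/d,\,1)$ the left side equals $2^{-d}$ exactly while the right side behaves like $2^{-d}/\sqrt{8d(1-\lambda)} \to \infty$; and at the other end, as $\lambda \downarrow 0$ the right side again diverges like $2^{-d}/\sqrt{8d\lambda}$ while the left side is $1 - 2^{-d} < 1$, so your claim that the $k\le 1$ case is ``trivial because the right-hand side is at most $1$'' is also incorrect.

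The fix is not to push harder on the rounding but to restate the lemma with $d\lambda \in \Z$ (equivalently, to put $p = \lceil d\lambda\rceil/d$ rather than $\lambda$ on the right-hand side). Under that hypothesis your argument is correct in substance: $\P(Z \ge d\lambda) \ge \P(Z = d\lambda)$, the two-sided Stirling bounds give
\begin{equation*}
  \P(Z = d\lambda) \ge \frac{1}{\sqrt{2\pi d\lambda(1-\lambda)}}
  \exp\left(-d\,\dkl{\lambda}{1/2}\right)
  \exp\left(-\tfrac{1}{12 d\lambda} - \tfrac{1}{12 d(1-\lambda)}\right),
\end{equation*}
and one verifies $\sqrt{2\pi}\,e^{1/(12 d\lambda) + 1/(12 d(1-\lambda))} \le \sqrt{8}$ for all $1 \le d\lambda \le d-1$ once $d \ge 4$, with the remaining small-$d$ cases checked by direct enumeration. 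That integer version is all that the proof of Corollary~\ref{corollary:ell_infty} requires.
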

\noindent We then use this to obtain the following bound:
\begin{align*}
  \log\left( \sum_{\ell = 0}^{d \tau -1} \choose{d}{\ell} \right) - \log\left( \sum_{\ell =d \tau}^{d} \choose{d}{\ell} \right)
& \leq \half \log\left(8 d\tau(1-\tau) \right) + d \dkl{\tau}{1/2}
\end{align*}
To ensure $\diffp$-differential privacy, it is thus sufficient, for $\tau
\defeq \frac{\lceil \frac{d+\kappa+1}{2} \rceil }{d}$, to have
\begin{align*}
  \diffp &\geq \frac{1}{2} \log\left(8\cdot d \tau(1-\tau)\right) + d \cdot \dkl{\tau}{1/2},
\end{align*}
which implies the final claim of the corollary.

\subsection{Proof of Lemma~\ref{lemma:asymptotic-variance-private}}
\label{sec:proof-asymptotic-variance-private}

For shorthand, define the radius $R =
\ltwo{\nabla\loss( \theta\opt; X)}$, $\gamma = \gamma(\diffp_1)$,
let $U = \nabla \loss(\theta\opt; X) / \ltwo{\nabla \loss(\theta\opt; X)}$,
and write
\begin{align*}
  Z_1  = \PrivUnit\left(U  ; \gamma, p = 1/2\right)
  \qquad
  Z_2  = \ScalarDP(R, \diffp_2; k = \lceil e^{\diffp_2/3} \rceil, r_{\max})
\end{align*}
so that $Z = Z_1 Z_2$.  Using that $\E[(Z_2 - R)^2 \mid R] \le
O(r_{\max}^2 e^{-2 \diffp_2 / 3})$ by
Lemma~\ref{lemma:one-dim-absolute-error},
we have
\begin{align}
  \nonumber
  \E[Z(\theta\opt;X) Z(\theta\opt;X)^\top]
  & = \E\left[ \E\left[Z_1 Z_1^\top \mid U\right]
    \cdot \E\left[ Z_2^2\mid R \right] \right] \\
  & = \E\left[ \E\left[Z_1 Z_1^\top \mid U\right]
    \cdot \E\left[ (Z_2-R)^2\mid R \right] \right]
  + \E\left[R^2 \cdot \E[ Z_1 Z_1^\top \mid U] \right]
  \nonumber \\
  & \preceq
  O(1) r_{\max}^2 e^{-2\diffp_2/3} \cdot
  \E\left[ \E\left[Z_1 Z_1^\top \mid U\right]\right]
  + \E\left[R^2 \cdot \E[ Z_1 Z_1^\top \mid U] \right].
  \label{eqn:bound-zztop}
\end{align}
We now focus on the term $\E[ Z_1 Z_1^\top \mid U]$.
Recall that $V$
is uniform on $\{v \in \sphere^{d-1} : \<v, u\> \ge \gamma\}$ with
probability $\half$ and uniform on the
complement $\{v \in \sphere^{d-1} : \<v, u\> < \gamma\}$ otherwise
(Eq.~\eqref{eqn:w-flip-mechanism}).
Then for $W \sim \uniform(\sphere^{d-1})$, we obtain
$\E[VV^\top \mid u]
= \half \E[WW^\top \mid \<W, u\> \ge \gamma]
+ \half \E[WW^\top \mid \<W, u\> < \gamma]$, where
\begin{equation*}
  \E[WW^\top \mid \<W, u\> \ge \gamma]
  \preceq uu^\top + \frac{1 - \gamma^2}{d-1}
  (I_d - uu^\top)
  ~~~ \mbox{and} ~~~
  \E[WW^\top \mid \<W, u\> \le \gamma]
  \preceq uu^\top + \frac{1}{d} (I_d - uu^\top).
\end{equation*}
Both of these are in turn bounded by $uu^\top + (1/d) I_d$.
Using that the normalization $m$ defined in Eq.~\eqref{eqn:norm-of-W}
satisfies $m \gtrsim \min\{\diffp_1, \sqrt{\diffp_1}\} / \sqrt{d}$ by
Proposition~\ref{proposition:ltwo-utility},
we obtain
\begin{align*}
  \E[Z_1 Z_1^\top \mid U = u]
  & \preceq \frac{1}{m^2} uu^\top + \frac{1}{d m^2} I_d
  \preceq \frac{d}{\diffp_1 \wedge \diffp_1^2}
  uu^\top + \frac{1}{\diffp_1 \wedge \diffp_1^2} I_d.
\end{align*}

Substituting this bound into our earlier
inequality~\eqref{eqn:bound-zztop} and using that $R U = \nabla
\loss(\theta\opt; X)$, we obtain
\begin{align*}
  & \E[Z(\theta\opt;X) Z(\theta\opt;X)^\top]
  \preceq O(1) \cdot \frac{d r^2_{\max} e^{-2\diffp_2/3}}{\diffp_1
    \wedge \diffp_1^2 }
  \cdot \E[UU^\top + (1/d) I_d] \\
  & \qquad ~ 
  + O(1) \cdot \left(
  \frac{d}{\diffp_1 \wedge \diffp_1^2} \cdot
  \E\left[ \nabla\loss(\theta\opt; X) \nabla\loss(\theta\opt;X)^\top\right]
  + \frac{1}{\diffp_1 \wedge \diffp_1^2} \cdot
  \E\left[\ltwo{\nabla \loss(\theta\opt; X)}^2 I_d\right] \right).
\end{align*}
Noting that $\tr(\cov(W))
= \E[\ltwo{W}^2]$ for any random vector $W$ gives the lemma.

\section{Scalar sampling}
\label{sec:proofs-scalar-sampling}

\subsection{Proof of Lemma~\ref{lem:ScalarUtil}}
\label{sec:proof-scalar-utility}

That the mechanism is $\diffp$-differentially private
is immediate as randomized response is $\diffp$-differentially private.
To prove that $Z$ is
unbiased, we note that
\begin{equation*}
  \E[J] = \frac{k r}{r_{\max}}
  ~~ \mbox{and} ~~
  \E[Z \mid J] = \frac{r_{\max}}{k} J,
\end{equation*}
so that $\E[Z] = \frac{r_{\max}}{r} \E[J] = r$.
To develop the bounds on the variance of $Z$,
we 
use the standard decomposition of variance into conditional
variances, as
\begin{equation*}  
  \var(Z)
  = \E[(Z - r)^2] = \E\left[ \var\left[ Z \mid J \right] \right]
  + \var\left[ \E\left[ Z \mid J \right] \right]
  = \E[\var[Z \mid J]] + \frac{r_{\max}^2}{k^2} \var[J].
\end{equation*}
We have $\var[Z \mid J] = a^2 \cdot\var[\what{J} \mid J]$.  Further, we have 
\begin{equation*}
  \E[\what{J} ^2 \mid J] =
  \left( \frac{e^\diffp - 1}{e^\diffp + k} \right) \cdot  J^2
  + \left(\frac{1}{e^\diffp + k} \right) \cdot
  \sum_{j = 0}^k j^2
  ~~~ \mbox{and} ~~~
  \E[\what{J} \mid J] =  \left( \frac{e^\diffp - 1}{e^\diffp + k} \right) \cdot  J
  + \left(\frac{1}{e^\diffp + k} \right) \cdot
  \sum_{j = 0}^k j.
\end{equation*}
Combining the equalities and using that
$\var(Y) = \E[Y^2] - \E[Y]^2$, we have
\begin{align*}
  \var[\what{J} \mid J]
  & = \left(  \frac{e^\diffp - 1}{e^\diffp + k} \right)^2 \cdot \left( \left( \frac{e^\diffp + k}{e^\diffp - 1} - 1 \right) \cdot J^2 - \frac{k(k+1)}{e^\diffp - 1} \cdot J + \frac{k(k+1)(2k+1)(e^\diffp + k)}{6(e^\diffp - 1)^2} - \frac{k^2(k+1)^2}{4(e^\diffp - 1)^2} \right) .
\end{align*}
We then have the conditional variance
\begin{align*}
  \var[Z \mid J]
  & = \frac{r_{\max}^2}{(e^\diffp - 1)^2} \cdot
  \left( \frac{(k+1)(e^\diffp - 1)}{k^2} \cdot J^2
  - \frac{(k+1)(e^\diffp - 1)}{k} J + \frac{ (k+1)(2k+1)(e^\diffp + k)}{6k}
  -\frac{(k+1)^2}{4}\right).
\end{align*}
Now, note that
$J$ is a Bernoulli sample taking values in $\{\floor{kr /
  r_{\max}}, \ceil{kr / r_{\max}}\}$. Thus, recalling the notation
$\decimal(t) = \ceil{t} - t$ for the decimal part of a number, we have
$\var(J) = \decimal(\frac{kr}{r_{\max}}) (1 -
\decimal(\frac{kr}{r_{\max}}))$. Substituting all of these above and using
that $\E[J^2] = \var(J) + \E[J]^2 = \var(J) + \frac{k^2 r^2}{r_{\max}^2}$,
we obtain
\begin{align*}
  \lefteqn{\var(Z)} \\
  & = \frac{r_{\max}^2}{(e^\diffp - 1)^2}
  \cdot \bigg(\frac{(k + 1) (e^\diffp - 1)}{k^2}
  \left(\frac{k^2 r^2}{r_{\max}^2} + \decimal(kr / r_{\max})
  (1 - \decimal(kr / r_{\max}))\right) ~ \ldots \\
  & \qquad\qquad\qquad ~ 
  - (k + 1) (e^\diffp - 1) \frac{r}{r_{\max}}
  + \frac{(k+ 1)(2k + 1)(e^\diffp + k)}{6k} - \frac{(k + 1)^2}{4}
  \bigg) \\
  & \qquad ~
  + \frac{r_{\max}^2}{k^2} \decimal(kr / r_{\max}) (1 - \decimal(kr / r_{\max}))
  \\
  & \le \frac{k + 1}{e^\diffp - 1}
  \left[r^2
    + \frac{r_{\max}^2}{4 k^2}
    - r r_{\max} + \frac{(2k + 1)(e^\diffp + k)r_{\max}^2}{6k (e^\diffp - 1)}
    - \frac{k+1}{4 (e^\diffp - 1)}\right]
  + \frac{r_{\max}^2}{4k^2}
\end{align*}
where we have used that $p(1 - p) \le \frac{1}{4}$ for all $p \in [0, 1]$.
Ignoring the negative terms gives the result.

\subsection{Sampling scalars with relative error}
\label{sec:rel-error-onevar}

As our discussion in the introductory Section~\ref{sec:local-is-hard} shows,
to develop optimal learning procedures it is frequently important to know
when the problem is easy---observations are low variance---and for this,
releasing scalars with relative error can be important. Consequently, we
consider an alternative mechanism that first breaks the range $[0,
  r_{\max}]$ into intervals of increasing length based on a fixed accuracy
$\alpha > 0$, $k \in \N$, and $\nu > 1$, where we define the intervals
\begin{equation}
  \interval_0 = [0, \nu \alpha],
  ~~ \interval_i = [\nu^i \alpha, \nu^{i+1} \alpha]
  ~ \mbox{for~} i = 1, \ldots, {k-1}.
  \label{eq:intervals}
\end{equation}
The resulting mechanism works as follows: we determine the interval that $r$
belongs to, we randomly round $r$ to an endpoint of the interval (in an
unbiased way), then use randomized response to obtain a differentially
private quantity, which we then debias.  We formalize the algorithm in
Algorithm~\ref{alg:scalar-relative}.
\begin{algorithm}
\caption{Privatize the magnitude with relative error: $\ScalarRelDP$}
\label{alg:scalar-relative}
\begin{algorithmic}
  \Require Magnitude $r$, privacy parameter $\diffp > 0$, integer $k$,
  accuracy $\alpha > 0$, $\nu >1$, bound $r_{\max}$.
  \State $r \gets \min\{r,r_{\max} \} $
\State Form the intervals $\{\interval_0, \interval_1, \cdots, \interval_{k-1} \}$ given in \eqref{eq:intervals} and let $i^*$ be the index such that $r \in \interval_{i^*}$.
\State Sample $J \in \{ 0,1, \cdots, k\}$ such that 
\begin{equation*}
  J = \begin{cases} 0 & \mbox{w.p.} ~ \frac{\nu\alpha - r}{\nu \alpha}
    \\
    1 & \mbox{w.p.}~ \frac{r}{\nu \alpha} \end{cases}
  ~~ \mbox{if~} i^* = 0 \,
  ~~~ \mbox{and}~~~
  J = \begin{cases} i^* & \mbox{w.p.}~ \frac{\nu^{i^* + 1} \alpha - r}{
      \nu^{i^*}(\nu - 1) \alpha} \\
    i^* + 1 & \mbox{w.p.}~ \frac{r - \nu^{i^*} \alpha}{\nu^{i^*}(\nu - 1) \alpha}
  \end{cases}
  ~~ \mbox{if~} i^* \ge 1.
\end{equation*}
\State Use randomized response to obtain $\what{J}$
\begin{equation*}
  \what{J} \mid (J = i) = \begin{cases}
    i & \mbox{w.p.}~ \frac{e^\diffp}{e^\diffp + k} \\
    \mbox{uniform in~} \{0, \ldots, k\} \setminus i
    & \mbox{w.p.}~ \frac{k}{e^\diffp + k}. \end{cases}
\end{equation*}
\State Set $\tilde{J} = \nu^{\hat{J}} \cdot \1{\hat{J} \ge 1 }$
\State Debias $\tilde{J}$, by setting
\begin{equation*}
   Z = a \left(\tilde{J} - b\right) ~~ \mbox{for~} a = \alpha \cdot \left( \frac{e^\diffp + k}{e^\diffp - 1} \right) ~~ \mbox{and} ~~
  b =  \frac{1}{e^\diffp + k} \cdot \sum_{j=1}^k \nu^{j}.
  \end{equation*}
\Return $Z$
\end{algorithmic}
\end{algorithm}

As in the absolute erorr case, we can provide an upper bound on the
error of the mechanism \ScalarRelDP, though in this case---up to the
small accuracy $\alpha$---our error guarantee is relative.
\begin{lemma}
  \label{lemma:one-dim-relative-error}
  Fix $\alpha >0$, $k \in \N$, and $\nu > 1$. Then $Z = \ScalarRelDP(\cdot,
  \diffp; k,\alpha, \nu, r_{\max})$ is $\diffp$-differentially private and
  for $r < r_{\max}$, we have $\E[Z\mid r] = r$ and
  \begin{align*}
    \frac{\E[(Z - r)^2 \mid r ]}{(r \vee \alpha)^2}
      & \leq \frac{(k + 1)}{(e^\diffp -1 )}
      \nu^2 + \left( \frac{\nu^{2k}\cdot (e^\diffp + k)}{(e^\diffp -1)^2 }  \right)
      \left(\frac{1 - \nu^{-2k}}{1 - \nu^{-2}}\right) + (\nu- 1)^2.
  \end{align*}
\end{lemma}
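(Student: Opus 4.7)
The proof will parallel the structure of the proof of Lemma~\ref{lem:ScalarUtil} given in Appendix~\ref{sec:proof-scalar-utility}, with the modification that the scalar $J$ is replaced by the nonlinear transformation $\nu^J \1{J \geq 1}$. Privacy is immediate: randomized response over $\{0, 1, \ldots, k\}$ is $\diffp$-differentially private, and both $\tilde{J}$ and $Z$ are deterministic post-processings of $\what{J}$.

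For unbiasedness I would first verify the key identity $\E[\nu^J \1{J \geq 1} \mid r] = r / \alpha$ by a two-case check: when $i^* = 0$ the variable $J \in \{0, 1\}$ with $\P(J = 1) = r/(\nu \alpha)$, so $\E[\nu^J \1{J \geq 1}] = \nu \cdot r/(\nu\alpha) = r/\alpha$; when $i^* \geq 1$ the Bernoulli probabilities on $\{i^*, i^* + 1\}$ are exactly those needed so that a linear interpolation returns $r$, giving
\begin{equation*}
  \E[\nu^J \1{J \geq 1}]
  = \frac{\nu^{i^*}(\nu^{i^* + 1}\alpha - r)
    + \nu^{i^* + 1}(r - \nu^{i^*}\alpha)}{\nu^{i^*}(\nu - 1)\alpha}
  = \frac{r}{\alpha}.
\end{equation*}
Next, randomized response yields $\E[\tilde{J} \mid J] = \frac{e^\diffp - 1}{e^\diffp + k}\nu^J \1{J \geq 1} + b$, where $b = \frac{1}{e^\diffp + k}\sum_{j = 1}^k \nu^j$, so $\E[Z] = a \cdot \frac{e^\diffp - 1}{e^\diffp + k} \cdot r/\alpha = r$ by the choice of $a$.

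For the variance I apply the law of total variance $\var(Z) = a^2[\E[\var(\tilde{J} \mid J)] + \var(\E[\tilde{J} \mid J])]$. Computing $\var(\tilde{J} \mid J)$ exactly as in the proof of Lemma~\ref{lem:ScalarUtil} but with $\nu^J \1{J \geq 1}$ in place of $J$ and $\nu^{2J}\1{J \geq 1}$ in place of $J^2$, I obtain a quadratic-in-$\nu^J$ piece with coefficient $\frac{(e^\diffp - 1)(k + 1)}{(e^\diffp + k)^2}$, a constant piece involving $S_2 \defeq \sum_{\ell = 1}^k \nu^{2\ell}$, and a negative cross-term linear in $\nu^J \1{J \geq 1}$. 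The key pointwise inequality is $\nu^{2J} \1{J \geq 1} \leq \nu^2 (r \vee \alpha)^2 / \alpha^2$ almost surely, which follows from a case split: when $i^* = 0$, $J \leq 1$ gives $\nu^{2J} \leq \nu^2$ and $(r \vee \alpha) \geq \alpha$; when $i^* \geq 1$, $J \leq i^* + 1$ together with $r \geq \nu^{i^*}\alpha$ gives $\nu^{2J} \leq \nu^{2i^* + 2} \leq \nu^2 r^2 / \alpha^2$. Taking expectations, multiplying by $a^2 = \alpha^2 (e^\diffp + k)^2 / (e^\diffp - 1)^2$, and dropping the negative cross-term converts the quadratic piece into $\frac{(k + 1) \nu^2}{e^\diffp - 1} (r \vee \alpha)^2$ and the constant piece into $\alpha^2 \frac{(e^\diffp + k) S_2}{(e^\diffp - 1)^2}$, where evaluating $S_2 = \nu^{2k} (1 - \nu^{-2k}) / (1 - \nu^{-2})$ yields the middle term of the claimed bound.

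It remains to handle the outer variance, which factors as $\var(\E[\tilde{J} \mid J]) = ((e^\diffp - 1)/(e^\diffp + k))^2 \var(\nu^J \1{J \geq 1})$; the prefactor exactly cancels with $a^2$, leaving $\alpha^2 \var(\nu^J \1{J \geq 1})$. On the event $i^* \geq 1$ a direct Bernoulli computation gives $\var(\nu^J \1{J \geq 1}) = p(1-p) \nu^{2i^*}(\nu - 1)^2 \leq (\nu - 1)^2 (r \vee \alpha)^2/\alpha^2$, which produces the final $(\nu - 1)^2$ term after division by $(r \vee \alpha)^2$. I expect the main obstacle to be the $i^* = 0$ edge case, where the asymmetric interval $\interval_0 = [0, \nu\alpha]$ makes the jump $\nu - 0 = \nu$ rather than $\nu^{i^*}(\nu - 1)$, so the naive bound is only $\nu^2 p(1-p)$ rather than $(\nu - 1)^2$-sized; this slack must either be reabsorbed into a sharper analysis of $\var(\nu^J \1{J \geq 1})/(r \vee \alpha)^2$ for $r \leq \nu \alpha$, or noted to be dominated by the first term $\frac{(k+1)\nu^2}{e^\diffp - 1}$ whenever $\diffp$ is not enormous relative to $k$. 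Beyond this edge case, the remaining work is routine algebraic bookkeeping of the geometric sums $S_1, S_2$.
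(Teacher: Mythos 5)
Your proposal is correct as far as it goes and takes essentially the same route as the paper: privacy by post-processing of randomized response, unbiasedness via the identity $\E[\nu^J \1{J \ge 1} \mid r] = r/\alpha$, and the decomposition of $\E[(Z-r)^2]$ into the randomized-response conditional variance (bounded through the pointwise inequality $\nu^{2J}\1{J\ge 1} \le \nu^2 (r\vee\alpha)^2/\alpha^2$ plus the geometric sum $\sum_{j\le k}\nu^{2j}$) and the rounding term $\var(\alpha \nu^J \1{J\ge 1})$, which is exactly how the paper's Appendix proof proceeds. The one step you leave open---the $i^*=0$ rounding term---is precisely the step the paper elides: its own case computation gives $\E[(r - \alpha\nu^J\1{J\ge 1})^2] = r(\nu\alpha - r)$ on $\interval_0$, and after dividing by $(r\vee\alpha)^2$ this can be as large as $\nu^2/4$ (take $r = \nu\alpha/2$ with $\nu < 2$), whereas the paper's final display silently applies the $i^*\ge 1$ bound $\nu^{2i^*}(\nu-1)^2$ to all cases. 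Since $\nu^2/4$ is not dominated by $(\nu-1)^2$ when $\nu$ is near $1$, nor by the first term $\tfrac{(k+1)\nu^2}{e^\diffp - 1}$ when $e^\diffp \gg k$, your suspicion is right and should not be ``reabsorbed'': the stated bound is only valid for $r \in \interval_0$ if one adds a term of order $\nu^2/4$ (equivalently, $\min\{r\nu\alpha, \nu^2\alpha^2/4\}/(r\vee\alpha)^2$), or restricts to $r \ge \nu\alpha$. In short, your attempt matches the paper's argument, and the obstacle you flag is a genuine defect of the paper's proof (and, in that corner, of the stated constant), not a missing idea on your side.
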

\noindent
See Appendix~\ref{sec:proof-scalar-relative-utility} for a proof
of Lemma~\ref{lemma:one-dim-relative-error}.

We perform a few calculations with Lemma~\ref{lemma:one-dim-relative-error} when $k \leq e^{\diffp}$.
Let $\nu = 1 + \Delta$ for a $\Delta \in (0, 1)$ to be chosen,
and note that the choice
\begin{equation*}
  k = \ceil{\frac{\log\frac{r_{\max}}{\alpha}}{\log \nu}}
  \approx \frac{\log \frac{r_{\max}}{\alpha}}{\Delta}
\end{equation*}
is the smallest value of $k$ that
gives $\nu^{k} \alpha \ge r_{\max}$. Let $\alpha = r_{\max} \alpha_0$, so that $\nu^{2k} \approx \frac{1}{\alpha_0^2}$.  Hence, from
Lemma~\ref{lemma:one-dim-relative-error} we have the truncated relative error
bound
\begin{equation*}
  \frac{\E[(Z - r)^2 \mid r]}{(r \vee r_{\max} \alpha_0)^2}
  \lesssim \frac{k}{e^\diffp} + \frac{e^{-\diffp}}{\alpha_0^2 \cdot \Delta}
 + \Delta^2
  = e^{-\diffp}\left(\log \frac{1}{\alpha_0}
  + \frac{1}{\alpha_0^2} \right) \cdot \left(\frac{1}{\Delta} \right)+ \Delta^2
  \lesssim \frac{1}{e^\diffp \alpha_0^2 \Delta} + \Delta^2.
\end{equation*}
We assume the relative accuracy threshold $\alpha / r_{\max} = \alpha_0
\gtrsim e^{-\diffp / 2}$. Then setting $\Delta = \alpha_0^{-2/3} e^{-\diffp/3}$
gives
\begin{equation}
  \label{eqn:exp-spacing-bound}
  \frac{\E[(Z - r)^2\mid r]}{(r \vee r_{\max} \alpha_0)^2}
  \lesssim \alpha_0^{-4/3} e^{-2\diffp/3} .
\end{equation}

Let us compare with Lemma~\ref{lemma:one-dim-absolute-error}, which yields
\begin{equation*}
  \E[(Z - r)^2 \mid r] = O\left( r_{\max}^2 e^{-2 \diffp / 3}\right).
\end{equation*}
The choice $\alpha_0 = 1$ in inequality~\eqref{eqn:exp-spacing-bound} shows
that the geometrically-binned mechanism can recover this bound.  For $r \le
r_{\max} \alpha_0$, the former inequality~\eqref{eqn:exp-spacing-bound} is
stronger; for example, the choice $\alpha_0 = e^{-\diffp/4}$ yields
that $\E[(Z - r)^2 \mid r]
=O( r_{\max}^2 e^{-5 \diffp / 6})$ for $r \le r_{\max} e^{-\diffp/4}$,
and $\E[(Z - r)^2 \mid r]
=O( r^2 e^{-\diffp / 3})$ otherwise.

\subsection{Proof of Lemma~\ref{lemma:one-dim-relative-error}}
\label{sec:proof-scalar-relative-utility}

To see that $\ScalarRelDP(\cdot,k,\alpha, \nu; \diffp, r_{\max})$ is differentially private, we point out that randomized response is $\diffp$-DP and then DP is closed under post-processing. To see that $Z$ is unbiased, note that
\begin{equation*}
  \E[\tilde{J} \mid J = i]
  = \left(\frac{e^\diffp - 1}{e^\diffp + k} \right)
  \nu^i \1{i \ge 1} + \frac{1}{e^\diffp + k} \sum_{j = 1}^k \nu^j,
\end{equation*}
and thus for $r \in \interval_{i^*}$,
\begin{align*}
  \E[Z] & = a \cdot ( \E[\hat{J}] - b) =  \alpha \cdot \left( \frac{e^\diffp + k}{e^\diffp - 1} \right) \left( \left(\frac{e^\diffp - 1}{e^\diffp + k} \right)
  \E[\nu^J \1{J \ge 1}] + \frac{1}{e^\diffp + k} \cdot  \sum_{j = 1}^k \nu^j - \frac{1}{e^\diffp + k} \cdot \sum_{j=1}^k \nu^{j}\right) \\
  & = \alpha \cdot \E\left[ \nu^{J} \1{J \geq 1}\right] 
  \\
  & = \alpha \cdot  \left[ \1{i^* = 0} \cdot \left( \frac{r}{\alpha \nu} \right) \cdot \nu + \1{i^* \geq 1} \cdot \left( \left(  \frac{\nu^{i^* + 1} \alpha - r}{
      \nu^{i^*}(\nu - 1) \alpha}\right)  \cdot \nu^{i^*} + \left( \frac{r - \nu^{i^*} \alpha}{\nu^{i^*}(\nu - 1) \alpha}\right) \cdot \nu^{i^* +1}\right) \right] \\
  & = \1{i^* = 0} \cdot r + \1{i^* \geq 1} \cdot r = r
\end{align*}
Consider the following mean squared error terms
\begin{align*}
  \E\left[(r - \alpha \nu^{J} \cdot \1{J\ge 1})^2 \mid r
    \in \interval_{0} \right] & = r^2 \left(1 - \frac{r}{\alpha \nu} \right)  + (r - \alpha \nu)^2 \left(\frac{r}{\alpha \nu} \right)
  = r \left( \alpha \nu - r\right) \leq (\nu\alpha)^2
\end{align*}
And for $i^* \geq 1$
\begin{align*}
  \E\left[(r - \alpha \nu^{J} \cdot \1{J\ge 1})^2 \mid r
    \in \interval_{i^*} \right] 
  \le (\nu^{i^* + 1} \alpha - \nu^{i^*} \alpha)^2 = \nu^{2i^*} (\nu^{i^*} - 1)^2 \alpha^2.
\end{align*}
We then bound the conditional expectation
\begin{equation*}
  \var(Z \mid J = i^*)
  = a^2 \var(\tilde{J} \mid J = i^*)
  = \alpha^2 \cdot\left( \frac{e^\diffp + k}{e^\diffp - 1} \right) \cdot \var(\tilde{J} \mid J = i^*),
\end{equation*}
Now we note that
\begin{align*}
  \var(\tilde{J} \mid J = i^*)
  & =\left(\frac{e^\diffp - 1}{e^\diffp + k} \right)\cdot
  \nu^{2i^*} \1{i^* \ge 1} +  \left(\frac{1}{e^\diffp + k} \right)\cdot
  \sum_{j = 1}^k \nu^{2j}
  \\
  & \qquad - \left( \left(\frac{e^\diffp - 1}{e^\diffp + k} \right) \cdot  \nu^{i^*} \1{i^* \ge 1}
  + \left(\frac{1}{e^\diffp + k} \right)\cdot  \sum_{j = 1}^k \nu^j \right)^2 \\
  & \le \left( \frac{(k + 1)(e^\diffp - 1)}{(e^\diffp + k)^2} \right) \cdot
  \nu^{2i^*} \1{i^* \ge 1}
  + \left(\frac{1}{e^\diffp + k} \right)\cdot \nu^{2k} \cdot \sum_{j = 0}^{k-1} \nu^{-2j} \\
  & = \left(\frac{(k + 1)(e^\diffp - 1)}{(e^\diffp + k)^2} \right)\cdot
  \nu^{2i^*} \1{i^* \ge 1}
  + \left(\frac{1}{e^\diffp + k} \right)\cdot \nu^{2k} \cdot \left(\frac{1 - \nu^{-2k}}{1 - \nu^{-2}} \right).
\end{align*}
Now, if $r \in \interval_{i^*}$, we have that
$\nu^{i^*} \alpha \le (r \vee \alpha) \le \nu^{i^* + 1} \alpha$, and thus
\begin{align*}
  \frac{\E[(Z - r)^2]}{(r \vee \alpha)^2}
  & \le \frac{ \alpha^2}{(r \vee \alpha)^2}  \cdot\left( \frac{e^\diffp + k}{e^\diffp - 1} \right)^2 \cdot \left[
    \frac{(k + 1)(e^\diffp - 1)}{(e^\diffp + k)^2}
    \nu^{2(i^* + 1)}
    + \frac{\nu^{2k}}{e^\diffp + k} \left( \cdot \frac{1 - \nu^{-2k}}{1 - \nu^{-2}} \right) \right] \\
  & \qquad  + \frac{ \alpha^2}{(r \vee \alpha)^2}  \cdot \nu^{2i^*}(\nu - 1)^2 \\
  & \le \frac{(k + 1)}{(e^\diffp -1 )}
  \nu^2 + \left( \frac{\nu^{2k}\cdot (e^\diffp + k)}{(e^\diffp -1)^2 }  \right)
  \left(\frac{1 - \nu^{-2k}}{1 - \nu^{-2}}\right) + (\nu - 1)^2,
\end{align*}
as desired.


\section{Proofs of utility in private sampling mechanisms}
\label{sec:proofs-utility}

\subsection{Proof of Lemma~\ref{lem:unbiased}}
\label{sec:proof-l2-unbiased}

Let $u \in \sphere^{d-1}$ and $U \in \sphere^{d-1}$ be a uniform random
variable on the unit sphere.  Then
rotational symmetry implies that for some $\gamma_+ > 0 > \gamma_-$,
\begin{equation*}
  \E[U \mid \<U, u\> \ge \gamma]
  = \gamma_+ \cdot u
  ~~ \mbox{and} ~~
  \E[U \mid \<U, u\> < \gamma] = \gamma_- \cdot u,
\end{equation*}
and similarly, the random vector $V$ in Algorithm~\ref{alg:unit_mech}
satisfies
\begin{equation*}
  \E[V \mid u]
  = p \E[U \mid \<U, u\> \ge \gamma]
  + (1 - p) \cdot \E[U \mid \<U, u\> < \gamma]
  = p (\gamma_+ + \gamma_-) \cdot u.
\end{equation*}
By rotational symetry, we may assume $u = e_1$, the first standard
basis vector, without loss of generality. We now compute the normalization
constant. Letting
$U \sim \uniform(\sphere^{d-1})$ have coordinates $U_1, \ldots, U_d$, we
marginally $U_i \eqdist 2B - 1$ where $B \sim \betadist(\frac{d-1}{2},
\frac{d-1}{2})$. Now, we note that
\begin{equation*}
  \gamma_+ = \E[U_1 \mid U_1 \geq \gamma]
  = \E[2B - 1 \mid B \geq \tfrac{1+ \gamma}{2}]
  ~~~ \mbox{and}~~~
  \gamma_- = \E[U_1 \mid U_1 < \gamma]
  = \E[2B -1 \mid B < \tfrac{1+ \gamma}{2}]
\end{equation*}
and that if $B \sim \betadist(\alpha, \beta)$, then for $0\leq\tau\leq 1$,
we have
\begin{align*}
  \E[B \mid B \ge \tau]
  & = \frac{1}{B(\alpha,\beta)\cdot \P(B \ge \tau)}
  \int_\tau^1 x^\alpha (1 - x)^{\beta - 1} dx
  = \frac{B(\alpha + 1, \beta) - B(\tau; \alpha + 1, \beta)}{
    B(\alpha, \beta) - B(\tau; \alpha,\beta)}
\end{align*}
and similarly $\E[B \mid B < \tau] = \frac{B(\tau; \alpha +
  1,\beta)}{B(\tau;\alpha,\beta)}$.  Using that $B(\tau; \alpha + 1,
\alpha) = B(\alpha+1, \alpha)[
  \frac{B(\tau;\alpha,\alpha)}{B(\alpha,\alpha)} - \frac{\tau^\alpha(1 -
    \tau)^\alpha}{\alpha B(\alpha,\alpha)}]$ and $B(\alpha+1, \alpha) =
\half B(\alpha,\alpha)$, then substituting in our calculation for
$\gamma_+$ and $\gamma_-$, we have for $\tau = \frac{1 + \gamma}{2}$ and
$\alpha = \frac{d-1}{2}$ that
\begin{equation*}
  \gamma_+ = \frac{1}{\alpha \cdot 2^{d-1}} \cdot
  \frac{(1-\gamma^2)^\alpha}{B(\alpha, \alpha) - B(\tau; \alpha, \alpha ) }
  ~~~ \mbox{and} ~~~
  \gamma_- = \frac{-1}{\alpha 2^{d-1}} \cdot
  \frac{(1-\gamma^2)^{\alpha}}{B(\tau; \alpha,\alpha) }
\end{equation*}
Consequently, $\E[V \mid u] = (p \gamma_+ + (1 - p) \gamma_-) u$, and
if $Z = \PrivUnit(u, \gamma, p)$ we have $\E[Z] = u$ as desired.

\subsection{Proof of Proposition~\ref{proposition:ltwo-utility}}
\label{sec:proof-ltwo-utility}

Recall from the proof of Lemma~\ref{lem:unbiased} that $Z = \frac{1}{p
  \gamma_+ + (1 - p)\gamma_-} V$, where $\E[U \mid \<U, u\> \ge \gamma] =
\gamma_+ \cdot u$ and $\E[U \mid \<U, u\> < \gamma] = \gamma_- \cdot u$
for $U = (U_1, \cdots, U_d) \sim \uniform(\sphere^{d-1})$, so that
$\gamma_+=\E[U_1 \mid U_1 \ge \gamma] \ge \gamma$ and, using $\E[|U_1|]
\le \E[U_1^2]^{1/2} \le 1 / \sqrt{d}$, we have $\gamma_- = \E[U_1 \mid U_1
  \le \gamma] \in [-\E[|U_1|],0] \in [-1/\sqrt{d},0]$.
Summarizing, we always have
\begin{equation*}
  \gamma \le \gamma_+ \le 1, ~~~
  -\frac{1}{\sqrt{d}} \le \gamma_- \le 0,
  ~~~ \mbox{and} ~~~
  |\gamma_+| > |\gamma_-|.
\end{equation*}
As a consequence, as the norm of $V$ is 1 and $\ltwo{Z} = 1 / (p \gamma_+
+ (1 - p) \gamma_-)$, which is decreasing to $1 / \gamma_+$ as $p \uparrow
1$, we always have $\ltwo{Z} \le \frac{2}{\gamma_+ + \gamma_-}$ and we may
assume w.l.o.g.\ that $p = \half$ in the remainder of the derivation.

Now, we consider three cases in the
inequalities~\eqref{eqn:sufficient-gamma}, deriving lower bounds
on $\gamma_+ + \gamma_-$ for each.
\begin{enumerate}[1.]
\item First, assume $5 \le \diffp \le 2 \log d$. Let $\gamma = \gamma_0
  \sqrt{2/d}$ for some $\gamma_0 \ge 1$. Then the choice $\gamma_0 = 1$
  guarantees that second inequality of~\eqref{eqn:big-suff-gamma} holds,
  while we note that we must have $\gamma_0^2 \le \frac{d}{d-1} \diffp$,
  as otherwise $\frac{d-1}{2} \log(1 - \gamma_0^2 \frac{2}{d}) \le
  -\frac{d-1}{d} \gamma_0^2 < -\diffp$, contradicting the
  inequality~\eqref{eqn:big-suff-gamma}.  For $\gamma_0 \in [1,
    \sqrt{\frac{d}{d-1} \diffp}]$, we have $\log(1 - \gamma_0^2
  \frac{2}{d}) \ge -\frac{8\gamma_0^2}{3d}$ for sufficiently large $d$,
  and solving the first inequality in Eq.~\eqref{eqn:big-suff-gamma} we
  see it is sufficient that
  \begin{equation*}
    \frac{4(d-1)}{3d} \gamma_0^2
    \le \diffp - \log \frac{2d \diffp}{d-1} - \log 6
    ~~ \mbox{or} ~~
    \gamma^2
    \le \frac{6 \diffp - 6\log 6 -3 \log \frac{2d \diffp}{d-1}}{ 4(d - 1)}.
  \end{equation*}
  With this choice of $\gamma$, we obtain
  $\gamma_+ + \gamma_- \ge c \sqrt{\frac{d}{\diffp}}$ for a numerical
  constant $c$.
\item 
  For $d \ge \diffp \ge 5$, it is evident that (for some numerical
  constant $c$) the choice $\gamma = c \sqrt{\diffp / d}$ satisfies
  inequality~\eqref{eqn:big-suff-gamma}. Thus
  $\gamma_+ + \gamma_- \ge c \sqrt{\frac{d}{\diffp}}$.
\item Finally, we consider the last case that $\diffp \le 5$. In this case,
  the choice
  $\gamma^2 = \pi (e^\diffp - 1)^2 / (2d (e^\diffp + 1)^2)$
  satisfies inequality~\eqref{eqn:small-suff-gamma}. We need to
  control the difference
  $\gamma_+ + \gamma_- = \E[U_1 \mid U_1 \ge \gamma]
  + \E[U_1 \mid U_1 \le \gamma]$. In this case,
  let $p_+ = \P(\<U, u\> \ge \gamma)$ and
  $p_- = \P(\<U, u\> < \gamma)$, so that
  Lemma~\ref{lemma:small-threshold-uniform-bound-v2} implies that
  $p_+ \le \half - e^{-2} \gamma \sqrt{\frac{d-1}{2 \pi}}$ and
  $p_- \ge \half + \gamma \sqrt{\frac{d-1}{2 \pi}}$.
  Then
  \begin{align*}
    \E[U_1 \mid U_1 \ge \gamma]
    + \E[U_1 \mid U_1 < \gamma]
    & = \frac{1}{p_+} \E[U_1 \cdot \1{U_1 \ge \gamma}]
    + \frac{1}{p_-} \E[U_1 \cdot \1{U_1 < \gamma}] \\
    & = \left(\frac{1}{p_+} - \frac{1}{p_-}\right)
    \E[U_1 \cdot \1{U_1 \ge \gamma}]
    \ge \left(1 - \frac{p_+}{p_-}\right) \E[U_1 \mid U_1 \ge 0],
  \end{align*}
  where the second equality follows from the fact that $\E[U_1] = 0$.  Using
  that
  $\E[U_1 \mid U_1 \ge 0] \ge c d^{-1/2}$ for a numerical constant $c$ and
  \begin{equation*}
    1 - \frac{p_+}{p_-}
    \ge
    \frac{4 e^{-2} \sqrt{\tfrac{2(d-1)}{\pi}}}{
      1 + 2 e^{-2} \gamma \sqrt{\tfrac{2 (d-1)}{\pi}}}
    = \Omega \left(\frac{e^\diffp - 1}{e^\diffp + 1}\right)
  \end{equation*}
  by our choice of $\gamma$,
  we obtain $\gamma_+ + \gamma_- \gtrsim (e^\diffp - 1) \sqrt{d}$.
\end{enumerate}

Combining the three cases above, we use that $V$ in Alg.~\ref{alg:unit_mech}
has norm $\ltwo{V} = 1$ and
\begin{equation*}
  \ltwo{Z} \le \frac{2}{\gamma_+ + \gamma_-} \ltwo{V}
  \le c \sqrt{d \cdot \max\left\{\diffp^{-1}, \diffp^{-2}\right\}}
\end{equation*}
to obtain the first result of the proposition.

The final result of the proposition is immediate by the bound on
$\ltwo{Z}$.

\subsection{Proof of Lemma~\ref{lem:unbiasedINFTY}}
\label{sec:proof-unbiased-infinity}

Without loss of generality, assume that $u \in \{-1, 1\}^d$, as it is
clear that $\E[\what{U} \mid u] = u$ in
Algorithm~\ref{alg:unit_mechINFTY}.  We now show that given $u$,
$\E[V \mid u = u] = m \cdot u$.  Consider $U \sim
\uniform(\{-1,+1 \}^{d} )$, in which case
\begin{equation*}
  \E[V \mid u = u] = p \E[U \mid \<U, u \> > \kappa]  +
  (1 - p) \E[U \mid \<U, u \> \leq \kappa] .
\end{equation*}
For constants $\kappa_+, \kappa_-$, uniformity of $U$ implies that
\begin{equation*}
  \E[U \mid \<U, u \> > \kappa] = \kappa_+ \cdot u \qquad
  \mbox{and}
  \qquad \E[U \mid \<U, u \> \leq \kappa] =   \kappa_- \cdot u.
\end{equation*}
By symmetry it is no loss of generality to assume that
$u = (1, 1, \cdots, 1)$, so
$\< U, u\> = \sum_{\ell=1}^d U_\ell$.
We then have for $\tau = \lceil \frac{d+\kappa+1}{2} \rceil / d$ that
\begin{equation*}
  \E[U_1 \mid \<U, u \> > \kappa] = \frac{1}{2^d \cdot \P(\< U, u\> > \kappa) } \cdot \sum_{\ell = d \tau}^d \left( \choose{d-1}{\ell -1} -  \choose{d-1}{\ell} \right) 
  =  \frac{ \choose{d-1}{d \tau-1} }{\sum_{\ell = d\tau}^d \choose{d}{\ell} }
  \eqdef \kappa_+
\end{equation*}
and 
$$
\E[U_1 \mid \<U, u \> \leq \kappa] = \frac{1}{2^d\cdot \P(\< U, u\> \leq \kappa)} \cdot \sum_{\ell = 0}^{d \tau -1} \left( \choose{d-1}{\ell -1} -  \choose{d-1}{\ell} \right) 
=  \frac{ -\choose{d-1}{d \tau-1 } }{\sum_{\ell = 0}^{d \tau-1} \choose{d}{\ell} }
\eqdef \kappa_-.
$$
Putting these together and setting $m = p \kappa_+ + (1 - p) \kappa_-$,
we have $\E[(1/m) V \mid u] = u$.

\subsection{Proof of Proposition~\ref{proposition:linf-utility}}
\label{sec:proof-linf-utility}

  Using the notation of the proof of
  Lemma~\ref{lem:unbiasedINFTY},
  the debiasing multiplier $m$ satisfies
  $1/m = \frac{1}{p \kappa_+ +
    (1 - p)\kappa_-} \le \frac{2}{\kappa_+ + \kappa_-}$,
  as $p \ge \half$.
  Thus, we seek to lower bound $\kappa_+ + \kappa_-$
  by lower bounding $\kappa_+$ and $\kappa_-$ individually.
  We consider two cases: the case that $\diffp \ge \log d$ and the case
  that $\diffp < \log d$.

  \paragraph{Case 1:} when $\diffp \ge \log d$.
  We first focus on lower bounding
  \begin{equation*}
  \kappa_+ =  \frac{ \choose{d-1}{d \tau -1} }{\sum_{\ell = d\tau}^d \choose{d}{\ell} }=  \tau \cdot \frac{ \choose{d}{d\tau} }{\sum_{\ell = d\tau}^d \choose{d}{\ell} } = \tau \cdot \left( \sum_{\ell = d \tau}^d \choose{d}{d\tau}^{-1} \choose{d}{\ell} \right)^{-1}.
  \end{equation*}
  We argue that eventually the terms in the summation become
  small. For $\ell \ge d \tau$ defining
  \begin{align*}
    r \defeq \frac{d \tau + 1}{d - d\tau}
    \le \frac{\ell + 1}{d - \ell}
    = \frac{\choose{d}{\ell} }{ \choose{d}{\ell+1} }
    ~~~ \mbox{implies} ~~~ \choose{d}{\ell} \geq r \choose{d}{\ell+1},
  \end{align*}
  We then have 
  \begin{equation*}
    \sum_{\ell = d\tau}^d \choose{d}{d\tau}^{-1} \choose{d}{\ell} = \sum_{i = 0}^{d-d\tau} \choose{d}{d\tau}^{-1} \choose{d}{d\tau + i} 
    \leq  \sum_{i = 0}^{d-d \tau} \choose{d}{d\tau}^{-1} \choose{d}{d \tau} \left(\frac{1}{r}\right)^i = \sum_{i = 0}^{d-d \tau} \frac{1}{r^i} \leq \frac{1}{1 - 1/r},
  \end{equation*}
so that
\begin{equation}
  \label{eqn:kappa-plus-lb}
  \kappa_+ \geq 1 - 1/r
  =  1 - \frac{d - d \tau}{d\tau + 1}
  =  \frac{2d\tau - d + 1}{d \tau + 1}
  = \begin{cases}
    2 \cdot  \frac{\kappa + 2}{d+\kappa+3}	& \text{for odd } d + \kappa \\
    2 \cdot  \frac{\kappa + 3}{d+\kappa+4}	&  \text{for even } d + \kappa
  \end{cases}
  \ge \frac{\kappa}{d}.
\end{equation}

We now lower bound $\kappa_-$, where we use the fact that $d \tau - 1 \geq
\frac{d-1}{2}$ for $\kappa \in \{0,1,\cdots, d \}$
to obtain
\begin{align*}
  \kappa_-
  = \frac{ - \choose{d-1}{d \tau - 1} }{\sum_{\ell = 0}^{\tau \cdot d-1} \choose{d}{\ell} }
  = - \tau \frac{\choose{d}{d\tau}}{ \sum_{\ell = 0}^{d\tau -1} \choose{d}{\ell} }
  \ge -\tau  \frac{\choose{d}{d\tau} }{2^{d-1} }.
\end{align*}
Using Stirling's approximation and that
$\tau = \lceil\frac{d + \kappa +1}{2} \rceil / d$,
we have
\begin{equation*}
  \choose{d}{d \tau}
  = C(\kappa, d) \sqrt{\frac{1}{d}}
  \cdot 2^d \cdot \exp\left(-\frac{\kappa^2}{2d}\right),
\end{equation*}
where $C(\kappa, d)$ is upper and lower bounded by positive universal
constants.
Hence, we have for a constant $c < \infty$,
\begin{equation}
  \label{eqn:kappa-minus-lb}
  \kappa_- \geq - c \cdot  \left( \frac{1}{\sqrt{d}} \cdot \exp\left(  \frac{-\kappa^2}{2d} \right) \right).
\end{equation}

An inspection of the
bound~\eqref{eq:sufficient_kappa3} shows that the choice $\kappa = c
\sqrt{\diffp d}$ for some (sufficiently small) constant $c$ immediately
satisfies the sufficient condition for Algorithm~\ref{alg:unit_mechINFTY} to
be private. Substituting this choice of $\kappa$ into the lower
bounds~\eqref{eqn:kappa-plus-lb} and~\eqref{eqn:kappa-minus-lb} gives that
the normalizer $m^{-1} \le \frac{2}{\kappa_+ + \kappa_-} \lesssim \sqrt{d /
  \diffp}$, which is first result of
Proposition~\ref{proposition:linf-utility}.

\paragraph{Case 2:} when $\diffp < \log d$. In this case,
we use the bound~\eqref{eq:sufficient_kappa1} to obtain the result.  Let us
first choose $\kappa$ to saturate the bound~\eqref{eq:sufficient_kappa1},
for which it suffices to choose $\kappa = c \min\{\sqrt{d}, \diffp
\sqrt{d}\}$ for a numerical constant $c > 0$. We assume for simplicity that
$d$ is even, as extending the argument is simply notational.  Defining the
shorthand $s_\tau \defeq \frac{1}{2^{d-1}} \sum_{\ell = d/2}^{d\tau - 1}
\choose{d}{\ell}$, we recall the definitions of $\kappa_+$ and $\kappa_-$ to
find
\begin{equation*}
  \kappa_+ = \tau \cdot \choose{d}{d\tau} \cdot \frac{1}{1 - s_\tau}
  ~~~ \mbox{and} ~~~
  \kappa_- = \tau \cdot \choose{d}{d\tau} \cdot
  \frac{1}{1 + s_\tau}.
\end{equation*}
Using the definition of the debiasing normalizer
$m = p \kappa_+ + (1 - p)\kappa_-
\ge \half(\kappa_+ + \kappa_-)$, we obtain
\begin{align*}
  m & \ge
  \frac{\tau}{2^d} \choose{d}{d\tau}
  \left(\frac{1}{1 - s_\tau} - \frac{1}{1 + s_\tau}\right)
  = \frac{\tau}{2^d} \choose{d}{d\tau}
  \frac{2 s_\tau}{1 - s_\tau^2}.
\end{align*}
By definition of $s_\tau$, we have
$s_\tau \ge \frac{d \tau - d/2}{2^{d-1}} \choose{d}{d\tau}
\ge \frac{\kappa}{2^d} \choose{d}{d\tau}$,
so that
\begin{align*}
  m & \gtrsim
  \tau \kappa \left(2^{-d} \binom{d}{d\tau}\right)^2
  \gtrsim \frac{1}{d} \kappa \exp\left(-\frac{\kappa^2}{d}\right),
\end{align*}
where the second inequality uses Stirling's approximation.
Our choice of $\kappa = c \min\{\sqrt{d}, \diffp \sqrt{d}\}$ thus yields
$m \gtrsim \frac{1}{\sqrt{d}} \min\{1, \diffp\}$, which gives
the second result of Proposition~\ref{proposition:linf-utility}.
\section{Uniform random variables and concentration}

\newcommand{\threshold}{\gamma}

\label{sec:preliminaries-uniform-concentration}

In this section, we collect a number of results on the concentration
properties of variables uniform on the unit sphere $\sphere^{d-1}$, which
allow our analysis of the mechanism $\PrivUnit$ for privatizing vectors
in the $\ell_2$ ball in Algorithm~\ref{alg:unit_mech}.  For a vector $u \in
\sphere^{d-1}$, that is, satisfying $\ltwo{u} = 1$, and $a \in [0, 1]$ we
define the spherical cap
\begin{equation*}
  C(a, u) \defeq
  \left\{v \in \sphere^{d-1} \mid \<v, u\> > a \right\}.
\end{equation*}
There are a number of bounds on the probability
that $U \in C(a, u)$ for a fixed $u\in \sphere^{d-1}$ where $U \sim \uniform(\sphere^{d-1})$, which
the following lemma summarizes.
\begin{lemma}
  \label{lemma:uniform-bounds}
  Let $U$ be uniform on the unit sphere $\sphere^{d-1}$ in $\R^d$.
  Then for
  $\sqrt{2/d} \le a \le 1$ and $u \in \sphere^{d-1}$,
  \begin{equation}
    \frac{1}{6 a \sqrt{d}} (1 - a^2)^{\frac{d-1}{2}}
    \le 
    \P(U \in C(a, u))
    \le \frac{1}{2 a \sqrt{d}}
    (1 - a^2)^\frac{d-1}{2}.
        \label{eq:both_bounds}
  \end{equation}
  For all $0 \le a \le 1 $, we have
  \begin{equation}
    \frac{(1-a)}{2d} (1 - a^2)^\frac{d-1}{2}
    \le \P(U \in C(a, u))
    \label{eq:lower_bound}
  \end{equation}
  and for $a \in [0, 1 / \sqrt{2}]$,
  \begin{equation*}
    \P(U \in C(a, u)) \le (1 - a^2)^\frac{d}{2}.
  \end{equation*}
\end{lemma}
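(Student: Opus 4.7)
The plan is to reduce all three bounds to tight estimates on the single integral
\begin{equation*}
I(a) \defeq \int_a^1 (1-t^2)^{(d-3)/2}\,dt.
\end{equation*}
By rotational symmetry I may assume $u = e_1$, so $\P(U \in C(a,u)) = \P(U_1 > a)$, and the marginal density of $U_1$ on $[-1,1]$ is $Z_d^{-1}(1-t^2)^{(d-3)/2}$ with normalizer $Z_d = \sqrt{\pi}\,\Gamma((d-1)/2)/\Gamma(d/2)$. Throughout I will invoke standard Kershaw/Gautschi-type Gamma-function estimates, which translate into the two inequalities $(d-1)Z_d \ge 2\sqrt{d}$ and $(d-1)Z_d \lesssim \sqrt{2\pi d}$.

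For the upper bound in~\eqref{eq:both_bounds}, the trivial inequality $1 \le t/a$ on $[a,1]$ together with the closed form $\int_a^1 t(1-t^2)^{(d-3)/2}\,dt = (1-a^2)^{(d-1)/2}/(d-1)$ yields $I(a) \le (1-a^2)^{(d-1)/2}/(a(d-1))$; dividing by $Z_d$ and applying $(d-1)Z_d \ge 2\sqrt{d}$ delivers the claim. For the matching lower bound (under $a \ge \sqrt{2/d}$), I substitute $1-t^2 = (1-a^2)y$ to rewrite
\begin{equation*}
I(a) = \frac{(1-a^2)^{(d-1)/2}}{2}\int_0^1 \frac{y^{(d-3)/2}}{\sqrt{1-(1-a^2)y}}\,dy.
\end{equation*}
Restricting to $y \in [1-1/d, 1]$ and using monotonicity of each factor, I bound $y^{(d-3)/2} \ge (1-1/d)^{(d-3)/2} \ge 1/e$ and $\sqrt{1-(1-a^2)y} \le \sqrt{a^2 + (1-a^2)/d} \le a\sqrt{3/2}$, where the last estimate uses the hypothesis $a^2 \ge 2/d$ to force $(1-a^2)/d \le a^2/2$. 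This gives $I(a) \gtrsim (1-a^2)^{(d-1)/2}/(ad)$, and dividing by $Z_d \lesssim \sqrt{2\pi/d}$ delivers the required $1/(a\sqrt{d})$ scaling; the specific constant $1/6$ will be reached by optimizing the cutoff.

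The coarser lower bound~\eqref{eq:lower_bound}, valid for all $a \in [0,1]$, is a simple algebraic rearrangement. I factor $(1-t^2)^{(d-3)/2} = (1-t)^{(d-3)/2}(1+t)^{(d-3)/2}$, bound $(1+t)^{(d-3)/2} \ge (1+a)^{(d-3)/2}$ on $[a,1]$, and integrate the remaining factor in closed form:
\begin{equation*}
I(a) \ge \frac{2(1+a)^{(d-3)/2}(1-a)^{(d-1)/2}}{d-1} = \frac{2(1-a^2)^{(d-1)/2}}{(d-1)(1+a)},
\end{equation*}
where the last equality uses the identity $(1+a)^{(d-3)/2}(1-a)^{(d-1)/2} = (1-a^2)^{(d-1)/2}/(1+a)$. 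Dividing by $Z_d$ and invoking the crude estimate $(d-1)(1+a)Z_d \le 4d$ closes the inequality.

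The final claim $\P(U_1 > a) \le (1-a^2)^{d/2}$ for $a \in [0, 1/\sqrt{2}]$ is cleanest via the Gaussian representation $U = Z/\ltwo{Z}$ with $Z \sim \normal(0, I_d)$: the event $\{U_1 > a\}$ is equivalent to $\{Z_1 > \tfrac{a}{\sqrt{1-a^2}}\ltwo{Z_{-1}}\}$ where $Z_{-1} = (Z_2, \ldots, Z_d)$. Conditioning on $\ltwo{Z_{-1}}$ and applying the Mill's-ratio-derived Gaussian tail bound $\P(Z_1 > y) \le \tfrac{1}{2}e^{-y^2/2}$ for $y \ge 0$ gives
\begin{equation*}
\P(U_1 > a) \le \tfrac{1}{2}\,\E\!\left[\exp\!\left(-\tfrac{a^2 \ltwo{Z_{-1}}^2}{2(1-a^2)}\right)\right] = \tfrac{1}{2}(1-a^2)^{(d-1)/2},
\end{equation*}
via the $\chi^2_{d-1}$ moment generating function evaluated at $t = -a^2/(2(1-a^2))$. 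For $a \le 1/\sqrt{2}$ we have $\sqrt{1-a^2} \ge 1/\sqrt{2} > 1/2$, so the preceding is at most $(1-a^2)^{d/2}$, as desired. The main obstacle across all four estimates is tracking exact numerical constants: the structural integral-estimate steps are all routine, but reaching the stated $1/6$ in~\eqref{eq:both_bounds} (and similarly the prefactor $1/2$) requires careful Kershaw-type Gamma-function bookkeeping together with a judicious choice of the $y$-integral split point.
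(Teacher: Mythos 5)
The paper does not actually prove Lemma~\ref{lemma:uniform-bounds}: it cites \citet[Exercise 7.9]{BoucheronLuMa13} for the two-sided bound~\eqref{eq:both_bounds}, \citet[Lemma 4.1]{MicciancioVo10} for~\eqref{eq:lower_bound}, and \citet[Lemma 2.2]{Ball97} for the final cap bound. Your argument is therefore a genuinely different, self-contained route, working directly with the marginal density of $U_1$, proportional to $(1-t^2)^{(d-3)/2}$ with normalizer $Z_d = \sqrt{\pi}\,\Gamma(\tfrac{d-1}{2})/\Gamma(\tfrac{d}{2})$, plus Wendel/Gautschi-type Gamma bounds. Most of it checks out completely: the upper bound in~\eqref{eq:both_bounds} follows as you say from $1 \le t/a$ together with $(d-1)Z_d \ge 2\sqrt{d}$ (which Wendel's inequality gives for all $d \ge 2$); your proof of~\eqref{eq:lower_bound} is correct and in fact yields the stronger intermediate bound $2(1-a^2)^{(d-1)/2}/((d-1)(1+a)Z_d)$; and your Gaussian-representation proof of the last bound (Mill's-ratio bound $\P(Z_1>y)\le \tfrac12 e^{-y^2/2}$, the $\chi^2_{d-1}$ moment generating function, then $\sqrt{1-a^2}\ge 1/\sqrt{2}>1/2$) is a clean and complete substitute for the citation to \citet{Ball97}.

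The one place the written argument falls short of the statement is the constant $1/6$ in the lower half of~\eqref{eq:both_bounds}, and your deferral (``the constant will be reached by optimizing the cutoff'') is not accurate for the bound as you set it up. With the window $[1-c/d,1]$ and both factors replaced by their extreme values, you get a constant proportional to $c e^{-c/2}/\sqrt{1+c/2}$, whose maximum (near $c\approx 1.6$) is about $0.54$; after dividing by $2\sqrt{2\pi}$ coming from $(d-1)Z_d \le \sqrt{2\pi d}$, the best achievable prefactor is roughly $1/9.4$, not $1/6$, no matter how you choose $c$. The fix is modest but necessary: keep the sup bound only on the square-root factor, $\sqrt{1-(1-a^2)y} \le a\sqrt{1+c/2}$ on the window (valid since $a^2 \ge 2/d$), but integrate $y^{(d-3)/2}$ exactly, giving
\begin{equation*}
  \int_0^1 \frac{y^{(d-3)/2}}{\sqrt{1-(1-a^2)y}}\,dy
  \;\ge\; \frac{2\left(1 - \max\{0,1-c/d\}^{(d-1)/2}\right)}{(d-1)\, a \sqrt{1+c/2}}.
\end{equation*}
Taking $c = 4$ and using $(1-4/d)^{(d-1)/2} \le e^{-2+2/d}$ yields a prefactor at least $(1-e^{-8/5})/\sqrt{6\pi} \approx 0.18 > 1/6$ for every $d \ge 5$, and for $d \le 4$ the window is all of $[0,1]$ and the constant is even larger. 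With that single replacement your proof establishes the lemma with the stated constants; as written, it establishes the correct $\frac{1}{a\sqrt{d}}(1-a^2)^{(d-1)/2}$ scaling but with a constant closer to $1/10$.
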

\begin{proof}
  The first result is \cite[Exercise 7.9]{BoucheronLuMa13}. The lower bound
  of the second is due to \cite[Lemma 4.1]{MicciancioVo10}, while the third
  inequality follows for all $a \in [0,
    1/\sqrt{2}]$ by
  \cite[Proof of Lemma 2.2]{Ball97}. 
\end{proof}
\noindent



We also require a slightly different lemma for small values of the
threshold $a$ in Lemma~\ref{lemma:uniform-bounds}.

\begin{lemma}
  \label{lemma:small-threshold-uniform-bound}
  Let $\threshold \ge 0$ and U be uniform on $\sphere^{d-1}$. Then
  \begin{equation*}
    \threshold \sqrt{\frac{d-1}{2\pi}}
    \exp\left(-\frac{1}{4d - 4} - 1\right)
    \1{\threshold \le \sqrt{2 / (d - 3)}}
    \le 
    \P(\<U, u\> \in [0, \threshold])
    \le
    \threshold \sqrt{\frac{d-1}{2\pi}}
  \end{equation*}
\end{lemma}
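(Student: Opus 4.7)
The plan is to compute $\P(\<U,u\>\in[0,\gamma])$ by integrating the explicit marginal density of $\<U,u\>$. For $U\sim\uniform(\sphere^{d-1})$ and any fixed $u\in\sphere^{d-1}$, passing to polar coordinates with $t=\cos\theta$ (and normalizing via $\int_{-1}^{1}(1-t^2)^{(d-3)/2}\,dt = B(1/2,(d-1)/2)$) gives the marginal density
$$
f(t) = c_d\,(1-t^2)^{(d-3)/2}, \qquad c_d = \frac{\Gamma(d/2)}{\sqrt{\pi}\,\Gamma((d-1)/2)}.
$$
The upper bound is then immediate: bounding the integrand by $1$ yields $\P(\<U,u\>\in[0,\gamma])\le c_d\gamma$, and Wendel's inequality $\Gamma(x+s)\le x^{s}\Gamma(x)$ applied at $x=(d-1)/2$, $s=1/2$ gives $c_d\le \sqrt{(d-1)/(2\pi)}$.

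For the lower bound I assume $\gamma\le\sqrt{2/(d-3)}$ and proceed in two steps. First, the lower half of Wendel's inequality yields $c_d\ge \sqrt{(d-1)/(2\pi)}\,\sqrt{(d-1)/d}\ge \sqrt{(d-1)/(2\pi)}\exp(-1/(2(d-1)))$, using $\log(1-1/d)\ge -1/(d-1)$. Second, since $t\mapsto \tfrac{d-3}{2}\log(1-t^2)$ is concave on $[0,1)$ (its second derivative equals $-(d-3)(1+t^2)/(1-t^2)^2$), Hermite--Hadamard combined with Jensen's inequality applied to $\exp$ gives
$$
\int_0^\gamma (1-t^2)^{(d-3)/2}\,dt \;\ge\; \gamma\exp\!\left(\tfrac{1}{\gamma}\!\int_0^\gamma \tfrac{d-3}{2}\log(1-t^2)\,dt\right) \;\ge\; \gamma\,(1-\gamma^2)^{(d-3)/4}.
$$
For $\gamma^2\le 2/(d-3)\le 1/2$, the elementary estimate $-\log(1-x)\le x+x^2$ on $[0,1/2]$ gives $(1-\gamma^2)^{(d-3)/4}\ge \exp(-1/2 - 1/(d-3))$. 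Multiplying the two estimates yields $\P(\<U,u\>\in[0,\gamma])\ge \gamma\sqrt{(d-1)/(2\pi)}\exp(-1/2 - 1/(d-3) - 1/(2(d-1)))$, which is strictly stronger than the target $\gamma\sqrt{(d-1)/(2\pi)}\exp(-1 - 1/(4(d-1)))$ once $d$ is moderately large.

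The main obstacle will be purely bookkeeping of constants: my resulting exponent $-1/2 - O(1/d)$ beats the target's $-1 - O(1/d)$ with comfortable slack, so the qualitative route is clear. Care is needed only at the smallest dimensions (where $\gamma^2\le 1/2$ may fail or the $1/(d-3)$ slack is not yet small); those can be absorbed by a direct termwise integration of $(1-t^2)^{(d-3)/2}$ as a power series in $t^2$.
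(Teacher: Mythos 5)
Your argument is correct in substance, but it takes a genuinely different route from the paper's. The paper works through the representation $\langle U,u\rangle \stackrel{d}{=} 2B-1$ with $B$ a Beta$\bigl(\tfrac{d-1}{2},\tfrac{d-1}{2}\bigr)$ variable, evaluates the normalizing ratio $\Gamma(2\alpha)/(2^{2\alpha-1}\Gamma(\alpha)^2)$ (with $\alpha=\tfrac{d-1}{2}$) via Stirling's formula with an asserted two-sided error bracket, and lower bounds $\int_0^\gamma(1-t^2)^{\alpha-1}dt$ by freezing the integrand at the endpoint $\gamma=\sqrt{1/(\alpha-1)}$; that is where its factors $e^{-1}$ and $e^{-1/(4d-4)}$ come from. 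You instead use the explicit marginal density $c_d(1-t^2)^{(d-3)/2}$, control $c_d$ on both sides by Wendel's inequality, and lower bound the integral by Jensen plus Hermite--Hadamard. The two routes are the same in spirit (the paper's Gamma ratio equals your $c_d$ by the duplication formula), but yours buys cleaner, fully nonasymptotic bookkeeping: the Wendel bounds replace the paper's unverified Stirling error window, and your route also avoids the paper's endpoint step, which as written needs $\bigl(1-\tfrac{1}{\alpha-1}\bigr)^{\alpha-1}\ge e^{-1}$ and in fact fails (that quantity is strictly below $e^{-1}$); moreover your exponent $-\tfrac12-\tfrac{1}{d-3}-\tfrac{1}{2(d-1)}$ beats the target $-1-\tfrac{1}{4d-4}$ with room to spare once $d\ge 6$. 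Two things to make explicit when you write it up. First, the statement only makes sense for $d\ge 4$ (the threshold $\sqrt{2/(d-3)}$ requires it, and your "integrand $\le 1$" step in the upper bound needs $d\ge 3$). Second, the dimensions your main chain does not cover are exactly $d\in\{4,5,6\}$: at $d=6$ only because $\gamma^2$ may exceed $1/2$ (though the estimate $-\log(1-x)\le x+x^2$ in fact persists up to $x=2/3$, so this case is already fine), and at $d\in\{4,5\}$ because $\tfrac{1}{d-3}+\tfrac{1}{4(d-1)}>\tfrac12$, so your exponent no longer dominates the target. There the densities are elementary, $\tfrac{2}{\pi}\sqrt{1-t^2}$ and $\tfrac34(1-t^2)$, and a one-line direct computation (e.g.\ $\tfrac{1}{\gamma}\int_0^\gamma\sqrt{1-t^2}\,dt\ge \pi/4$ for $\gamma\le 1$, with the range $\gamma>1$ trivial since the probability is then $\tfrac12$) closes them, so the fallback you sketch does work; just carry it out rather than leaving it as a remark.
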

\begin{proof}
  For any fixed unit vector $u \in \sphere^{d-1}$ and $U \sim
  \uniform(\sphere^{d-1})$, we have that marginally $\<U, u\>
  \sim 2B - 1$ for $B \sim \betadist(\frac{d-1}{2}, \frac{d-1}{2})$.
  Thus we have
  \begin{align*}
    \P(\<U, u\> \in [0, \threshold])
    & = \P\left(\half \le B \le \frac{1 + \threshold}{2}\right)
  \end{align*}
  We will upper and lower bound the last probability above.
  Letting $\alpha = \frac{d-1}{2}$ for shorthand, we have
  \begin{align*}
    \P\left(\half \le B \le \frac{1 + \threshold}{2}\right)
    & = \frac{\Gamma(2\alpha)}{\Gamma(\alpha)^2}
    \int_{\half}^{\frac{1 + \threshold}{2}} t^{\alpha - 1}(1 - t)^{\alpha - 1} dt \\
    & \stackrel{(i)}{=} \frac{\Gamma(2\alpha)}{2 \Gamma(\alpha)^2}
    \int_0^\threshold \left(\frac{1 + u}{2}\right)^{\alpha - 1}
    \left(\frac{1 - u}{2}\right)^{\alpha - 1} du 
    = \frac{\Gamma(2 \alpha)}{2^{2\alpha - 1} \Gamma(\alpha)^2}
    \int_0^\threshold(1 - u^2)^{\alpha - 1} du,
  \end{align*}
  where equality~$(i)$ is the change of variables $u = 2t - 1$.
  Using Stirling's approximation, we have that
  \begin{equation*}
    \log \Gamma(2\alpha) - 2 \log \Gamma(\alpha)
    = (2 \alpha - 1) \log 2 + \half \log \frac{\alpha}{\pi}
    + \mbox{err}(\alpha),
  \end{equation*}
  where $\mbox{err}(\alpha) \in [-\frac{1}{8\alpha}, -\frac{1}{8\alpha +
      1}]$\rynote{Need to verify this error}.
  When $\threshold \leq \sqrt{1/(\alpha - 1)}$, we have 
  $$
  \int_0^\threshold (1-u^2)^{\alpha -1}du \geq \int_0^{\threshold} \left(1-\left( \sqrt{ \frac{1}{\alpha - 1} } \right)^2\right)^{\alpha-1} du \geq \threshold e^{-1}
  $$
 Otherwise, if $\threshold > \sqrt{1/(\alpha - 1)}$ we have the trivial bound $\int_0^\threshold (1-u^2)^{\alpha -1}du \geq 0$.  Furthermore, for $\threshold \geq 0$ we have $ \int_0^\threshold (1 - u^2)^{\alpha - 1}du \leq \threshold$.  Putting this all together, we have
  \begin{align*}
    \exp\left(-\frac{1}{8 \alpha}\right) \cdot
    & \left( \sqrt{\frac{\alpha}{\pi}} \right) \cdot
    \left(\threshold e^{-1}\right) \cdot 
    \1{\threshold \le (\alpha - 1)^{-\half}} \\
    & \le
    \exp(\mbox{err}(\alpha))
    \sqrt{\frac{\alpha}{\pi}} \int_0^\threshold(1 - u^2)^{\alpha - 1} du
    = \P\left(\half \le B \le \frac{1 + \threshold}{2}\right)
    \le \threshold \sqrt{\frac{\alpha}{\pi}}.
  \end{align*}
  Substituting $\alpha \mapsto \frac{d-1}{2}$ yields the desired
  upper and lower bounds on
  $\P(\<U, u\> \in [0,\threshold])$.
\end{proof}

As a consequence of Lemma~\ref{lemma:small-threshold-uniform-bound},
we have the following result.
\begin{lemma}
  \label{lemma:small-threshold-uniform-bound-v2}
  Let $\threshold \in [0, \sqrt{2 / (d - 3)}]$ and U be uniform on $\sphere^{d-1}$. Then
  \begin{equation*}
    \half - \threshold \sqrt{\frac{d - 1}{2\pi}}
    \le \P(U \in C(\threshold, u))
    \le \half - \threshold \sqrt{\frac{d-1}{2\pi}}
    e^{-\frac{4d - 3}{4d - 4}}.
  \end{equation*}
\end{lemma}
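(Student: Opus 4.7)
The plan is straightforward: observe that by the rotational invariance of $U \sim \uniform(\sphere^{d-1})$, the marginal distribution of $\<U, u\>$ is symmetric about $0$ for any fixed $u \in \sphere^{d-1}$. Consequently $\P(\<U, u\> \ge 0) = \half$ (the event $\<U, u\> = 0$ has probability zero, since the marginal is absolutely continuous when $d \ge 2$), and we may decompose
\begin{equation*}
  \P(U \in C(\threshold, u)) = \P(\<U, u\> > \threshold)
  = \P(\<U, u\> \ge 0) - \P(\<U, u\> \in [0, \threshold])
  = \half - \P(\<U, u\> \in [0, \threshold]).
\end{equation*}

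The next step is to plug in the two-sided bound from Lemma~\ref{lemma:small-threshold-uniform-bound}. Since $\threshold \le \sqrt{2/(d-3)}$ by hypothesis, the indicator in that lemma's lower bound is active, so
\begin{equation*}
  \threshold \sqrt{\frac{d-1}{2\pi}} \exp\!\left(-\frac{1}{4d-4} - 1\right)
  \le \P(\<U, u\> \in [0, \threshold])
  \le \threshold \sqrt{\frac{d-1}{2\pi}}.
\end{equation*}
Subtracting from $\half$ and reversing inequalities yields the desired sandwich, once we observe the algebraic identity $\frac{4d-3}{4d-4} = 1 + \frac{1}{4d-4}$, so that $\exp(-\frac{1}{4d-4} - 1) = \exp(-\frac{4d-3}{4d-4})$, matching the constant that appears in the claimed upper bound on $\P(U \in C(\threshold, u))$.

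Since Lemma~\ref{lemma:small-threshold-uniform-bound} is already established, there is essentially no obstacle here: the only thing to be careful about is the symmetry argument and the bookkeeping of the exponent. No further calculation is needed.
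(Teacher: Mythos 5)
Your proposal is correct and is essentially the paper's own argument: both decompose $\P(U \in C(\threshold, u)) = \half - \P(\<U, u\> \in [0, \threshold])$ by symmetry of the marginal and then invoke Lemma~\ref{lemma:small-threshold-uniform-bound}, with the exponent identity $\exp(-\tfrac{1}{4d-4} - 1) = \exp(-\tfrac{4d-3}{4d-4})$ accounting for the stated constant. No gaps.
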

\begin{proof}
We have
$$
\P\left(U \in C(\threshold, u) \right) = 1 - \P\left(\<U,u \> < \threshold\right) = 1/2 - \P\left(\<U,u \> \in [0,\threshold)\right).
$$
We then use Lemma~\ref{lemma:small-threshold-uniform-bound}
\end{proof}

\end{document}